\documentclass[a4paper]{article}
\usepackage[margin=1in]{geometry}

\usepackage{microtype}
\usepackage{graphicx}
\usepackage{subcaption}
\usepackage{booktabs} 
\usepackage{hyperref}

\usepackage{amsmath}
\usepackage{amssymb}
\usepackage{mathtools}
\usepackage{amsthm}
\theoremstyle{plain}
\newtheorem{theorem}{Theorem}[section]

\newtheorem{lemma}[theorem]{Lemma}
\newtheorem{corollary}[theorem]{Corollary}
\theoremstyle{definition}
\newtheorem{definition}[theorem]{Definition}

\theoremstyle{remark}
\newtheorem{remark}[theorem]{Remark}

\usepackage{amsmath,amssymb,amsthm}
\usepackage{bm}
\usepackage{natbib}
\usepackage{graphicx,here,comment}
\usepackage{algorithmic}
\usepackage{algorithm}
\usepackage{xcolor}
\usepackage{wrapfig}
\usepackage{footnote}
\usepackage{multirow}
\makesavenoteenv{table}  

\usepackage{mathtools}

\newcommand{\rbr}[1]{\left(#1\right)}
\newcommand{\sbr}[1]{\left[#1\right]}

\newcommand{\R}{\mathbb{R}}
\newcommand{\N}{\mathbb{N}}
\newcommand{\mF}{\mathcal{F}}

\newcommand{\mA}{\mathcal{A}}

\newcommand{\mH}{\mathcal{H}}

\newcommand{\mT}{\mathcal{T}}

\newcommand{\mN}{\mathcal{N}}
\newcommand{\mS}{\mathcal{S}}
\newcommand{\mX}{\mathcal{X}}

\newcommand{\mR}{\mathcal{R}}

\newcommand{\Ep}{\mathbb{E}}
\renewcommand{\Pr}{\mathbb{P}}

\newcommand{\mZ}{\mathcal{Z}}

\renewcommand{\hat}{\widehat}
\renewcommand{\tilde}{\widetilde}

\newcommand{\argmin}{\operatornamewithlimits{argmin}}
\newcommand{\argmax}{\operatornamewithlimits{argmax}}

\def\bK{\mathbf{K}}

\def\bI{\bm{I}}

\def\bx{\bm{x}}

\def\bz{\bm{z}}
\def\tbz{\tilde{\bm{z}}}
\def\tbx{\tilde{\bm{x}}}
\def\tf{\tilde{f}}

\def\Mnew{M_{\epsilon}^{\mathrm{new}}}
\def\Nnew{\mathcal{N}_{\epsilon}^{\mathrm{new}}}
\def\fnew{f_{\bz;\epsilon}^{\mathrm{new}}}

\newcommand{\Y}{\mathbb{Y}}
\newcommand{\mbS}{\mathbb{S}}

\usepackage{color}

\usepackage{newtxtext,newtxmath}

\newcommand{\secref}[1]{Section~\ref{#1}}
\newcommand{\appref}[1]{Appendix~\ref{#1}}

\newcommand{\defref}[1]{Definition~\ref{#1}}
\newcommand{\thmref}[1]{Theorem~\ref{#1}}
\newcommand{\lemref}[1]{Lemma~\ref{#1}}
\newcommand{\corref}[1]{Corollary~\ref{#1}}

\newcommand{\figref}[1]{Figure~\ref{#1}}
\newcommand{\tabref}[1]{Table~\ref{#1}}

\newcommand{\1}{\mbox{1}\hspace{-0.25em}\mbox{l}}

\newcommand{\sek}{k_{\mathrm{SE}}}

\newcommand{\wnew}{w_{\epsilon}^{\mathrm{new}}}
\newcommand{\Rnew}{\mR_{\bz;\epsilon}^{\mathrm{new}}}

\title{Tighter Regret Lower Bound for Gaussian Process Bandits \\ with Squared Exponential Kernel in Hypersphere}

\author{%
  Shogo Iwazaki \\
  LY Corporation \\
  Tokyo, Japan \\
  \texttt{{siwazaki@lycorp.co.jp}}
}
\date{}

\begin{document}
\maketitle

\begin{abstract}
We study an algorithm-independent, worst-case lower bound for the Gaussian process (GP) bandit problem in the frequentist setting, where the reward function is fixed and has a bounded norm in the known reproducing kernel Hilbert space (RKHS). 
Specifically, we focus on the squared exponential (SE) kernel, one of the most widely used kernel functions in GP bandits. 
One of the remaining open questions for this problem is the gap in the \emph{dimension-dependent} logarithmic factors between upper and lower bounds. 
This paper partially resolves this open question under a hyperspherical input domain. We show that any algorithm suffers $\Omega(\sqrt{T (\ln T)^{d} (\ln \ln T)^{-d}})$ cumulative regret, where $T$ and $d$ represent the total number of steps and the dimension of the hyperspherical domain, respectively. 
Regarding the simple regret, we show that any algorithm requires $\Omega(\epsilon^{-2}(\ln \frac{1}{\epsilon})^d (\ln \ln \frac{1}{\epsilon})^{-d})$ time steps to find an $\epsilon$-optimal point. We also provide the improved $O((\ln T)^{d+1}(\ln \ln T)^{-d})$ upper bound on the maximum information gain for the SE kernel.
Our results guarantee the optimality of the existing best algorithm up to \emph{dimension-independent} logarithmic factors under a hyperspherical input domain. 
\end{abstract}

\section{Introduction}
\label{sec:intro}
This paper addresses an algorithm-independent lower bound for the GP bandit problem in the frequentist setting, where the underlying unknown reward function is fixed and belongs to the known RKHS.
In existing lower-bound studies, two commonly used kernels are the SE and Mat\'ern kernels~\citep{scarlett2017lower}. For the Mat\'ern kernel, the state-of-the-art upper bound achieves the optimal regret up to dimension-independent logarithmic factors~\citep[e.g.,][]{camilleri2021high,li2022gaussian,vakili2021optimal,valko2013finite,salgia2021domain}. However, for the SE kernel, there remains a substantial gap between the upper and lower bounds. Specifically, the current best upper bound for the SE kernel shows that the cumulative and simple regrets are $O^{\ast}(\sqrt{T (\ln T)^{d}})$ and $O^{\ast}(\sqrt{(\ln T)^{d}/T})$, respectively. Here, as with \citep{scarlett2017lower}, we use the notation $O^{\ast}(\cdot)$ to hide dimension-independent logarithmic factors.
In contrast, the current best lower bounds for cumulative and simple regrets are $\Omega(\sqrt{T (\ln T)^{d/2}})$ and $\Omega(\sqrt{(\ln T)^{d/2}/T})$, respectively~\citep{cai2021on,scarlett2017lower}. Thus, there remains a \emph{dimension-dependent} $(\ln T)^{d/4}$ gap between the upper and lower bounds. 
These dimension-dependent logarithmic factors induce a non-negligible gap between the upper and lower bounds.
For example, even for moderate dimensions such as $d \leq 20$, which are commonly considered suitable for practical applications~\citep{frazier2018tutorial}, the resulting $(\ln T)^{d/4}$ gap can scale up to the fifth power of $(\ln T)$ in the worst case.
Furthermore, many researchers explore much higher-dimensional settings~\citep[e.g.,][]{eriksson2019scalable,hvarfner2024vanilla,iwazaki2025high,kandasamy2015high,wang2016bayesian}.
Therefore, closing such gaps in the logarithmic factors, which grow exponentially with $d$, remains a fundamental open problem. We partially resolve this problem by showing an $\Omega(\sqrt{T (\ln T)^d (\ln \ln T)^{-d}})$ regret lower bound for a special case where the input domain is the hypersphere $\mbS^d \coloneqq \{\bx \in \R^{d+1} \mid \|\bx\|_{2} = 1\}$.

\paragraph{Contributions.} Our contributions are as follows:
\begin{itemize}
    \item We provide an improved, algorithm-independent worst-case lower bound for the SE kernel on a hypersphere. Specifically, under a hyperspherical input domain $\mX = \mbS^d$, we show that any algorithm suffers $\Omega(\sqrt{T (\ln T)^d (\ln \ln T)^{-d}})$ cumulative regret. Furthermore, for the simple regret, we show that any algorithm requires $\Omega(\epsilon^{-2}(\ln \frac{1}{\epsilon})^d (\ln \ln \frac{1}{\epsilon})^{-d})$ time steps to find an $\epsilon$-optimal point.
    Rigorous statements are provided in Theorems~\ref{thm:improved_sr_lower_bound} and \ref{thm:improved_cr_lower_bound} in \secref{sec:lb_hs}. These results fill the $(\ln T)^{d/4}$ gap in the existing results. \tabref{tab:result} summarizes existing results and our new lower bounds. 
    \item From a technical perspective, a key contribution is the construction of a new hard function class based on Mercer's representation theorem and spherical harmonics theory. The details are provided in \secref{sec:proof_sketch}.
    \item We also prove an $O((\ln T)^{d+1} (\ln \ln T)^{-d})$ upper bound on the maximum information gain for the SE kernel under the hypersphere $\mbS^d$ or any compact input domain in $\R^d$, which provides an $O(\sqrt{(\ln \ln T)^{d}})$ improvement for many existing algorithms relying on the information gain-based analysis. 
\end{itemize}

\begin{table}[t]
    \centering
    \caption{A comparison between best known upper bounds and lower bounds for the SE kernel on the hyperspherical input domain $\mX = \mbS^d$. For the simple regret, the table below shows the upper and lower bounds on the number of time steps required for the simple regret to become smaller than $\epsilon > 0$.
    }
    \begin{tabular}{c|cc}
        & Cumulative regret & Simple regret \\ \hline
        \multirow{2}{*}{Upper bound}
        & \multirow{2}{*}{$O^{\ast}\rbr{\sqrt{T (\ln T)^d}}$}
        & \multirow{2}{*}{$O^{\ast}\Bigl(\frac{1}{\epsilon^2} \left(\ln \frac{1}{\epsilon}\right)^d \Bigr)$} \\
        & 
        &  \\ \hline
         Lower bound
        & \multirow{2}{*}{$\Omega\rbr{\sqrt{T (\ln T)^{d/2}}}$}
        & \multirow{2}{*}{ $\Omega \Bigl(\frac{1}{\epsilon^2} \left(\ln \frac{1}{\epsilon}\right)^{d/2} \Bigr)$} \\
        \citep{scarlett2017lower}
        &
        & \\ \hline
        Lower bound
        & \multirow{2}{*}{$\Omega\rbr{\sqrt{\frac{T (\ln T)^{d}}{(\ln \ln T)^d}}}$}
        & \multirow{2}{*}{$\Omega \Bigl(\frac{\left(\ln \frac{1}{\epsilon}\right)^{d}}{\epsilon^2 \rbr{\ln \ln \frac{1}{\epsilon}}^d}  \Bigr)$} \\
        (\textbf{Ours})
        &
        &
\end{tabular}
    \label{tab:result}
\end{table}

\paragraph{Limitations.} A limitation of our results is that we cannot claim the same improved lower bounds for a general compact input domain in $\R^d$. 
That said, we believe that our analysis marks an important milestone in reconsidering the existing lower bounds for the SE kernel.
In \secref{sec:dicuss}, we provide a discussion of how to extend our core idea to a compact input domain in $\R^d$.
Nevertheless, even under this limitation, we would like to note that our result is valuable in its own right, since the hyperspherical domain is often studied in existing kernel-based or neural-based bandit algorithms~\citep[e.g.,][]{iwazaki2024no,kassraie2022neural,salgia2023provably,vakili2021uniform,zhou2020neural}. Furthermore, whereas our improved lower bound is limited to the hyperspherical domain $\mbS^d$, our improved upper bound via the maximum information gain is quite general and is applicable to an arbitrary compact input domains in $\R^d$. 

\subsection{Related Works}
\label{sec:related_work}
\paragraph{GP Bandits.}
GP bandits have been extensively studied for applications with black-box reward functions, such as robotics~\citep{martinez2007active}, experimental design~\citep{lei2021bayesian}, and hyperparameter tuning~\citep{snoek2012practical}.
Although this paper focuses on the frequentist assumption in a noisy feedback setting, alternative assumptions have also been studied, such as Bayesian settings~\citep{iwazaki2025improved,Russo2014-learning,srinivas10gaussian} and the noise-free feedback setting~\citep{bull2011convergence,iwazaki2025gaussian,iwazaki2025improvedregretanalysisgaussian,vakili2022open,xu2024lower}. We next summarize the literature for regret analysis under the noisy, frequentist setting, which is the setting considered in this paper.

\paragraph{Regret Lower Bounds.}
The first worst-case regret lower bounds were provided by \citet{scarlett2017lower}. They show the lower bounds for the SE and $\nu$-Mat\'ern kernels, summarized in \tabref{tab:result}. Subsequently, several studies extend their ideas to obtain worst-case lower bounds for more advanced settings, e.g., robust settings~\citep{bogunovic2018adversarially,cai2021on}, heavy-tailed reward settings~\citep{ray2019bayesian}, and non-stationary settings~\citep{cai2025lower,iwazaki2025near}. However, as in the standard setting of \citep{scarlett2017lower}, the lower bounds for these advanced settings also exhibit gaps in the dimension-dependent logarithmic factors between the upper and lower bounds for the SE kernel.

\paragraph{Regret Upper Bounds.}
 Regarding cumulative regret, \citet{srinivas10gaussian} provide the first guarantees for the GP-based upper-confidence-bound (GP-UCB) algorithm. They show that its cumulative regret is $O(\gamma_T \sqrt{T})$, where $\gamma_T$ is a kernel-dependent complexity parameter called the \emph{maximum information gain} (MIG). \citet{chowdhury2017kernelized} show that GP-based Thompson sampling (GP-TS) also achieves $O^{\ast}(\gamma_T \sqrt{T})$ cumulative regret.
 It is known that these $O^{\ast}(\gamma_T \sqrt{T})$ cumulative regrets can be improved to $O^{\ast}(\sqrt{T\gamma_T})$ using non-adaptive sampling-based algorithms~\citep{camilleri2021high,li2022gaussian,salgia2021domain,valko2013finite}. 
By supplying explicit upper bounds on the MIG, we can obtain an explicit upper bound on the current best $O^{\ast}(\sqrt{T\gamma_T})$ regret bound.
For the SE and $\nu$-Mat\'ern kernels with $\nu > 1/2$ on a compact input domain $\mX \subset \R^d$, $\gamma_T = O((\ln T)^{d+1})$ and $\gamma_T = O(T^{\frac{d}{2\nu + d}}(\ln T)^{\frac{4\nu + d}{2\nu+d}})$ hold, respectively~\citep{iwazaki2025improved,srinivas10gaussian}\footnote{For the $\nu$-Mat\'ern kernel, \citet{vakili2021information} show tighter $O(T^{\frac{d}{2\nu + d}}(\ln T)^{\frac{2\nu}{2\nu+d}})$ upper bound on the MIG under uniform boundedness assumption of the eigenfunctions of the kernel. 
To our knowledge, the validity of this assumption for general compact input domains is unclear~\citep[e.g., Chapter 4.4 in ][]{janz2022sequential}. Therefore, we refer to a weaker but more general bound from \citep{iwazaki2025improved}}. 
Furthermore, when $\mX \subset \mbS^d$, the same upper bounds hold as for compact $\mX \subset \R^d$~\citep[Appendix B in ][]{iwazaki2025improved}. Thus, for $\mX \subset \R^d$ or $\mX \subset \mbS^d$, the current best upper bounds are $O^{\ast}(\sqrt{T (\ln T)^{d/2}})$ and $O^{\ast}\rbr{T^{\frac{\nu+d}{2\nu+d}} (\ln T)^{\frac{4\nu+d}{2\nu+d}}}$ for the SE and $\nu$-Mat\'ern kernels, respectively.
Here, for $\nu$-Mat\'ern kernel, note that the logarithmic term satisfies $(\ln T)^{\frac{4\nu+d}{2\nu+d}} \leq (\ln T)^2$ for any $d$ (and any $\nu$); therefore, the regret for the Mat\'ern kernel is already optimal up to dimension-independent logarithmic factors.
Finally, it is known that simple regret can also be analyzed in terms of the MIG, and its current best upper bound is $O^{\ast}(\sqrt{\gamma_T/T})$~\citep{vakili2021optimal}. By substituting the explicit upper bound of MIG, we can also confirm that the simple regret also suffers from $\Theta((\ln T)^{d/4})$ gap between upper and lower bounds.

\paragraph{Spherical Harmonics in GP Bandits.}
Our proof utilizes the theory of \emph{spherical harmonics}~\citep{atkinson2012spherical,efthimiou2014spherical}, which gives the explicit Mercer decomposition of a continuous kernel on the hyperspherical input domain. Several GP-based or neural-network-based bandit researches leverage spherical harmonics to obtain the upper bound on the MIG on the hypersphere~\citep{iwazaki2024no,iwazaki2025improved,kassraie2022neural,vakili2021uniform}. Specifically, our improved upper bound on the MIG in \secref{sec:mig_improve} is built on the proof of \citep{iwazaki2025improved}.
However, to our knowledge, no prior work uses spherical harmonics to establish lower bounds.

\section{Preliminaries}
\label{sec:prelim}
\subsection{Basic Problem Description}
\label{sec:basic_prob}
The GP bandit problem is formulated as a sequential decision-making problem, with a reward function $f$ lying in a known RKHS.
Let $k: \mX \times \mX \rightarrow \R$ be a positive definite kernel function, where $\mX$ is a compact subset of Euclidean space. 
Then, there exists an RKHS $\mH_k(\mX)$, whose reproducing kernel is $k$~\citep{aronszajn1950theory}. In GP bandits, the underlying reward function $f: \mX \rightarrow \R$ is assumed to belong to $\mH_k(\mX)$ and to have bounded RKHS norm $\|f\|_{\mH_k(\mX)} \leq B < \infty$. Here, we also define $\mH_k(\mX, B) \coloneqq \{f \in \mH_k(\mX) \mid \|f\|_{\mH_k(\mX)} \leq B\}$ as the hypothesis function class of the problem.
For the choice of the kernel, this paper focuses on the following SE kernel:
\begin{equation}
    \label{eq:se}
    k(\bx, \tilde{\bx}) = \exp\rbr{-\frac{\|\bx - \tilde{\bx}\|_2^2}{\theta}},
\end{equation}
where $\theta > 0$ is the lengthscale parameter.

The learner sequentially interacts with an unknown function $f$ in the RKHS. 
At each step $t$, the learner chooses the query point $\bx_t \in \mX$ based on the history up to step $t-1$. 
Then, the learner observes a noisy function value $y_t \coloneqq f(\bx_t) + \epsilon_t$, where $\epsilon_t \sim \mN(0, \sigma^2)$ is the mean-zero Gaussian noise with variance $\sigma^2 > 0$. Here, $(\epsilon_t)_{t \in \N_+}$ is assumed to be independent across $t \in \N_+$. The learner's goal is to minimize either the cumulative regret $R_T \coloneqq \sum_{t \in [T]} \rbr{\max_{\bx \in \mX} f(\bx) - f(\bx_t)}$ or the simple regret $r_T \coloneqq \max_{\bx \in \mX} f(\bx) - f(\hat{\bx}_T)$, where $[T] \coloneqq \{1, \ldots, T\}$, and $\hat{\bx}_T \in \mX$ is the estimated maximizer reported by the algorithm at the end of step $T$.
This paper studies the algorithm-independent lower bounds for these performance measures.

\subsection{Summary of the Existing Proof}
Our new proof is built on the existing proof strategy developed in \citep{cai2021on,scarlett2017lower}, which study a worst-case lower bound for the input domain $\mX = [0, 1]^d$. We briefly summarize their proof in this subsection.
Roughly speaking, their proof can be divided into the following two steps: (i) the construction of the finite “hard” function class $\mF_{\epsilon} \subset \mH_k(\mX, B)$ characterized by a positive parameter $\epsilon > 0$, and (ii) bounding the worst-case regret by “change of measure” arguments on $\mF_{\epsilon}$. 
Through these two steps, the lower bound on the time steps required to find an $\epsilon$-optimal point is quantified. Regarding the cumulative regret, the lower bound as a function of the parameter $\epsilon$ is quantified; subsequently, the desired lower bound is also obtained by selecting $\epsilon > 0$ depending on $T, \sigma^2$, and $B$ so that the final lower bound becomes as high as possible.

\subsubsection{Construction of Finite Function Class}
\label{sec:prelim_finite_f}
\citet{scarlett2017lower} first constructed the finite function class $\mF_{\epsilon}$ to show the lower bound over this class. Intuitively, to obtain a large lower bound, $\mF_{\epsilon}$ should be designed so that (i) it is hard for the learner to identify the true $f \in \mF_{\epsilon}$ from the noisy feedback, and (ii) the learner suffers from a certain amount of regret while identifying the true underlying function $f$. To do so, \citet{scarlett2017lower}
propose to construct $\mF_{\epsilon}$ by arranging shifted versions of a common function $g_{\epsilon}$, which only attains large values around $\bm{0}$. Specifically, \citet{scarlett2017lower} choose $g_{\epsilon}$ based on the inverse Fourier transform $h$ of a \emph{bump function} as follows:
\begin{equation}
    \label{eq:geps}
    g_{\epsilon}(\bx) = \frac{2\epsilon}{h(\bm{0})} h\rbr{\frac{\zeta \bx}{w_{\epsilon}}}, 
\end{equation}
where $w_{\epsilon} > 0$ is the parameter depending on $\epsilon$, and $\zeta > 0$ is some absolute constant. See Section III in \citep{scarlett2017lower} for a more detailed description of $h$ and $\zeta$. The following lemma specifies the properties of the function $g_{\epsilon}$ in Eq.~\eqref{eq:geps} and summarizes Sections~III and IV in \citep{scarlett2017lower}.

\begin{lemma}[Properties of $g_{\epsilon}$]
    \label{lem:prop_g}
    Fix $d \in \N_+$, $\epsilon, B > 0$, and let $k: \R^d \times \R^d \rightarrow \R$ be the SE kernel with lengthscale parameter $\theta > 0$. Assume that $\epsilon / B$ is sufficiently small.
    Furthermore, set $w_{\epsilon} > 0$ to $w_{\epsilon} = \Theta\rbr{\ln^{-1/2} \rbr{B/\epsilon}}$\footnote{Unless otherwise stated, in the proof of the lower bounds, the hidden constant of the order notation depends only on $d$ and $\theta$. Similarly, the words “sufficiently small” and “sufficiently large” mean that there exist lower or upper threshold constants depending only on $d$ and $\theta$. For example, “the event $A$ holds if $\epsilon / B$ is sufficiently small” means that there exists a constant $\epsilon_{d, \theta} > 0$ such that $A$ holds under $0 < \epsilon / B \leq \epsilon_{d, \theta}$.}.
    Then, $g_{\epsilon}: \R^d \rightarrow \R$ in Eq.~\eqref{eq:geps} satisfies the following properties:
    \begin{enumerate}
        \item The function $g_{\epsilon}$ attains the maximum at $\bm{0}$ and $g_{\epsilon}(\bm{0}) = 2\epsilon$. Furthermore, for all $\bx \in \R^d, |g_{\epsilon}(\bx)| \leq 2\epsilon$.
        \item The $\epsilon$-optimal region is a subset of the $L_{\infty}$-ball of radius $w_{\epsilon}/2$.
        Namely, $\{\bx \in \R^d \mid \max_{\tilde{\bx} \in \R^d} g_{\epsilon}(\tilde{\bx}) - g_{\epsilon}(\bx) \leq \epsilon\} \subset \{\bx \in \R^d \mid \|\bx\|_{\infty} \leq w_{\epsilon}/2\}$ holds.
        \item The RKHS norm satisfies $\|g_{\epsilon}\|_{\mH_k(\R^d)} \leq B$.
    \end{enumerate}
\end{lemma}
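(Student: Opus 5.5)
We recall the construction in \citet{scarlett2017lower}. Let $h_0$ be a smooth bump function supported on the unit ball, say $h_0(\bm{\omega}) = \exp\rbr{-1/(1-\|\bm{\omega}\|_2^2)}$ for $\|\bm{\omega}\|_2 < 1$ and $0$ otherwise. Take $h$ to be its inverse Fourier transform, so that $h(\bx) = \int h_0(\bm{\omega}) e^{i \bm{\omega}^\top \bx} d\bm{\omega}$, which is real and even. We verify the three properties in turn, with the RKHS norm bound being the step that fixes the scaling $w_\epsilon = \Theta(\ln^{-1/2}(B/\epsilon))$.

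For property 1, note that $h_0 \ge 0$. Hence $|h(\bx)| \le \int h_0 = h(\bm{0})$, and $h(\bm{0}) > 0$, so the maximum of $h$ is attained at $\bm{0}$. Rescaling by $2\epsilon/h(\bm{0})$ gives $g_\epsilon(\bm{0}) = 2\epsilon$ and $|g_\epsilon| \le 2\epsilon$ everywhere.

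For property 2, the $\epsilon$-optimal region of $g_\epsilon$ is $\{\bx : h(\zeta\bx/w_\epsilon) \ge h(\bm{0})/2\}$. Since $h$ is continuous, even, and decays to zero at infinity (Riemann--Lebesgue), the set $\{\bm{u} : h(\bm{u}) \ge h(\bm{0})/2\}$ is bounded, so it is contained in an $L_\infty$-ball of some radius $r_0$. Choosing the absolute constant $\zeta \ge 2 r_0$ forces $\|\zeta\bx/w_\epsilon\|_\infty \le r_0$, which gives $\|\bx\|_\infty \le w_\epsilon/2$.

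Property 3 is the main step. On $\R^d$, the RKHS norm of the SE kernel has the Fourier representation
\begin{equation*}
\|f\|^2_{\mH_k} = C_{d,\theta}\int |\hat f(\bm{\omega})|^2 / \hat k(\bm{\omega})\, d\bm{\omega}, \qquad \hat k(\bm{\omega}) \propto \exp\rbr{-\theta\|\bm{\omega}\|_2^2/4}.
\end{equation*}
The Fourier transform of $g_\epsilon$ is $\frac{2\epsilon}{h(\bm{0})}(w_\epsilon/\zeta)^d\, h_0(w_\epsilon \bm{\omega}/\zeta)$, up to normalization, and it is supported on $\|\bm{\omega}\|_2 \le \zeta/w_\epsilon$. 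On this support $1/\hat k \le \exp(\theta\zeta^2/(4w_\epsilon^2))$. After changing variables, this yields
\begin{equation*}
\|g_\epsilon\|_{\mH_k}^2 \le C\,\epsilon^2 w_\epsilon^{d}\exp\rbr{\theta\zeta^2/(4w_\epsilon^2)}.
\end{equation*}
Set $w_\epsilon = \zeta\sqrt{\theta/(4\ln(B/(c\epsilon)))}$ for a small constant $c$. Then the exponential factor equals $(B/(c\epsilon))$ raised to a power we can make at most $2$ by matching constants. The factor $w_\epsilon^d \le 1$ helps, since $w_\epsilon$ is small when $\epsilon/B$ is small. Choosing $c$ appropriately then gives $\|g_\epsilon\|_{\mH_k} \le B$, with $w_\epsilon = \Theta(\ln^{-1/2}(B/\epsilon))$ as claimed.

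The only delicate point is the constant bookkeeping: the exponent must be exactly $\ln((B/\epsilon)^2)$ up to constants absorbed in $c$, and $\epsilon/B$ must be small enough that $w_\epsilon$ lies below the threshold used. This is where the assumption that $\epsilon/B$ is sufficiently small enters.
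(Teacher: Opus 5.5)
Your proposal is correct and follows essentially the same approach as the paper. The paper does not prove this lemma itself; it imports it from Sections III--IV of Scarlett et al., and your reconstruction matches that construction: the inverse Fourier transform of a compactly supported bump, $\zeta$ chosen from the bounded superlevel set $\{h \ge h(\bm{0})/2\}$, and the Fourier form of the SE RKHS norm giving $\|g_\epsilon\|_{\mH_k}^2 \le C\epsilon^2 w_\epsilon^d \exp(\theta\zeta^2/(4w_\epsilon^2))$. Your closing remark that the exponent must be $\ln((B/\epsilon)^2)$ is slightly off: your explicit choice of $w_\epsilon$ makes the exponential factor $B/(c\epsilon)$, and any power up to $2$ works, so this does not affect correctness.
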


Based on the above $g_{\epsilon}$, let us define the following finite function class $\mF_{\epsilon}$.

\begin{definition}[Finite function class construction based on $g_{\epsilon}$]
    \label{def:finite_f}
    Let us define $M_{\epsilon} \coloneqq \lfloor 1/w_{\epsilon}\rfloor^d$.
    Furthermore, let $\mN_{\epsilon} \subset \mX$ be uniformly spaced grid points in $\mX \coloneqq [0, 1]^d$ such that $|\mN_{\epsilon}| = M_{\epsilon}$ and $\|\bz_1 - \bz_2\|_{\infty} > w_{\epsilon}$ for any two distinct points $\bz_1, \bz_2 \in \mN_{\epsilon}$. Then, we define the function class $\mF_{\epsilon} \coloneqq (f_{\bz;\epsilon})_{\bz \in \mN_{\epsilon}}$, where $f_{\bz;\epsilon}: \mX \rightarrow [-2\epsilon, 2\epsilon]$ is defined as $f_{\bz;\epsilon}(\bx) = g_{\epsilon}(\bx - \bz)$.
\end{definition}

\begin{figure}
    \centering
    \includegraphics[width=0.7\linewidth]{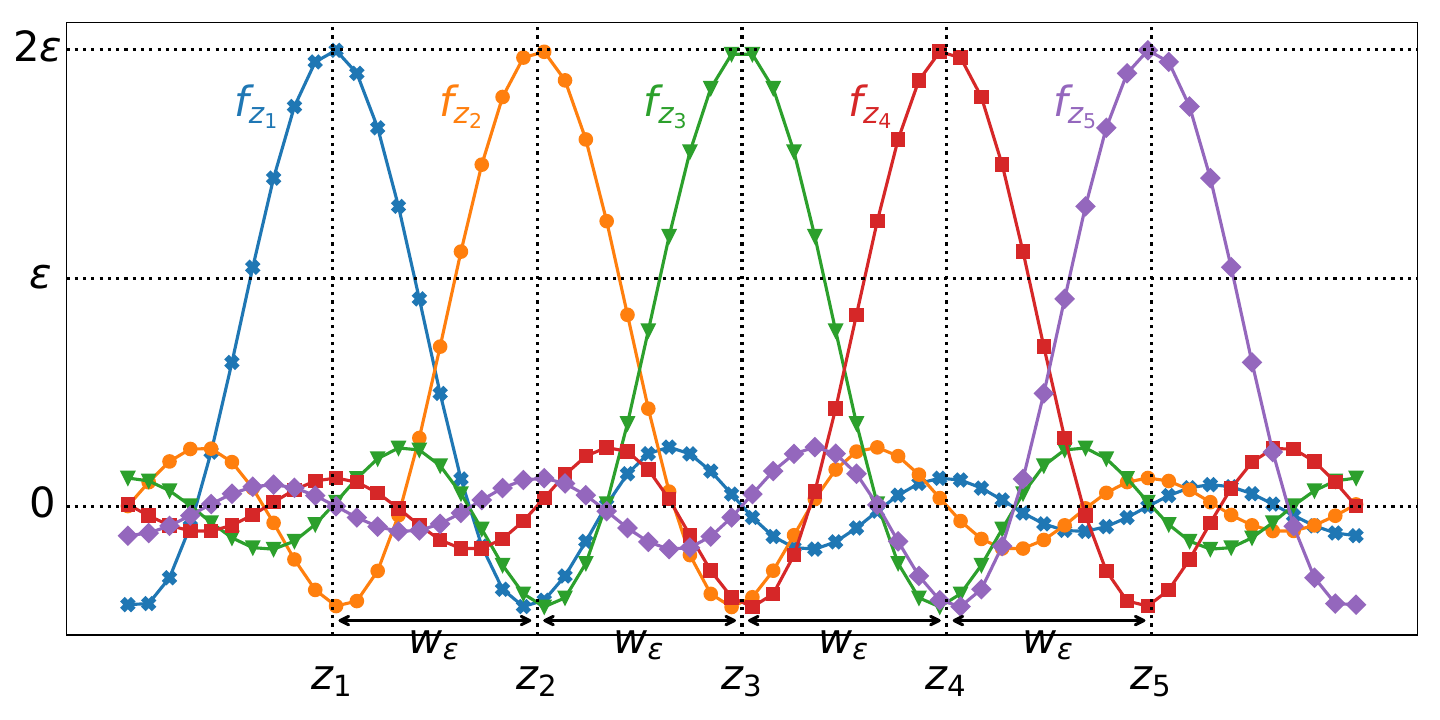}
    \caption{Illustrative example of $\mF_{\epsilon}$ in one dimension \citep[adapted from Figure~1 in][]{scarlett2017lower}.}
    \label{fig:Feps}
\end{figure}

An illustrative image of $\mF_{\epsilon}$ is provided in \figref{fig:Feps}. 
Before moving to the next subsection, we would like to note the following properties of $\mF_{\epsilon}$, which are leveraged in the proof.
\begin{enumerate}
    \item Since the input shift operation and restriction operation to $\mX = [0, 1]^d$ do not increase the RKHS norm~\citep[Part I.5 in ][]{aronszajn1950theory}, we have $\mF_{\epsilon} \subset \mH_k(\mX, B)$ for  $\epsilon$ that satisfies the conditions in \lemref{lem:prop_g}.
    \item From the definition of $\mN_{\epsilon}$ and property 2 in \lemref{lem:prop_g}, we can decompose $\mX \coloneqq [0, 1]^d$ into a collection of distinct regions $(\mR_{\bz; \epsilon})_{\bz \in \mN_{\epsilon}}$ such that the $\epsilon$-optimal regions of $f_{\bz;\epsilon}$ are distinct.
    Namely, $(\mR_{\bz; \epsilon})_{\bz \in \mN_{\epsilon}}$ is defined such that (i)$\bigsqcup_{\bz \in \mN_{\epsilon}} \mR_{\bz; \epsilon} = \mX$, (ii)$\bz \in \mR_{\bz; \epsilon}$, and (iii)$\forall \bz \in \mR_{\bz_2; \epsilon},~\|\bz_1 - \bz\|_{\infty} > w_{\epsilon}/2$ hold for any two distinct points $\bz_1, \bz_2 \in \mN_{\epsilon}$. 
\end{enumerate}

\subsubsection{Bounding Regret}
\label{sec:bound_regret}

The next step is to obtain the lower bound on the regret over $\mF_{\epsilon}$. 
Intuitively, to obtain a worst-case regret within $\mF_{\epsilon}$, it is desired to capture the difference in the algorithm's behavior on two distinct functions $f, \tilde{f} \in \mF_{\epsilon}$.
To do so, the existing works rely on “change of measure” arguments~\citep{cai2021on,scarlett2017lower}, which connect the number of query points with the difference between the two underlying measures under $f$ and $\tilde{f}$. In this paper, we follow the proof strategy in \citep{cai2021on} and leverage the following lemma.

\begin{lemma}[Relating two instances, Lemma 1 in \citep{cai2021on}]
    \label{lem:relating_two}
    Let us consider the GP bandit problem defined in \secref{sec:basic_prob}.
    Fix any $f, \tf \in \mH_k(\mX, B)$, any $\delta \in (0, 1/3)$, and any algorithm.
    Let $(\mR_j)_{j \in [M]}$ be a partition of the input space into $M$ disjoint regions, and let $\mA$ be any event depending on the history up to step $T \in \N_+$. 
    Then, if $\Pr_f\rbr{\mA} \geq 1 - \delta$ and $\Pr_{\tf}\rbr{\mA} \leq \delta$, we have
    \begin{align}
        \sum_{j \in [M]} \Ep_f[N_j(T)] \overline{D}_{f, \tf}^{(j)} \geq \ln \frac{1}{2.4\delta},
    \end{align}
    where $N_j(T) \coloneqq \sum_{t=1}^T \1\{\bx_t \in \mR_j\}$ is the number of query points within $\mR_j$ up to step $T$. Furthermore,
    \begin{equation}
        \overline{D}_{f, \tf}^{(j)} \coloneqq \sup_{\bx \in \mR_j} \mathrm{KL}\rbr{\mN(f(\bx), \sigma^2) \| \mN(\tf(\bx), \sigma^2)}
    \end{equation}
    is the maximum Kullback--Leibler (KL) divergence from samples between $f$ and $\tf$ in $\mR_j$. 
    Here, $\mN(\mu, \sigma^2)$ denotes the Gaussian measure with the mean $\mu$ and variance $\sigma^2$. 
\end{lemma}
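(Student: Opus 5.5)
This is the standard change-of-measure argument of Cai and Scarlett. I would prove it in three steps: a divergence decomposition, the data-processing inequality applied to the indicator of $\mA$, and a lower bound on the binary KL divergence.

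\textbf{Step 1: decompose the divergence.} Let $P_f$ and $P_{\tf}$ be the laws of the history $H_T = (\bx_1, y_1, \ldots, \bx_T, y_T)$ under $f$ and $\tf$ for the fixed algorithm; any internal randomness is shared and cancels. The chain rule for KL divergence applies because the conditional law of $\bx_t$ given $H_{t-1}$ is the same under both instances. It gives
\begin{equation*}
\mathrm{KL}(P_f \| P_{\tf}) = \Ep_f\sbr{\sum_{t=1}^T \mathrm{KL}\rbr{\mN(f(\bx_t),\sigma^2) \| \mN(\tf(\bx_t),\sigma^2)}}.
\end{equation*}
Group the terms by the region $\mR_j$ containing $\bx_t$, and bound each term by the supremum over that region. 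This yields $\mathrm{KL}(P_f\|P_{\tf}) \le \sum_{j\in[M]} \Ep_f[N_j(T)]\, \overline{D}^{(j)}_{f,\tf}$. The regions partition the domain, so every $\bx_t$ is counted exactly once.

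\textbf{Step 2: reduce to a binary divergence.} Apply the data-processing inequality to the measurable map $H_T \mapsto \1\{\mA\}$. This gives $\mathrm{KL}(P_f\|P_{\tf}) \ge d(p,q)$, where $p = \Pr_f(\mA) \ge 1-\delta$, $q = \Pr_{\tf}(\mA) \le \delta$, and $d(p,q) = p\ln\frac{p}{q} + (1-p)\ln\frac{1-p}{1-q}$ is the binary KL divergence.

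\textbf{Step 3: lower-bound the binary divergence.} We need $d(p,q) \ge \ln\frac{1}{2.4\delta}$. Since $\delta < 1/3 < 1/2$, the pair satisfies $q \le \delta < 1-\delta \le p$. In that regime $d(p,q)$ is nondecreasing in $p$ and nonincreasing in $q$, so it suffices to evaluate at $p = 1-\delta$, $q = \delta$. There I would use the standard inequality $d(p,q) \ge p\ln\frac{1}{q} - \ln 2$, which follows from $p\ln p + (1-p)\ln(1-p) \ge -\ln 2$ and $(1-p)\ln\frac{1}{1-q} \ge 0$. It gives
\begin{equation*}
d \ge (1-\delta)\ln\frac{1}{\delta} - \ln 2.
\end{equation*}
This is weaker than the target by the term $\delta\ln\frac{1}{\delta}$, which is at most about $0.37$. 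That loss is why the constant is $2.4$ rather than $2$: we need $\ln 1.2 \ge \delta\ln\frac{1}{\delta}$, i.e.\ $\ln 1.2 \approx 0.182$ must cover the loss.

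The main obstacle is that this generic bound does not close the gap for every $\delta < 1/3$. Near $\delta = 1/3$ the loss $\delta\ln\frac{1}{\delta} \approx 0.366$ exceeds $\ln 1.2$. So the last step needs the exact expression
\begin{equation*}
d(1-\delta,\delta) = (1-2\delta)\ln\frac{1-\delta}{\delta}.
\end{equation*}
I would show that $\phi(\delta) = (1-2\delta)\ln\frac{1-\delta}{\delta} - \ln\frac{1}{2.4\delta}$ is nonnegative on $(0,1/3)$. As $\delta \to 0$, $\phi \to \ln 2.4 > 0$. At $\delta = 1/3$, $\phi = \frac{1}{3}\ln 2 - \ln 1.25 \approx 0.008 > 0$, which is why the constant $2.4$ is tight-ish. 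In between I would check $\phi > 0$ by elementary calculus, either by verifying that $\phi$ has no interior minimum below zero or by a small numerical check.
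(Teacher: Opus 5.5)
Your argument is correct and is the standard one. The paper gives no proof of its own; it imports the lemma from Cai and Scarlett, whose proof is exactly your chain: divergence decomposition, data processing onto the indicator of $\mA$, and the known bound $d(1-\delta,\delta)\ge\ln\frac{1}{2.4\delta}$, which is where the constant $2.4$ comes from.

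The one step you leave open closes by convexity. Write $\phi(\delta)=(1-2\delta)\ln(1-\delta)+2\delta\ln\delta+\ln 2.4$. Then $\phi'(\delta)=2\ln\frac{\delta}{1-\delta}+\frac{1}{1-\delta}$ and $\phi''(\delta)=\frac{2}{\delta}+\frac{2}{1-\delta}+\frac{1}{(1-\delta)^2}>0$, so $\phi$ has a unique minimizer $\delta^\ast$. Since $\phi'(0.32)<0<\phi'(0.325)$, that minimizer lies in $(0.32,0.325)$. The tangent line at $0.32$ then gives $\phi(\delta^\ast)\ge\phi(0.32)+0.005\,\phi'(0.32)\approx 0.0074-0.0002>0$. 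This interior minimum, about $0.0073$, is below your value at $\delta=1/3$, so checking the endpoints alone would not suffice.
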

When we use the above lemma, we need to handle the maximum KL-divergence $\overline{D}_{f, \tf}^{(j)}$. The following lemma is useful for bounding that term.

\begin{lemma}[Lemma 5 in \citep{scarlett2017lower}]
    \label{lem:sup_lem}
    Let $\mF_{\epsilon}$, $\mN_{\epsilon}$, and $(\mR_{\bz; \epsilon})_{\bz \in \mN_{\epsilon}}$ be the function class, grid points, and the decomposition of $\mX$ defined in \secref{sec:prelim_finite_f}. Then, for all $\bz \in \mN_{\epsilon}$, 
    $\sum_{\tbz \in \mN_{\epsilon}} \sup_{\bx \in \mR_{\bz;\epsilon}} \rbr{f_{\tbz;\epsilon}(\bx)}^2 \leq C_d \epsilon^2$, 
where $C_d > 0$ is a constant depending on $d$\footnote{Here, from the original lemma in \citep{scarlett2017lower}, we omit two statements that are unnecessary for the proof in \citep{cai2021on}.}.
\end{lemma}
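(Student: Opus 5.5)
The statement to prove is Lemma~\ref{lem:sup_lem}: for every $\bz \in \mN_{\epsilon}$, the sum over all bumps $f_{\tbz;\epsilon}$ of their squared suprema on the single region $\mR_{\bz;\epsilon}$ is at most $C_d\epsilon^2$. The plan is to split this sum into two groups. The first group consists of the $\tbz$ close to $\bz$; each of these terms is bounded by the uniform bound $4\epsilon^2$ from property~1 of Lemma~\ref{lem:prop_g}. The second group consists of the $\tbz$ far from $\bz$; these terms are controlled by the decay of $h$ away from the origin. Because every $f_{\tbz;\epsilon}$ is a shifted copy of $g_{\epsilon}$, which is rescaled by $\zeta/w_{\epsilon}$, the whole computation reduces to a sum over the lattice $\mathbb{Z}^d$ that does not depend on $\epsilon$.

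\textbf{Step 1: reduce to lattice coordinates.} The grid $\mN_{\epsilon}$ has spacing of order $w_{\epsilon}$; I take the spacing to lie between $w_{\epsilon}$ and $2w_{\epsilon}$. Each region $\mR_{\bz;\epsilon}$ is contained in the $L_\infty$-cell of radius $O(w_{\epsilon})$ around $\bz$. Write $\tbz - \bz = w_{\epsilon}\bm{m}$, with $\bm{m}$ essentially an integer vector. Then for any $\bx \in \mR_{\bz;\epsilon}$,
\begin{equation}
    \|\zeta(\bx - \tbz)/w_{\epsilon}\|_{\infty} \geq \zeta(\|\bm{m}\|_{\infty} - c)
\end{equation}
for an absolute constant $c$. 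Using the definition \eqref{eq:geps} of $g_{\epsilon}$, this gives
\begin{equation}
    \sup_{\bx \in \mR_{\bz;\epsilon}} f_{\tbz;\epsilon}(\bx)^2 \leq \frac{4\epsilon^2}{h(\bm{0})^2}\, \sup_{\|\bm{u}\|_{\infty} \geq \zeta(\|\bm{m}\|_{\infty}-c)} h(\bm{u})^2 .
\end{equation}

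\textbf{Step 2: use the decay of $h$.} The function $h$ is the inverse Fourier transform of a smooth compactly supported bump, so it is a Schwartz function. In particular $|h(\bm{u})| \leq C_{d}(1+\|\bm{u}\|_\infty)^{-(d+1)}$. For $\|\bm{m}\|_\infty \leq 2c$ I use the bound $4\epsilon^2$ instead; there are only $O_d(1)$ such lattice points. For the remaining points, the bound from Step~1 is at most $\epsilon^2 C_d'\|\bm{m}\|_\infty^{-2(d+1)}$. The number of $\bm{m}\in\mathbb{Z}^d$ with $\|\bm{m}\|_\infty = r$ is $O(r^{d-1})$, so the tail sum is bounded by $\epsilon^2 \sum_{r} r^{d-1} r^{-2(d+1)} < \infty$. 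Adding the two groups gives the constant $C_d$, which is independent of $\epsilon$ and of $\bz$.

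\textbf{Main obstacle.} The delicate point is Step~1: I must check that the region $\mR_{\bz;\epsilon}$ really stays within $O(w_{\epsilon})$ of $\bz$, so that the point of $\mR_{\bz;\epsilon}$ closest to a distant $\tbz$ is still at least a constant fraction of $\|\tbz-\bz\|_\infty$ away. I would ensure this by defining $\mR_{\bz;\epsilon}$ as the grid cell (a Voronoi cell in $L_\infty$) around $\bz$. The spacing bounds in Definition~\ref{def:finite_f} then give the required constants.
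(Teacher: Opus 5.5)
Your proposal is correct and follows essentially the same route as the paper's argument (the hypersphere replacement Lemma~\ref{lem:sup_lem_hs}, which mirrors the original proof in \citep{scarlett2017lower}): you bound the nearby centers by $4\epsilon^2$, group the remaining centers into shells at distance $\asymp i\,w_{\epsilon}$ from $\mR_{\bz;\epsilon}$, count each shell polynomially in $i$, and let the polynomial decay of $h$ turn this into a convergent series that does not depend on $\epsilon$. The differences are cosmetic: you count lattice points on $L_\infty$-shells exactly ($O(r^{d-1})$) where the paper uses a cruder packing bound ($O(i^{d+1})$), and you obtain the bounded diameter of $\mR_{\bz;\epsilon}$ by taking grid cells, whereas the paper derives it from the maximality of the packing for its Voronoi cells.
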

The remainder of the proof applies \lemref{lem:relating_two}, using the properties of $\mF_{\epsilon}$ established in Lemmas~\ref{lem:prop_g} and \ref{lem:sup_lem}. See the proof in \citep{cai2021on} for details.

\subsubsection{Lower Bounds on the Hypersphere}
\label{sec:org_lb}
The aforementioned proof strategy in \citep{cai2021on} assumes $\mX = [0, 1]^d$. However, by extending the construction of $\mF_{\epsilon}$, $\mN_{\epsilon}$, and $M_{\epsilon}$ based on the standard packing argument, we can straightforwardly derive a domain-dependent lower bound. 
Specifically, we can follow the same proof strategy for general $\mX$ by simply replacing $M_{\epsilon}$ and $\mN_{\epsilon}$ in \defref{def:finite_f} by the $w_{\epsilon}$-packing number and the corresponding $w_{\epsilon}$-separated set of $\mX$, respectively.
Below, we specify the lower bounds for $\mX = \mbS^d$ as the corollaries of the main theorems in \citep{cai2021on}.

\begin{corollary}[Simple regret lower bound on the hypersphere, extended from \citep{cai2021on}]
    \label{cor:org_lb_sr}
    Fix any $\delta \in (0, 1/3)$, $\epsilon \in (0, 1/2)$, $B > 0$, and $T \in \N_+$. Let us consider the GP bandits problem on $\mX = \mbS^d$ with the SE kernel described in \secref{sec:basic_prob}. Suppose that $d \in \N_+$ and $\theta > 0$ are fixed constants. Furthermore, suppose that there exists an algorithm that achieves $r_T \leq \epsilon$ with probability at least $1 - \delta$ for any $f \in \mH_k(\mX, B)$.
    Then, if $\epsilon/B$ is sufficiently small, it is necessary that 
    \begin{equation}
        T = \Omega\rbr{\frac{\sigma^2}{\epsilon^2} \rbr{\ln \frac{B}{\epsilon}}^{d/2} \ln \frac{1}{\delta}}.
    \end{equation}
\end{corollary}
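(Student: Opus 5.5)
The proof follows the strategy of \citep{cai2021on} on $[0,1]^d$; the only change is that the grid $\mN_{\epsilon}$ of \defref{def:finite_f} is replaced by a maximal $w_{\epsilon}$-separated set on $\mbS^d$.

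\textbf{Construction.} Take $w_{\epsilon} = \Theta(\ln^{-1/2}(B/\epsilon))$ as in \lemref{lem:prop_g}, and apply that lemma in ambient dimension $d+1$. This gives a bump $g_{\epsilon}: \R^{d+1} \to \R$ with peak value $2\epsilon$, $\epsilon$-optimal region inside the $L_\infty$-ball of radius $w_{\epsilon}/2$, and $\|g_{\epsilon}\|_{\mH_k(\R^{d+1})} \le B$. Note that \eqref{eq:se} is the restriction to $\mbS^d$ of the SE kernel on $\R^{d+1}$. Let $\mN_{\epsilon} \subset \mbS^d$ be a maximal set with pairwise $L_\infty$-distance greater than $w_{\epsilon}$, and set $f_{\bz;\epsilon}(\bx) = g_{\epsilon}(\bx - \bz)$ restricted to $\mbS^d$.

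\textbf{Properties of the class.} Shifting and restricting do not increase the RKHS norm \citep{aronszajn1950theory}, so $f_{\bz;\epsilon} \in \mH_k(\mbS^d, B)$. Since $\mbS^d$ is a $d$-dimensional manifold, a standard volume (packing) argument gives $M_{\epsilon} = |\mN_{\epsilon}| = \Theta(w_{\epsilon}^{-d}) = \Theta((\ln(B/\epsilon))^{d/2})$. I expect this packing estimate to be the main point needing care, because the bump lives in $d+1$ dimensions while the number of bumps must scale with the intrinsic dimension $d$. The lower estimate comes from comparing the surface measure of small geodesic caps, which is of order $r^d$, with the total measure of $\mbS^d$. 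Define $\mR_{\bz;\epsilon}$ as Voronoi-type cells; these have the separation property (iii), so the $\epsilon$-optimal regions of distinct functions are disjoint.

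\textbf{Analogue of \lemref{lem:sup_lem}.} I must show $\sum_{\tbz}\sup_{\bx\in\mR_{\bz;\epsilon}} f_{\tbz;\epsilon}(\bx)^2 \le C_d\epsilon^2$. In \citep{scarlett2017lower} this uses the rapid decay of $h$, and the tail sum is bounded by counting grid points in dyadic shells. On the sphere, the number of separated points at distance about $r w_{\epsilon}$ from $\bz$ is $O(r^{d})$, which is no larger than in $\R^{d+1}$. The same decay argument therefore gives a constant depending only on $d$.

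\textbf{Change of measure.} This step is verbatim from \citep{cai2021on}. Let $\mA_{\bz}$ be the event that $\hat{\bx}_T \in \mR_{\bz;\epsilon}$. By assumption $\Pr_{f_{\bz;\epsilon}}(\mA_{\bz}) \ge 1-\delta$, while $\Pr_{f_{\tbz;\epsilon}}(\mA_{\bz}) \le \delta$ for $\tbz \ne \bz$. To handle the baseline, use the zero function $f_0 = 0$ as reference. Under $f_0$, some $\bz$ has $\Ep_0[N_{\bz}(T)] \le T/M_{\epsilon}$ and $\Pr_0(\mA_{\bz}) \le$ small; more precisely, average over $\bz$ as in \citep{cai2021on}. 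Then apply \lemref{lem:relating_two} to the pair $(f_{\bz;\epsilon}, f_0)$ or its reverse, with $\overline{D}^{(j)} = \sup_{\mR_j} f_{\bz;\epsilon}^2/(2\sigma^2)$. Summing over $\bz$ and using the analogue of \lemref{lem:sup_lem} gives
\[
M_{\epsilon}\ln\tfrac{1}{2.4\delta} \le \sum_{\bz}\sum_j \Ep_0[N_j(T)]\sup_{\mR_j} f_{\bz;\epsilon}^2/(2\sigma^2) \le \frac{C_d\epsilon^2}{2\sigma^2}T.
\]
Hence $T = \Omega\!\left(\frac{\sigma^2}{\epsilon^2} M_{\epsilon}\ln\frac1\delta\right)$, and substituting $M_{\epsilon}$ yields the claim.
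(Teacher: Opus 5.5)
Your construction, packing estimate and analogue of \lemref{lem:sup_lem} essentially match the paper's Appendix~\ref{sec:cor_proofs}. The paper uses an $\|\cdot\|_\infty$-packing, $\|\cdot\|_\infty$-Voronoi cells whose diameter is $O(w_{\epsilon})$ by maximality, shell counts $O(i^{d+1})$, and polynomial decay of $h$. The gap is in the change-of-measure step with the zero function as reference.

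To sum over $\bz$ against a single measure, you must apply \lemref{lem:relating_two} with $f=0$ and $\tf=f_{\bz;\epsilon}$. The other direction puts each term under a different $\Pr_{f_{\bz;\epsilon}}$, so the sup-lemma cannot be applied. The lemma then needs an event with $\Pr_0(\mA)\ge 1-\delta$ and $\Pr_{f_{\bz;\epsilon}}(\mA)\le\delta$. The natural choice $\mA=\mA_{\bz}^c$ requires $\Pr_0(\mA_{\bz})\le\delta$. Under $f_0$ the algorithm is constrained only by $\sum_{\bz}\Pr_0(\mA_{\bz})=1$, so this holds for all but fewer than $1/\delta$ indices, not for all of them. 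Your left-hand side is therefore really $(M_{\epsilon}-1/\delta)\ln\frac{1}{2.4\delta}$. That is vacuous once $\delta\lesssim 1/M_{\epsilon}$, which is exactly where the $\ln\frac1\delta$ factor matters, whereas the corollary needs a threshold on $\epsilon/B$ independent of $\delta$. Your aside about some $\bz$ with $\Ep_0[N_{\bz}(T)]\le T/M_{\epsilon}$ is the expected-regret argument of \citep{scarlett2017lower}, and it does not produce the product $M_{\epsilon}\ln\frac1\delta$ either.

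The paper avoids this by following \citep{cai2021on}. It builds the bumps with RKHS norm $B/3$, fixes $\bz$, and compares $f=f_{\bz;\epsilon}$ with $\tf=f_{\bz;\epsilon}+2f_{\tbz;\epsilon}$ for every $\tbz\neq\bz$, using the event $\{\hat{\bx}_T\in\mR_{\bz;\epsilon}\}$. Correctness of the algorithm on both functions gives $\Pr_f(\mA)\ge 1-\delta$ and $\Pr_{\tf}(\mA)\le\delta$ for every $\tbz$ simultaneously, and the KL terms are $2f_{\tbz;\epsilon}^2/\sigma^2$. All expectations are then under the single measure $\Pr_f$, and summing over $\tbz\neq\bz$ with the sup-lemma gives $(M_{\epsilon}-1)\ln\frac{1}{2.4\delta}\le 2C_d\epsilon^2T/\sigma^2$.

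Alternatively, you could keep $f_0$ but replace \lemref{lem:relating_two} by the data-processing bound
\[
\sum_j\Ep_0[N_j(T)]\,\overline{D}^{(j)}\ \ge\ \mathrm{kl}\bigl(\Pr_0(\mA_{\bz}^c),\Pr_{f_{\bz;\epsilon}}(\mA_{\bz}^c)\bigr).
\]
This only needs $\Pr_0(\mA_{\bz})\le 1/3$, which holds for all but at most two indices $\bz$, and it still gives $\Omega(\ln\frac1\delta)$ per index.
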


\begin{corollary}[Cumulative regret lower bound on the hypersphere, extended from \citep{cai2021on}]
    \label{cor:org_lb_cr}
    Consider the same setting as in \corref{cor:org_lb_sr}.
    Furthermore, assume $\sigma^2 (\ln (1/\delta))/B^2  = O(T)$ with sufficiently small implied constant.
    Then, for any algorithm, there exists a function $f \in \mH_k(\mX, B)$ such that the following inequality holds with probability at least $\delta$:
    \begin{equation}
     R_T = \Omega\rbr{\sqrt{T \sigma^2 \rbr{\ln \frac{B^2 T}{\sigma^2 \ln \frac{1}{\delta}}}^{d/2}}}.
    \end{equation}
\end{corollary}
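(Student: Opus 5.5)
The plan is to transplant the proof of the cumulative regret lower bound in \citep{cai2021on} to $\mX = \mbS^d$. The only change is the packing step described in \secref{sec:org_lb}, after which we choose $\epsilon$ as a function of $T$.

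First, the hard class. Let $\mN_{\epsilon} \subset \mbS^d$ be a maximal $c\,w_{\epsilon}$-separated set, for a suitable absolute constant $c$, with $w_{\epsilon} = \Theta(\ln^{-1/2}(B/\epsilon))$ as in \lemref{lem:prop_g}. Since $\mbS^d$ is a $d$-dimensional manifold, the standard volume argument gives $M_{\epsilon} = |\mN_{\epsilon}| = \Theta(w_{\epsilon}^{-d}) = \Theta((\ln(B/\epsilon))^{d/2})$. For each $\bz \in \mN_{\epsilon}$ we define $f_{\bz;\epsilon}(\bx) = g_{\epsilon}(\bx - \bz)$, where $g_\epsilon$ is the function on $\R^{d+1}$ and $\bx$ is restricted to $\mbS^d$. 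Translation and restriction do not increase the RKHS norm, so $\mF_{\epsilon} \subset \mH_k(\mbS^d, B)$. We then take the Voronoi cells of $\mN_{\epsilon}$ in $\mbS^d$ as the regions $\mR_{\bz;\epsilon}$. Property 2 of \lemref{lem:prop_g} (with $c$ chosen large enough relative to the $L_\infty$/$L_2$ comparison in $\R^{d+1}$) makes the $\epsilon$-optimal regions of distinct $f_{\bz;\epsilon}$ disjoint, and each one lies inside its own cell.

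Second, the analogue of \lemref{lem:sup_lem}. We need $\sum_{\tbz} \sup_{\bx \in \mR_{\bz;\epsilon}} f_{\tbz;\epsilon}(\bx)^2 \le C_d \epsilon^2$. I expect this to be the main technical step. The argument in \citep{scarlett2017lower} uses the fast decay of $h$ together with a shell-counting bound: the number of grid points at distance roughly $m w_{\epsilon}$ grows like $m^{d}$. On $\mbS^d$ we have the same shell count, since points of $\mN_{\epsilon}$ in a geodesic annulus of width $w_\epsilon$ number $O(m^{d-1})$ by the separation and volume comparison. Euclidean distance in $\R^{d+1}$ and geodesic distance are equivalent up to constants, so the decay bound on $h$ still applies. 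Hence the sum is bounded by $\epsilon^2 \sum_m m^{d-1}\,\mathrm{decay}(m) = O(\epsilon^2)$.

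Third, the regret argument. Following \citep{cai2021on}, for each $\bz$ let $\mA_{\bz}$ be the event that the number of queries outside $\mR_{\bz;\epsilon}$ is at most $T/2$ (or a suitable fraction). Suppose the regret claim fails. Then under $f_{\bz;\epsilon}$ the event $\mA_{\bz}$ holds with probability at least $1-\delta$, while under the zero function (or a competing $f_{\tbz;\epsilon}$) it holds with probability at most $\delta$. Applying \lemref{lem:relating_two} with KL divergences $(f-\tf)^2/(2\sigma^2)$ and summing over $\bz \in \mN_\epsilon$, the bound from the previous step gives $T\,C_d\epsilon^2/\sigma^2 \gtrsim M_{\epsilon}\ln(1/\delta)$. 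Equivalently, if $T \lesssim \sigma^2 M_\epsilon \ln(1/\delta)/\epsilon^2$, then with probability at least $\delta$ a constant fraction of the queries land where the instantaneous regret is at least $\epsilon$, which gives $R_T = \Omega(T\epsilon)$.

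Finally, we balance the parameters by choosing $\epsilon$ with $\epsilon^2 \asymp \sigma^2 (\ln(B/\epsilon))^{d/2}\ln(1/\delta)/T$. This gives $\ln(B/\epsilon) = \Theta(\ln(B^2T/(\sigma^2\ln(1/\delta))))$. The assumption $\sigma^2\ln(1/\delta)/B^2 = O(T)$ with a small constant ensures $\epsilon/B$ is small enough for \lemref{lem:prop_g} to apply. Substituting into $T\epsilon$ yields $R_T = \Omega(\sqrt{T\sigma^2(\ln\frac{B^2T}{\sigma^2\ln(1/\delta)})^{d/2}})$. The $\ln(1/\delta)$ factor inside the square root is dropped, which only weakens the bound since $\delta<1/3$.
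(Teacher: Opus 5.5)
Your hard class and your analogue of \lemref{lem:sup_lem} are fine. They match the paper's Appendix~A up to inessential choices: the paper packs with respect to $\|\cdot\|_\infty$, uses $\|\cdot\|_\infty$-Voronoi cells, and uses a cruder $i^{d+1}$ shell count, which is harmless against the super-polynomial decay of $h$.

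The gap is in your change-of-measure step. Neither alternative instance you name lets \lemref{lem:relating_two} produce the factor $M_\epsilon$.
\begin{itemize}
\item With the zero function, the claim that $\mA_{\bz}$ has probability at most $\delta$ is unjustified. Every query has zero regret under $f\equiv 0$, so the regret assumption says nothing about where the algorithm queries, and it may concentrate on $\mR_{\bz;\epsilon}$.
\item With a competing $f_{\tbz;\epsilon}$, the two event probabilities are right, but the divergence is $(f_{\bz;\epsilon}-f_{\tbz;\epsilon})^2/(2\sigma^2)$. This is of order $\epsilon^2/\sigma^2$ on $\mR_{\bz;\epsilon}$ itself, because the peak at $\bz$ is present under one instance and absent under the other. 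That region is exactly where at least $(1-\delta)T/2$ of the expected queries under $f_{\bz;\epsilon}$ land. So after summing over the $M_\epsilon-1$ alternatives, the $\mR_{\bz;\epsilon}$ terms alone contribute order $M_\epsilon T\epsilon^2/\sigma^2$. You obtain only $T\epsilon^2\gtrsim\sigma^2\ln(1/\delta)$, and the $M_\epsilon$ factor is lost.
\item Summing over the reference index $\bz$ instead fails for the same reason. The expectations are then taken under different measures, so you can no longer use $\sum_j \Ep_f[N_j(T)]=T$ for a single $f$.
\end{itemize}

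The missing idea is the one the paper takes from \citep{cai2021on}; see its proof of \thmref{thm:improved_cr_lower_bound}. Build the bumps with norm budget $B/3$, fix the reference $f=f_{\bz;\epsilon}$, and use the alternatives $\tf=f_{\bz;\epsilon}+2f_{\tbz;\epsilon}$, $\tbz\neq\bz$, which lie in $\mH_k(\mX,B)$. These agree with $f$ near $\bz$ but move the optimum to $\tbz$. Hence the single event $\mA=\{\text{at least } T/2 \text{ queries in } \mR_{\bz;\epsilon}\}$ has $\Pr_f(\mA)\ge 1-\delta$ and $\Pr_{\tf}(\mA)\le\delta$. Meanwhile $f-\tf=-2f_{\tbz;\epsilon}$ no longer involves the peak at $\bz$. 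Summing \lemref{lem:relating_two} over $\tbz\neq\bz$ under the one measure $\Pr_f$ and applying your Step 2 gives $2C_d T\epsilon^2/\sigma^2\ge (M_\epsilon-1)\ln\frac{1}{2.4\delta}$. After that, your balancing of $\epsilon$ goes through.

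A zero-reference argument in the style of \citep{scarlett2017lower} can also be repaired, but not as you wrote it. You would need to reverse the roles: reference $f\equiv 0$, alternatives $f_{\bz;\epsilon}$, events $\mA_{\bz}^{c}$. You would also need a pigeonhole step showing $\Pr_0(\mA_{\bz})\le\delta$ for all but $O(1/\delta)$ indices, which yields $M_\epsilon-O(1/\delta)$ in place of $M_\epsilon-1$. None of this is in your sketch.
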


Proofs are given in Appendix~\ref{sec:cor_proofs} for completeness. The aim of this paper is to improve the lower bounds stated in Corollaries~\ref{cor:org_lb_sr} and \ref{cor:org_lb_cr}. 

\subsection{Summary of Mercer's Representation Theorem}
Our proof leverages Mercer's representation theorem to construct a new hard function class. This provides a feature representation of the RKHS based on the eigensystem of the kernel integral operator. Given a positive definite kernel $k: \mX \times \mX \rightarrow \R$ and a finite Borel measure $\mu$, let $\mT_k: L_2(\mX) \rightarrow L_2(\mX)$ denote the integral operator of $k$, which is defined as $(\mT_k f)(\cdot) = \int_{\mX} f(\bx) k(\bx, \cdot) \mu(\text{d}\bx)$. Here, $L_2(\mX) \coloneqq \{f: \mX \rightarrow \R \mid \int_{\mX} f(\bx)^2\mu(\text{d}\bx) < \infty\}$ is the set of square-integrable functions with respect to $\mu$.
Let $(\phi_i)_{i \in \N}$ and $(\lambda_i)_{i \in \N}$ be orthonormal eigenfunctions and corresponding eigenvalues of $\mT_k$. That is, $(\phi_i)_{i \in \N}$ and $(\lambda_i)_{i \in \N}$ satisfy $\int_{\mX} \phi_i(\bx) \phi_j(\bx) \mu(\text{d}\bx) = \1\{i = j\}$ and $\mT_k \phi_i = \lambda_i \phi_i$. 
Then, Mercer's representation theorem provides an explicit representation of the RKHS and its norm.

\begin{theorem}[Mercer representation, e.g., Theorem 4.2 in \citep{kanagawa2018gaussian}]
    \label{thm:mercer}
    Let $\mX$ be a compact metric space, $k$ be a continuous positive definite kernel, and $\mu$ be a finite Borel measure whose support is $\mX$. Then, $\mH_k(\mX)$ is represented as
    \begin{align*}
        \mH_k(\mX) &= \Bigl\{f \coloneqq \sum_{i \in \N} \alpha_i \sqrt{\lambda_i} \phi_i \Bigm| \|f\|_{\mH_k(\mX)}^2 \coloneqq \sum_{i \in \N} \alpha_i^2 < \infty\Bigr\}.
    \end{align*}
\end{theorem}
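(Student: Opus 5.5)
The plan is to build the candidate space from the eigensystem of $\mT_k$, check that it is a Hilbert space of functions on $\mX$ with reproducing kernel $k$, and then invoke uniqueness of the RKHS (Moore--Aronszajn, \citep{aronszajn1950theory}) to identify it with $\mH_k(\mX)$.

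\textbf{Step 1: spectral facts.} Since $\mX$ is compact, $k$ is continuous and $\mu$ is finite, $\mT_k$ is a compact, self-adjoint, positive operator on $L_2(\mX)$. The spectral theorem then gives the orthonormal system $(\phi_i)$ with eigenvalues $\lambda_i \geq 0$, $\lambda_i \to 0$. I will index only the $i$ with $\lambda_i > 0$, and I read the statement with that convention. For such $i$, the identity $\phi_i = \lambda_i^{-1}\mT_k\phi_i$ defines a continuous representative of $\phi_i$, and I fix this representative. Because $\operatorname{supp}\mu = \mX$, two continuous functions that agree $\mu$-a.e.\ agree everywhere, so this choice is canonical.

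\textbf{Step 2: Mercer's theorem.} Next I prove the classical expansion
\[
k(\bx,\tilde{\bx}) = \sum_i \lambda_i \phi_i(\bx)\phi_i(\tilde{\bx}),
\]
with absolute and uniform convergence on $\mX\times\mX$. The route is standard. The remainder kernel $k - \sum_{i\le n}\lambda_i\phi_i\otimes\phi_i$ is a continuous positive semidefinite kernel, and full support makes its diagonal nonnegative. Hence $\sum_i \lambda_i\phi_i(\bx)^2 \le k(\bx,\bx)$. Cauchy--Schwarz then gives pointwise absolute convergence, and Dini's theorem applied on the diagonal upgrades this to uniform convergence. I expect this to be the main obstacle, since positivity of the diagonal and the identification of the limit with $k$ both rely on the support assumption and on the continuity of the $\phi_i$. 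If a self-contained argument becomes too long, I will cite it as the standard Mercer theorem.

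\textbf{Step 3: the candidate space.} For $\alpha\in\ell_2$, Cauchy--Schwarz gives
\[
\Bigl|\sum_i \alpha_i\sqrt{\lambda_i}\phi_i(\bx)\Bigr| \le \|\alpha\|_{2}\, k(\bx,\bx)^{1/2}.
\]
So $f_\alpha \coloneqq \sum_i \alpha_i\sqrt{\lambda_i}\phi_i$ converges pointwise (indeed uniformly) to a continuous function.

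The map $\alpha\mapsto f_\alpha$ is injective. If $f_\alpha = 0$, then taking the $L_2(\mu)$ inner product with $\phi_j$ gives $\alpha_j\sqrt{\lambda_j} = 0$. The interchange of sum and integral is justified by the $L_2$ convergence of the series. Since $\lambda_j > 0$, this forces $\alpha_j = 0$. I therefore define $\mH \coloneqq \{f_\alpha : \alpha\in\ell_2\}$ with $\langle f_\alpha, f_\beta\rangle \coloneqq \sum_i\alpha_i\beta_i$. This makes $\mH$ isometric to $\ell_2$, hence a Hilbert space, and the bound above shows that point evaluations are continuous on $\mH$.

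\textbf{Step 4: reproducing property and conclusion.} By Step 2, $k(\cdot,\bx) = f_{\alpha^{(\bx)}}$ with $\alpha^{(\bx)}_i = \sqrt{\lambda_i}\phi_i(\bx)$, and this coefficient vector lies in $\ell_2$ by the diagonal bound. Then
\[
\langle f_\alpha, k(\cdot,\bx)\rangle = \sum_i \alpha_i\sqrt{\lambda_i}\phi_i(\bx) = f_\alpha(\bx),
\]
so $k$ is a reproducing kernel for $\mH$. The RKHS of a positive definite kernel is unique, so $\mH = \mH_k(\mX)$ with equal norms, which is exactly the claimed representation.
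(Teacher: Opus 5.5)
Your proposal is correct. The paper gives no proof of this statement and simply imports it from \citep{kanagawa2018gaussian}, and your argument (Mercer expansion, the isometric copy $\alpha\mapsto\sum_i\alpha_i\sqrt{\lambda_i}\phi_i$ of $\ell_2$, the reproducing property, uniqueness of the RKHS) is the standard proof of that cited result; your restriction to indices with $\lambda_i>0$ is also the right reading of the statement.

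The one step you defer is the pointwise identity $\sum_i\lambda_i\phi_i(\bx)\phi_i(\tbx)=k(\bx,\tbx)$. This is all Step~4 needs; the Dini upgrade to uniform convergence is never used afterwards, and as written it presupposes this identity. It closes directly. For fixed $\bx$, the difference $k(\bx,\cdot)-\sum_i\lambda_i\phi_i(\bx)\phi_i(\cdot)$ is continuous in $\tbx$ by your diagonal bound. It is $L_2(\mu)$-orthogonal to every $\phi_j$, and also to every $g\in\ker\mT_k$: for such $g$, $\int k(\bx,\tbx)g(\tbx)\mu(\mathrm{d}\tbx)=(\mT_k g)(\bx)$ is a continuous, $\mu$-a.e.\ zero function, hence identically zero, while $\phi_i\perp\ker\mT_k$. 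So the difference vanishes $\mu$-a.e., and therefore everywhere by the full-support assumption.
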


Our proof constructs a hard function based on the Mercer representation for the SE kernel $k = \sek$ on the hypersphere $\mbS^d$.
Although the explicit form of the eigenfunctions $(\phi_i)$ are hard to obtain for a general compact input domain $\mX$, the hypersphere $\mbS^{d}$ is a notable exception where the eigensystems can be specified explicitly.

\subsubsection{Spherical Harmonics and Mercer Decomposition on the Hypersphere}
\label{sec:basic_sh}
When $\mX = \mbS^{d}$, it is known that the eigensystem of the integral operator associated with the SE kernel is specified by special polynomials on $\mbS^{d}$, called \emph{spherical harmonics}~\citep{minh2006mercer}. In this subsection, we briefly summarize the known results about spherical harmonics. For the readers who are not familiar with the contents of this subsection, we refer to \citep{atkinson2012spherical,efthimiou2014spherical} as the basic textbooks of spherical harmonics. We also refer to, e.g., \citep{vakili2021uniform} and Appendix~B.2 in \citep{iwazaki2025improved} as the references that use spherical harmonics in the context of GP bandits.

Let us consider a polynomial $p(\cdot): \R^{d+1} \rightarrow \R$, i.e., the function $p(\cdot)$ in $\R^{d+1}$ is represented as a form $p((x_1, \ldots, x_{d+1})^{\top}) = \sum_{\bm{\alpha} \in \mA} c_{\bm{\alpha}} x_1^{\alpha_1} x_2^{\alpha_2} \ldots x_{d}^{\alpha_{d}} x_{d+1}^{\alpha_{d+1}}$, $\mA \coloneqq \{\bm{\alpha} \coloneqq (\alpha_1, \ldots, \alpha_{d+1})^{\top} ~\mid~ \bm{\alpha} \in \N^{d+1}\}$. Here, $(c_{\bm{\alpha}})_{\bm{\alpha} \in \mA}$ are some coefficients, where $c_{\bm{\alpha}} \in \R$ for any $\bm{\alpha} \in \mA$.
We call a polynomial $p(\cdot)$ a \emph{homogeneous} polynomial of degree $n$ if the polynomial $p(\cdot)$ is of the form $p((x_1, \ldots, x_{d+1})^{\top}) = \sum_{\bm{\alpha}; \sum_{i=1}^{d+1} \alpha_i = n} c_{\bm{\alpha}} x_1^{\alpha_1} x_2^{\alpha_2} \ldots x_{d}^{\alpha_{d}} x_{d+1}^{\alpha_{d+1}}$. Furthermore, we call a polynomial $p(\cdot)$ \emph{harmonic} if $\Delta p(\bx) \coloneqq \sum_{i=1}^{d+1} \frac{\partial^2}{\partial x_i^2} p(\bx) = 0$ for all $\bx \in \R^{d+1}$.
Below, we formally define the spherical harmonics.

\begin{definition}[Spherical harmonics, e.g., Chapter 2.1.3 in \citep{atkinson2012spherical} or Chapter 4.2 in \citep{efthimiou2014spherical}]
    Let $\Y_n^{d+1}(\R^{d+1})$ be the space that consists of all homogeneous polynomials of degree $n$ in $\R^{d+1}$ that are also harmonic. 
    Furthermore, let us define $\Y_{n}^{d+1} \coloneqq \Y_n^{d+1}(\R^{d+1}) |_{\mbS^{d}}$, where $\Y_n^{d+1}(\R^{d+1}) |_{\mbS^{d}}$ is the space of all the functions of $\Y_{n}^{d+1}(\R^{d+1})$ whose input domain is restricted to $\mbS^{d}$. Then, any element $p: \mbS^{d} \rightarrow \R$ in $\Y_{n}^{d+1}$ is called a spherical harmonics.
\end{definition}
Below, we summarize the basic properties of spherical harmonics used in the main text:
\begin{itemize}
    \item \textbf{Orthogonality and Dimension}~(Theorem~4.6 in \citep{efthimiou2014spherical} or Chapter~2.1.3 in \citep{atkinson2012spherical}). Let $\mu_{\mbS^{d}}$ be the induced Lebesgue measure on $\mbS^{d}$\footnote{Namely, $\mu_{\mbS^{d}}$ is the scaled version of the uniform probability measure $u_{\mbS^{d}}$ on $\mbS^d$ such that $u_{\mbS^{d}}(\cdot) = \mu_{\mbS^{d}}(\mbS^d)^{-1} \mu_{\mbS^{d}}(\cdot)$.}. Then, for any $Y \in \Y_{n}^{d+1}$ and $\tilde{Y} \in \Y_{m}^{d+1}$ such that $n \neq m$, we have $\int_{\mbS^{d}} Y(\bx) \tilde{Y}(\bx) \mu_{\mbS^{d}}(\text{d}\bx)= 0$.
    Furthermore, the dimension of $\Y_{n}^{d+1}$ is $N_{n,d+1} \coloneqq \frac{(2n + d -1)(n+d-2)!}{n! (d-1)!}$.
    \item \textbf{Addition Theorem}~(e.g., Theorem 2.9 in \citep{atkinson2012spherical}). Let $(Y_{n,j})_{j \in [N_{n,d+1}]}$ be an orthonormal basis of $\Y_{n}^{d+1}$, i.e., $\forall j,k \in [N_{n, d+1}], \int_{\mbS^{d}} Y_{n,j}(\bx) Y_{n,k}(\bx) \mu_{\mbS^{d}}(\text{d}\bx) = \1\{j = k\}$.
    Then, for any $\bx, \tilde{\bx} \in \mbS^{d}$, the following equality holds:
    \begin{equation}
        \label{eq:addition}
        \sum_{j=1}^{N_{n,d+1}} Y_{n,j}(\bx) Y_{n,j}(\tilde{\bx}) = \frac{N_{n,d+1}}{|\mbS^{d}|} P_{n,d+1}(\bx^{\top} \tilde{\bx}),
    \end{equation}
    where $|\mbS^{d}| \coloneqq 2\pi^{(d+1)/2}/\Gamma((d+1)/2)$ is the surface area of $\mbS^{d}$, and $P_{n,d+1}: [-1, 1] \rightarrow \R$ is a Legendre polynomial defined as follows\footnote{As described in \citep{atkinson2012spherical,efthimiou2014spherical}, the term “Legendre polynomial” for $P_{n,d+1}(\cdot)$ is specific in spherical harmonics literature. In mathematical fields, $P_{n,3}(\cdot)$ is only referred to as Legendre polynomial.}:
    \begin{equation*}
        P_{n,d+1}(t) = n! \Gamma\rbr{\frac{d}{2}} \sum_{k=0}^{\lfloor n/2\rfloor} \frac{(-1)^k (1-t^2)^k t^{n-2k}}{4^k k! (n-2k)! \Gamma\rbr{k + \frac{d}{2}}}.
    \end{equation*}
    Specifically, $P_{n,2}(t) = \cos(n \arccos t)$ for $d=1$.
\end{itemize}

In addition to the above properties, the most important known result for this paper is the fact that the spherical harmonics are eigenfunctions for a continuous kernel in $\mbS^{d} \times \mbS^{d}$, including the SE kernel. Below, we formally describe the Mercer decomposition of the SE kernel on the hypersphere.

\begin{lemma}[Theorem 2 in \citep{minh2006mercer}]
    \label{lem:mercer_hs}
    Fix $d \in \N_+$ and $n \in \N$.
    Let $(Y_{n,j})_{j \in [N_{n, d+1}]}$ be an orthonormal basis of $\Y_{n}^{d+1}$ (i.e., $\forall j, k,~\int_{\mbS^d} Y_{n,j}(\bx) Y_{n, k}(\bx) \mu_{\mbS^d}(\mathrm{d}\bx) = \1\{j = k\}$). 
    Then, $Y_{n,j}$ is an eigenfunction of the integral operator of the SE kernel on $\mbS^d$. Furthermore, the corresponding eigenvalue $\lambda_n$ satisfies
    \begin{equation}
        \label{eq:lb_lam}
        \lambda_n \geq \underline{\lambda}_n \coloneqq \rbr{\frac{2e}{\theta}}^n \frac{C_{d,\theta}}{(2n + d-1)^{n + \frac{d}{2}}},
    \end{equation}
    where $C_{d,\theta} > 0$ is the constant depending only on $d$ and $\theta$. Here, each eigenvalue $\lambda_n$ has multiplicity $N_{n, d+1}$.
\end{lemma}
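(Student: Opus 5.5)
The plan is to exploit the fact that the SE kernel restricted to $\mbS^d$ is a \emph{zonal} (dot-product) kernel, and then to apply the Funk--Hecke formula. For $\bx, \tbx \in \mbS^d$ we have $\|\bx - \tbx\|_2^2 = 2 - 2\bx^{\top}\tbx$. Hence
\[
k(\bx,\tbx) = \kappa(\bx^{\top}\tbx), \qquad \kappa(t) \coloneqq e^{-2/\theta}\exp(2t/\theta), \quad t \in [-1,1].
\]

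\textbf{Eigenfunctions and multiplicity.} The Funk--Hecke formula (e.g., Theorem 2.22 in \citep{atkinson2012spherical}) states that for any continuous $\kappa$ on $[-1,1]$ and any $Y \in \Y_n^{d+1}$,
\[
\int_{\mbS^d} \kappa(\bx^{\top}\tbx)\, Y(\tbx)\, \mu_{\mbS^d}(\mathrm{d}\tbx) = \lambda_n Y(\bx),
\]
where
\[
\lambda_n = |\mbS^{d-1}| \int_{-1}^{1} \kappa(t) P_{n,d+1}(t) (1-t^2)^{\frac{d-2}{2}}\, \mathrm{d}t .
\]
The value $\lambda_n$ depends only on $n$, not on the particular $Y$. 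Therefore every $Y_{n,j}$ is an eigenfunction of the integral operator with eigenvalue $\lambda_n$, and this eigenvalue has multiplicity at least $N_{n,d+1} = \dim \Y_n^{d+1}$. The multiplicity is exactly $N_{n,d+1}$ because $\bigoplus_n \Y_n^{d+1}$ is dense in $L_2(\mbS^d)$ and the spaces $\Y_n^{d+1}$ are mutually orthogonal. If some $\lambda_n$ coincide for different $n$, the statement is still to be read per-index $n$.

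\textbf{Lower bound on $\lambda_n$.} First expand $\kappa(t) = e^{-2/\theta}\sum_{m\ge0} \frac{(2/\theta)^m}{m!} t^m$ and exchange the sum and the integral; this is justified by uniform convergence on $[-1,1]$. Next, use the Rodrigues representation
\[
P_{n,d+1}(t)(1-t^2)^{\frac{d-2}{2}} = c_{n,d}\, \frac{\mathrm{d}^n}{\mathrm{d}t^n}(1-t^2)^{n+\frac{d-2}{2}}, \qquad c_{n,d} \coloneqq \frac{(-1)^n\,\Gamma(d/2)}{2^n\,\Gamma(n+d/2)} .
\]
Integrating by parts $n$ times shows that
\[
I_m \coloneqq \int_{-1}^1 t^m P_{n,d+1}(t)(1-t^2)^{\frac{d-2}{2}}\,\mathrm{d}t \ge 0 \quad \text{for every } m,
\]
with $I_m = 0$ unless $m \ge n$ and $m-n$ is even. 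Consequently all terms in the expansion are nonnegative, and I keep only the $m=n$ term:
\[
\lambda_n \ge |\mbS^{d-1}|\, e^{-2/\theta}\, \frac{(2/\theta)^n}{n!}\, I_n .
\]
Integration by parts then gives $I_n = \frac{n!\,\Gamma(d/2)}{2^n\Gamma(n+d/2)} \int_{-1}^1 (1-t^2)^{n+\frac{d-2}{2}}\mathrm{d}t$. The remaining integral is a Beta integral, equal to $\sqrt{\pi}\,\Gamma(n+d/2)/\Gamma(n+(d+1)/2)$. Putting these together,
\[
\lambda_n \ge C'_{d,\theta}\, \frac{\theta^{-n}}{\Gamma\!\left(n+\frac{d+1}{2}\right)} .
\]

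\textbf{Stirling step.} It remains to compare this with $\underline{\lambda}_n$. Let $x = n + \frac{d-1}{2}$, so that $\Gamma(n+\frac{d+1}{2}) = \Gamma(x+1)$. Stirling's upper bound gives $\Gamma(x+1) \le C\, x^{x+1/2} e^{-x}$ for $x \ge 1/2$, and $x = (2n+d-1)/2$. Therefore
\[
\Gamma(x+1) \le C \left(\frac{2n+d-1}{2e}\right)^{n+d/2} e^{(d-1)/2}\,\cdots,
\]
where all $n$-independent factors are absorbed into constants. Substituting this into the bound on $\lambda_n$ yields $\lambda_n \ge (2e/\theta)^n C_{d,\theta}/(2n+d-1)^{n+d/2}$, where the factor $(2e)^{d/2}$ is absorbed into $C_{d,\theta}$.

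\textbf{Main obstacle.} I expect the main difficulty to be the bookkeeping in the Rodrigues/Funk--Hecke step: matching the normalization of $P_{n,d+1}$ used in the paper, with $P_{n,d+1}(1)=1$, to the Rodrigues constant, and verifying the sign claim $I_m \ge 0$. For the sign claim, I would argue as follows. After $n$ integrations by parts, the integrand becomes $\frac{m!}{(m-n)!}\, t^{m-n}(1-t^2)^{n+\frac{d-2}{2}}$, multiplied by $(-1)^n c_{n,d} > 0$. This integrand is nonnegative when $m-n$ is even, and it integrates to zero by oddness when $m-n$ is odd.
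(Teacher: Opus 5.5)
Your proof is correct. I checked the constants: the Rodrigues constant $c_{n,d}$ is compatible with the normalization $P_{n,d+1}(1)=1$ (it reduces to the Chebyshev and classical Legendre formulas for $d=1,2$). The boundary terms vanish because the exponent $n+\frac{d-2}{2}$ exceeds every derivative order $j\le n-1$ by at least $d/2>0$. The Beta integral then gives exactly $\lambda_n \ge C'_{d,\theta}\,\theta^{-n}/\Gamma(n+\frac{d+1}{2})$.

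The paper does not prove this lemma; it imports it from \citet{minh2006mercer}. As far as I recall, their argument uses the same Funk--Hecke reduction but evaluates the one-dimensional integral in closed form. This makes $\lambda_n$ a constant times $\theta^{(d-1)/2}e^{-2/\theta}$ times the modified Bessel function of order $n+\frac{d-1}{2}$ at $2/\theta$. That Bessel function is then bounded above and below by its leading series term, up to $n$-independent factors, before applying Stirling.

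Your Taylor/Rodrigues computation amounts to a from-scratch derivation of that Bessel series. The surviving moments $m=n+2k$ give its $k$-th term, nonnegativity of your moments is positivity of the series, and keeping only $m=n$ is keeping the leading term. The closed form additionally gives the matching upper bound of the same order, which the paper uses later as Eq.~\eqref{eq:lam_ub} in the proof of Theorem~\ref{thm:mig}. Your route yields that too if you sum the remaining terms, but it is not needed for this lemma.

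Two minor points. First, your Stirling bound $\Gamma(x+1)\le Cx^{x+1/2}e^{-x}$ needs $x\ge 1/2$, which fails only for $(d,n)=(1,0)$. In that case the stated $\underline{\lambda}_0$ has a vanishing denominator, so the statement must be read for $2n+d-1>0$ anyway. Second, the leftover factor in your Stirling display is $e^{1/2}$ rather than $e^{(d-1)/2}$, which is harmless.

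Your per-index reading of the multiplicity claim is the right one: exact multiplicity would require the $\lambda_n$ to be pairwise distinct, and the paper only uses the per-index reading.
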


In addition, by combining Mercer's representation theorem with the above lemma, the RKHS norm of a function $\tilde{f}$ of the form $\tilde{f}(\bx) \coloneqq \sum_{n \in \N} \sum_{j =1}^{N_{n,d+1}} \alpha_{n, j} \sqrt{\lambda_n} Y_{n,j}(\bx)$ satisfies $\|\tilde{f}\|_{\mH_{k}(\mbS^{d})} = \sqrt{\sum_{n \in \N} \sum_{j=1}^{N_{n,d+1}} \alpha_{n,j}^2}$, where, for any $n \in \N$, $(Y_{n,j})_{j \in [N_{n,d+1}]}$ is an orthonormal basis of $\Y_{n}^{d+1}$. We use this explicit representation of the RKHS norm in our proof.

\section{Improved Lower Bounds for SE kernel on Hypersphere}
\label{sec:lb_hs}

The following theorems provide the formal statements of our improved lower bounds.
The proofs are given in \appref{sec:proof_main_theorem}.
\begin{theorem}[Improved simple regret lower bound for the SE kernel on the hypersphere]
    \label{thm:improved_sr_lower_bound}
    Fix any $\delta \in (0, 1/3)$, $\epsilon \in (0, 1/2)$, $B > 0$, and $T \in \N_+$. Consider the GP bandit problem on $\mX = \mbS^d$ with the SE kernel described in \secref{sec:basic_prob}.
    Suppose that $d \in \N_+$ and $\theta > 0$ are fixed constants. Furthermore, suppose that there exists an algorithm that achieves $r_T \leq \epsilon$ with probability at least $1 - \delta$ for any $f \in \mH_k(\mX, B)$.
    Then, if $\epsilon/B$ is sufficiently small, it is necessary that 
    \begin{equation}
        T = \Omega\rbr{\frac{\sigma^2}{\epsilon^2} \rbr{\ln \frac{B}{\epsilon}}^{d} \rbr{\ln \ln \frac{B}{\epsilon}}^{-d}\ln \frac{1}{\delta}}.
    \end{equation}
\end{theorem}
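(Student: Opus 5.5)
The plan is to keep the two-step proof of \citet{cai2021on} (reproduced for $\mbS^d$ in \corref{cor:org_lb_sr}) and change only the hard function class. The bump $g_\epsilon$ of \lemref{lem:prop_g} has width $w_\epsilon=\Theta(\ln^{-1/2}(B/\epsilon))$, which gives only $M_\epsilon\asymp w_\epsilon^{-d}=(\ln\frac{B}{\epsilon})^{d/2}$ centres. Instead I would build a bump on $\mbS^d$ directly from spherical harmonics, using the Mercer representation (\thmref{thm:mercer} with \lemref{lem:mercer_hs}). The target is width $\asymp 1/n$ with
\[
n\asymp \ln\tfrac{B}{\epsilon}\big/\ln\ln\tfrac{B}{\epsilon}.
\]
A maximal $c/n$-separated set $\Nnew$ on $\mbS^d$ then has $\Mnew\asymp n^d=(\ln\frac{B}{\epsilon})^d(\ln\ln\frac{B}{\epsilon})^{-d}$ points. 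Plugging $\Mnew$ into the Cai et al.\ counting argument yields the theorem.

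\textbf{Construction.} Fix a smooth filter $\eta:[0,\infty)\to[0,1]$ with $\eta=1$ on $[0,1/2]$ and $\eta=0$ on $[1,\infty)$. For a centre $\bz$ define the needlet-type zonal function
\[
\fnew(\bx)=c_\epsilon\sum_{m=0}^{n}\eta(m/n)\,\frac{N_{m,d+1}}{|\mbS^d|}P_{m,d+1}(\bx^\top\bz),
\]
and choose $c_\epsilon$ so that $\fnew(\bz)=2\epsilon$. The proof then needs four properties.
\begin{enumerate}
\item \emph{Maximum and sign.} Since $|P_{m,d+1}|\le1=P_{m,d+1}(1)$ and all coefficients are nonnegative, the maximum is attained at $\bz$, and $|\fnew|\le2\epsilon$.
\item \emph{Localization.} Standard needlet estimates give, for any fixed $k$,
\[
|\fnew(\bx)|\le C_k\,\epsilon\,(1+n\arccos(\bx^\top\bz))^{-k}.
\]
I would take $k>d/2$. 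In particular the $\epsilon$-optimal region lies in a geodesic ball of radius $c/n$ around $\bz$.
\item \emph{RKHS norm.} By the addition theorem \eqref{eq:addition}, the degree-$m$ part equals $c_\epsilon\eta(m/n)\sum_j Y_{m,j}(\bz)Y_{m,j}(\bx)$. Its Mercer coefficients are therefore $\alpha_{m,j}=c_\epsilon\eta(m/n)Y_{m,j}(\bz)/\sqrt{\lambda_m}$. Using the addition theorem again at $\bx=\bz$,
\[
\|\fnew\|^2_{\mH_k(\mbS^d)}=c_\epsilon^2\sum_{m\le n}\eta(m/n)^2\frac{N_{m,d+1}}{|\mbS^d|\lambda_m}.
\]
The normalization forces $c_\epsilon\asymp\epsilon\, n^{-d}$. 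Combined with \eqref{eq:lb_lam}, this gives
\[
\|\fnew\|^2\lesssim \epsilon^2\,\mathrm{poly}(n)\,(\theta/2e)^n(2n+d-1)^{n+d/2}=\epsilon^2\exp\bigl(n\ln n+O(n)\bigr).
\]
So $\|\fnew\|\le B$ holds once $n\ln n\le(2-o(1))\ln(B/\epsilon)$, which is the stated choice of $n$ with a small enough constant.
\item \emph{Analogue of \lemref{lem:sup_lem}.} Take the Voronoi cells $\Rnew$ of $\Nnew$. From the decay bound and the $c/n$-separation, the number of centres in the annulus at geodesic distance about $r/n$ is $O(r^{d-1})$. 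Hence, for every $\bz$,
\[
\sum_{\tbz\in\Nnew}\sup_{\bx\in\Rnew}\bigl(f^{\mathrm{new}}_{\tbz;\epsilon}(\bx)\bigr)^2\lesssim\epsilon^2\sum_r r^{d-1}(1+r)^{-2k}=O(\epsilon^2),
\]
where the series converges because $2k>d$.
\end{enumerate}

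\textbf{Counting argument.} With these properties, I would repeat the \citet{cai2021on} argument. Add the function $f_0\equiv0$, and for each $\bz$ let $\mA_{\bz}=\{\hat\bx_T\in\Rnew\}$. Because the $\epsilon$-optimal regions of distinct $\fnew$ are disjoint, the success guarantee gives $\Pr_{\fnew}(\mA_{\bz})\ge1-\delta$, and $\Pr_{f_0}(\mA_{\bz})\le\delta$ for all but a constant fraction of $\bz$. (Cai et al.'s device applies here: a single algorithm can put mass above $\delta$ on only a few cells.) Apply \lemref{lem:relating_two} to $(f,\tf)=(\fnew,f_0)$, use $\mathrm{KL}=(\fnew)^2/(2\sigma^2)$, sum over such $\bz$, and swap the order of summation using property 4. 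This gives
\[
T\cdot C\epsilon^2/(2\sigma^2)\ \ge\ \Omega(\Mnew)\ln\tfrac1{2.4\delta},
\]
which is exactly the claimed bound.

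\textbf{Main obstacle.} I expect the crux to be property 2: localization at scale $1/n$ with rapid polynomial decay that is uniform in $n$. This must be verified for the specific weights. I would rely on known needlet and smooth-summation kernel bounds (Narcowich--Petrushev--Ward), proved by repeated summation by parts on Jacobi/Gegenbauer expansions. If quoting them proves awkward, a fallback is the square of a Fejér-type kernel, normalized by its $L_2$ mass; it is nonnegative with decay $(1+n\vartheta)^{-(d+1)}$, which is already enough since $2(d+1)>d$. A second, smaller concern is showing the ratio between the norm and the peak value costs only $\exp(O(n))$ beyond $\lambda_n^{-1/2}$. This is what keeps the $(\ln\ln)^{-d}$ loss from getting worse.
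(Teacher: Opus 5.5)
Your construction is sound, and it is a genuinely different route to the paper's Lemmas~\ref{lem:prop_fnew} and~\ref{lem:sup_lem_new}. The smooth filter gives decay $(1+n\rho)^{-k}$ uniformly on $[0,\pi]$, so you need neither the Gegenbauer/Dirichlet closed form nor the paper's antipodal workaround with $\mR_{\bz}^{+}\cup\mR_{\bz}^{-}$. Property 1 also follows at once from the nonnegative coefficients. One small fix: take the separation to be at least twice the localization radius, so that each $\epsilon$-optimal ball lies inside its own Voronoi cell.

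\textbf{The gap is in the counting step.} You apply Lemma~\ref{lem:relating_two} with $(f,\tf)=(\fnew,f_0)$, so the weights are $\Ep_{\fnew}[N_{\bx}(T)]$, and these change with $\bz$. Summing over $\bz$ leaves $\sum_{\bz}\sum_{\bx}\Ep_{f^{\mathrm{new}}_{\bz;\epsilon}}[N_{\bx}(T)]\sup_{\mR^{\mathrm{new}}_{\bx;\epsilon}}(f^{\mathrm{new}}_{\bz;\epsilon})^2$. Property 4 bounds $\sum_{\tbz}\sup_{\mR^{\mathrm{new}}_{\bx;\epsilon}}(f^{\mathrm{new}}_{\tbz;\epsilon})^2$ for a fixed cell with fixed weights, so it does not apply here. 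An algorithm facing $\fnew$ may spend almost all queries in $\Rnew$. The only valid bound is then $4\epsilon^2T$ per $\bz$, which gives $T\gtrsim\sigma^2\epsilon^{-2}\ln\frac1\delta$ with no $\Mnew$ factor.

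Reversing the roles ($f=f_0$, $\tf=\fnew$, event $\mA_{\bz}^{c}$) makes the weights common, but you still need $\Pr_{f_0}(\mA_{\bz})\le\delta$. These probabilities sum to $1$, so this is guaranteed only outside fewer than $1/\delta$ cells, not outside a constant fraction. The threshold on $\epsilon/B$ does not depend on $\delta$, so $\delta\lesssim 1/\Mnew$ is allowed. In that regime, a perfectly legitimate algorithm whose output under $f_0$ is spread roughly uniformly over the cells leaves no admissible $\bz$ at all.

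\textbf{The paper's fix.} It avoids both problems with the device of Cai et al. Fix $\bz$ and build the bumps with RKHS norm at most $B/3$. Compare the single reference $f=\fnew$ with $\tf=\fnew+2f^{\mathrm{new}}_{\tbz;\epsilon}$ for every $\tbz\neq\bz$, using the one event $\{\hat{\bx}_T\in\Rnew\}$. All $\Mnew-1$ applications of Lemma~\ref{lem:relating_two} then share the weights $\Ep_{\fnew}[N_{\bx}(T)]$, and the divergences are $2(f^{\mathrm{new}}_{\tbz;\epsilon})^2/\sigma^2$. Summing over $\tbz$ and using property 4 cell by cell gives $2C\epsilon^2T/\sigma^2\ge(\Mnew-1)\ln\frac{1}{2.4\delta}$.

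\textbf{An alternative fix.} You can keep $f_0$ as the common reference and replace the thresholded lemma with the data-processing bound $\sum_{\bx}\Ep_{f_0}[N_{\bx}(T)]\,\overline{D}^{(\bx)}_{f_0,\fnew}\ge\mathrm{kl}(p_{\bz},q_{\bz})\ge(1-p_{\bz})\ln\frac1\delta-\ln 2$. Here $p_{\bz}=\Pr_{f_0}(\mA_{\bz})$ and $q_{\bz}=\Pr_{\fnew}(\mA_{\bz})\ge 1-\delta$. Summing with $\sum_{\bz}p_{\bz}=1$ gives $(\Mnew-1)\ln\frac1\delta-\Mnew\ln 2=\Omega(\Mnew\ln\frac1\delta)$ for $\delta<1/3$. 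With either fix, your needlet construction plugs in unchanged.
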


\begin{theorem}[Improved cumulative regret lower bound for the SE kernel on the hypersphere]
    \label{thm:improved_cr_lower_bound}
    Consider the same setting as in \thmref{thm:improved_sr_lower_bound}.
    Furthermore, assume $\sigma^2 (\ln (1/\delta))/B^2  = O(T)$ with sufficiently small implied constant.
    Then, for any algorithm, there exists a function $f \in \mH_k(\mX, B)$ such that the following inequality holds with probability at least $\delta$:
    \begin{equation}
     R_T = \Omega\rbr{\sqrt{T \sigma^2 \rbr{\ln \frac{B^2 T}{\sigma^2 \ln \frac{1}{\delta}}}^{d} \rbr{\ln \ln \frac{B^2 T}{\sigma^2 \ln \frac{1}{\delta}}}^{-d}}}.
    \end{equation}
\end{theorem}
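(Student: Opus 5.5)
The statement immediately above is \thmref{thm:improved_cr_lower_bound}. I plan to prove it by reusing the \citet{cai2021on} machinery of \secref{sec:bound_regret} with a new hard class $\Nnew$, $(\fnew)_{\bz}$ on $\mbS^d$. The only change is that the bumps have width $\wnew = \Theta\bigl(\sqrt{\ln\ln(B/\epsilon)/\ln(B/\epsilon)}\bigr)$ instead of $\Theta(\ln^{-1/2}(B/\epsilon))$. Then a $\wnew$-separated set on $\mbS^d$ has cardinality $\Mnew = \Theta((\wnew)^{-d}) = \Theta\bigl((\ln(B/\epsilon))^{d/2}\ldots\bigr)$. This is still only exponent $d/2$, so the narrower width alone does not give the theorem.

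To reach exponent $d$, the count of bumps must scale like $(\ln(B/\epsilon)/\ln\ln(B/\epsilon))^{d}$. On $\mbS^d$ this means angular width $\wnew \asymp \ln\ln(B/\epsilon)/\ln(B/\epsilon)$, i.e.\ the scale $1/n$ where $n$ is the largest usable harmonic degree.

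\textbf{Step 1: the bump.} I will build it from zonal harmonics. For a center $\bz \in \mbS^d$ set
\[
\fnew(\bx) = c_\epsilon \sum_{m=0}^{n} a_m \frac{N_{m,d+1}}{|\mbS^d|} P_{m,d+1}(\bx^\top \bz),
\]
a smoothed spectral projector (Fejér/Jackson-type kernel). Its RKHS norm follows from Mercer's representation, \thmref{thm:mercer}, with the addition theorem~\eqref{eq:addition}. Writing each zonal term as $\sum_j Y_{m,j}(\bz)Y_{m,j}(\bx)$ gives
\[
\|\fnew\|_{\mH_k}^2 = c_\epsilon^2 \sum_m a_m^2 \frac{N_{m,d+1}}{|\mbS^d|}\lambda_m^{-1} \lesssim c_\epsilon^2 \, n^{d-1}\, \underline{\lambda}_n^{-1}.
\]
By \lemref{lem:mercer_hs}, $\underline{\lambda}_n^{-1} = \exp(O(n\ln n))$. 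Setting $c_\epsilon$ so that the peak value is $2\epsilon$ makes $c_\epsilon \asymp \epsilon\, n^{-d}$. The norm condition $\le B$ then holds whenever $n\ln n \lesssim \ln(B/\epsilon)$, i.e.\ $n \asymp \ln(B/\epsilon)/\ln\ln(B/\epsilon)$.

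\textbf{Step 2: localization.} The Jackson-type kernel of degree $n$ concentrates in a cap of radius $O(1/n)$, and its tails are bounded by $C(1+n\,\mathrm{dist})^{-k}$ for a chosen $k > d$. I need two analogues of the Euclidean lemmas:
\begin{itemize}
\item the $\epsilon$-optimal region lies in a cap of radius $\wnew/2$ with $\wnew \asymp K/n$ (analogue of \lemref{lem:prop_g});
\item for $\wnew$-separated centers, $\sum_{\tbz}\sup_{\bx\in\Rnew}(f_{\tbz})^2 \le C_d\epsilon^2$ (analogue of \lemref{lem:sup_lem}), obtained by summing the polynomial tails over shells, since there are $\sim r^{d-1}$ centers at distance $r\wnew$ and $k>d$ makes the sum converge.
\end{itemize}
Then $\Mnew \asymp n^d \asymp (\ln(B/\epsilon))^d(\ln\ln(B/\epsilon))^{-d}$.

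\textbf{Step 3: plug in.} With these properties, the proof of \corref{cor:org_lb_cr} applies verbatim with $\Mnew$ in place of $M_\epsilon$. It gives $R_T \gtrsim \min\{T\epsilon,\ \Mnew\sigma^2\ln(1/\delta)/\epsilon\}$ with probability $\ge \delta$. Choosing $\epsilon \asymp \sqrt{\Mnew\sigma^2/T}$ (up to $\ln(1/\delta)$) gives $\ln(B/\epsilon) \asymp \ln\bigl(B^2T/(\sigma^2\ln(1/\delta))\bigr)$ and hence the stated bound. The assumption $\sigma^2\ln(1/\delta)/B^2 = O(T)$ ensures $\epsilon/B$ is small enough.

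\textbf{Main obstacle.} I expect Step 2 to be the hardest: proving uniform pointwise tail bounds for the smoothed zonal kernel, with the peak at $\bz$ dominant (so the $\epsilon$-optimal set is localized), while keeping the coefficients $a_m$ bounded so that Step 1 still holds. My first attempt would be the classical localized-kernel estimates for smooth Jacobi/Gegenbauer filters $a_m = \eta(m/n)$ with $\eta$ a $C^\infty$ cutoff. These give $|K_n(\cos\vartheta)| \le C_k n^d(1+n\vartheta)^{-k}$. A near-peak lower bound from a second-order Taylor expansion would then control the $\epsilon$-optimal cap.
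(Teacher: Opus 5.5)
Your plan is correct, but it localizes the bump differently from the paper. The paper uses the \emph{sharp} truncation, i.e.\ your construction with $a_m\equiv 1$ up to degree $\bar N$. Via the addition theorem and the telescoping identity $\frac{n+\eta}{\eta}C_n^{(\eta)}=C_n^{(\eta+1)}-C_{n-2}^{(\eta+1)}$, this collapses to the closed form $|\mbS^d|^{-1}\bigl(C_{\bar N}^{(\eta_d+1)}+C_{\bar N-1}^{(\eta_d+1)}\bigr)(\bx^\top\bz)$, which is the Dirichlet kernel when $d=1$. Szeg\H{o}'s classical bound then gives $|\fnew(\bx)|\lesssim\epsilon(\bar N\rho)^{-(d+1)/2}$, where $\rho=\min\{\rho(\bx,\bz),\rho(\bx,-\bz)\}\gtrsim 1/\bar N$.

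\begin{itemize}
\item What the paper's route buys: it is elementary and self-contained.
\item What it costs: the decay is only just summable over shells ($\sum_i i^{d-1}i^{-(d+1)}=\sum_i i^{-2}$). It also cannot exclude an $\epsilon$-optimal cap around $-\bz$, which is why the paper packs only an open hemisphere and uses the paired cells $\mR_{\bz}^+\cup\mR_{\bz}^-$.
\item What your route costs: your smooth filter $a_m=\eta(m/n)$ (with $0\le\eta\le 1$, $\eta=1$ on $[0,1]$, $\mathrm{supp}\,\eta\subset[0,2]$) imports the heavier needlet-type estimate $|K_n(\cos\vartheta)|\le C_k n^d(1+n\vartheta)^{-k}$ on all of $[0,\pi]$ (Narcowich--Petrushev--Ward, Petrushev--Xu).
\item What it buys: the $\epsilon$-optimal set is a single cap, and a plain Voronoi partition of a maximal $\wnew$-packing of the whole sphere suffices. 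The shell sum also converges with room to spare.
\end{itemize}

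The RKHS-norm step is the same as the paper's: going to degree $2n$ only changes polynomial factors, which are absorbed by $\underline{\lambda}_{2n}^{-1}=\exp(O(n\ln n))$. Both routes give $\Mnew\asymp n^d\asymp(\ln\frac{B}{\epsilon})^d(\ln\ln\frac{B}{\epsilon})^{-d}$, and your Step~3 is the paper's argument.

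\textbf{Minor points.}
\begin{itemize}
\item Drop your opening paragraph; the width $\sqrt{\ln\ln(B/\epsilon)/\ln(B/\epsilon)}$ plays no role.
\item In Step~1 the norm bound should carry $n^{d}$, not $n^{d-1}$. This is harmless.
\item The near-peak Taylor lower bound is unnecessary. Containment of the $\epsilon$-optimal set needs only the tail bound plus $K_n(1)\gtrsim n^d$. Property~1 (maximum $2\epsilon$ at $\bz$, $|\fnew|\le 2\epsilon$) follows from $a_m\ge 0$ and $|P_{m,d+1}|\le 1$.
\item In the Cai--Scarlett step with $\tilde f=\fnew+2f_{\tbz;\epsilon}^{\mathrm{new}}$, you need the cross values $|f_{\tbz;\epsilon}^{\mathrm{new}}|$ on $\Rnew$ and $|\fnew(\tbz)|$ to be a small fraction of $\epsilon$ (say $\le\epsilon/4$), not merely $\le\epsilon$. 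Otherwise points of $\Rnew$ are not guaranteed to be non-$\epsilon$-optimal for $\tilde f$. The paper leaves this implicit; your tail bound supplies it by enlarging the constant $K$ in $\wnew\asymp K/n$.
\end{itemize}
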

As described in \citep{cai2021on}, the above high-probability results also imply lower bounds on the expected regret. See \appref{sec:expected_lbs}.

\section{Proof Overview}
\label{sec:proof_sketch}
Our technical contribution is the construction of a new class of hard functions. We first construct a new function class $\mF_{\epsilon}^{\mathrm{new}} \subset \mH_k(\mbS^d, B)$ based on the spherical harmonics. After that, to improve the lower bound by following the proof strategy in \citep{scarlett2017lower}, we replace Lemmas~\ref{lem:prop_g} and \ref{lem:sup_lem} with new counterparts tailored to $\mF_{\epsilon}^{\mathrm{new}}$. 

\subsection{Hard Function Construction on Hypersphere}
\label{sec:hard_cons}
The first step of our proof is to construct a hard function on $\mbS^d$.
Intuitively, from \lemref{lem:prop_g} in the existing proof, we expect that the desired hard function should attain a large value around the small neighborhood of the maximizer, whereas the function values in the other regions are nearly $0$.
Based on this intuition, we consider approximating the Dirac delta function on $\mbS^d$ using spherical harmonics. Given a center point $\bz \in \mbS^d$, let $\delta_{\bz}(\bx)$ be the function that satisfies $\int_{\mbS^d} h(\bx) \delta_{\bz}(\bx) \mu_{\mbS^d}(\text{d}\bx) = h(\bz)$ for any function $h$. As in the standard Euclidean space, for a function $h$, the best approximation $h_{\text{approx}}$ of $h$ under the basis $(Y_{n, j})_{j \in [N_{n, d+1}]}$ of $\Y_n^{d+1}$ is given by $h_{\text{approx}}(\bx) = \sum_{j=1}^{N_{n, d+1}} \langle h, Y_{n,j} \rangle_{\mbS^d} Y_{n,j}(\bx)$~\citep[Chapter 2.3 in ][]{atkinson2012spherical}. Here, $\langle h_1, h_2\rangle_{\mbS^{d}} \coloneqq \int_{\mbS^d} h_1(\bx) h_2(\bx) \mu_{\mbS^d}(\text{d}\bx)$ is the inner product between functions $h_1$ and $h_2$ defined on $\mbS^d$.
Then, for any $N$, we define the approximated delta function $b_{N, \bz}(\bx)$ centered at some $\bz \in \mbS^{d}$ as follows:
\begin{align}
    \label{eq:def_bN}
    \begin{split}
    b_{N, \bz}(\bx)
    &\coloneqq \sum_{n=0}^N \sum_{j=1}^{N_{n,d+1}} \langle Y_{n,j}, \delta_{\bz} \rangle_{\mbS^d} Y_{n,j}(\bx) \\
    &= \sum_{n=0}^N \sum_{j=1}^{N_{n,d+1}} Y_{n,j}(\bz) Y_{n,j}(\bx),    
    \end{split}
\end{align}
where, for any $n$, $(Y_{n,j})_{j \in [N_{n,d+1}]}$ is an orthonormal basis of $\Y_{n}^{d+1}$. Based on $b_{N, \bz}$, for any $\epsilon > 0$, we also define the following scaled function $f_{\epsilon, N, \bz}$ of $b_{N,\bz}$ as follows:
\begin{equation}
    f_{\epsilon, N, \bz}(\bx) = \frac{2\epsilon}{b_{N, \bz}(\bz)} b_{N, \bz}(\bx).
\end{equation}
\figref{fig:fepsN_image} provides a visualization of $f_{\epsilon, N, \bz}$.
By properly designing $N$ depending on $B$ and $\epsilon$, we can guarantee $\|f_{\epsilon, N, \bz}\|_{\mH_k(\mbS^d)} \leq B$ and use it as an element $f_{\bz;\epsilon}^{\mathrm{new}}$ of our new hard function class $\mF_{\epsilon}^{\mathrm{new}}$.

\begin{figure}
    \centering
    \includegraphics[width=0.7\linewidth]{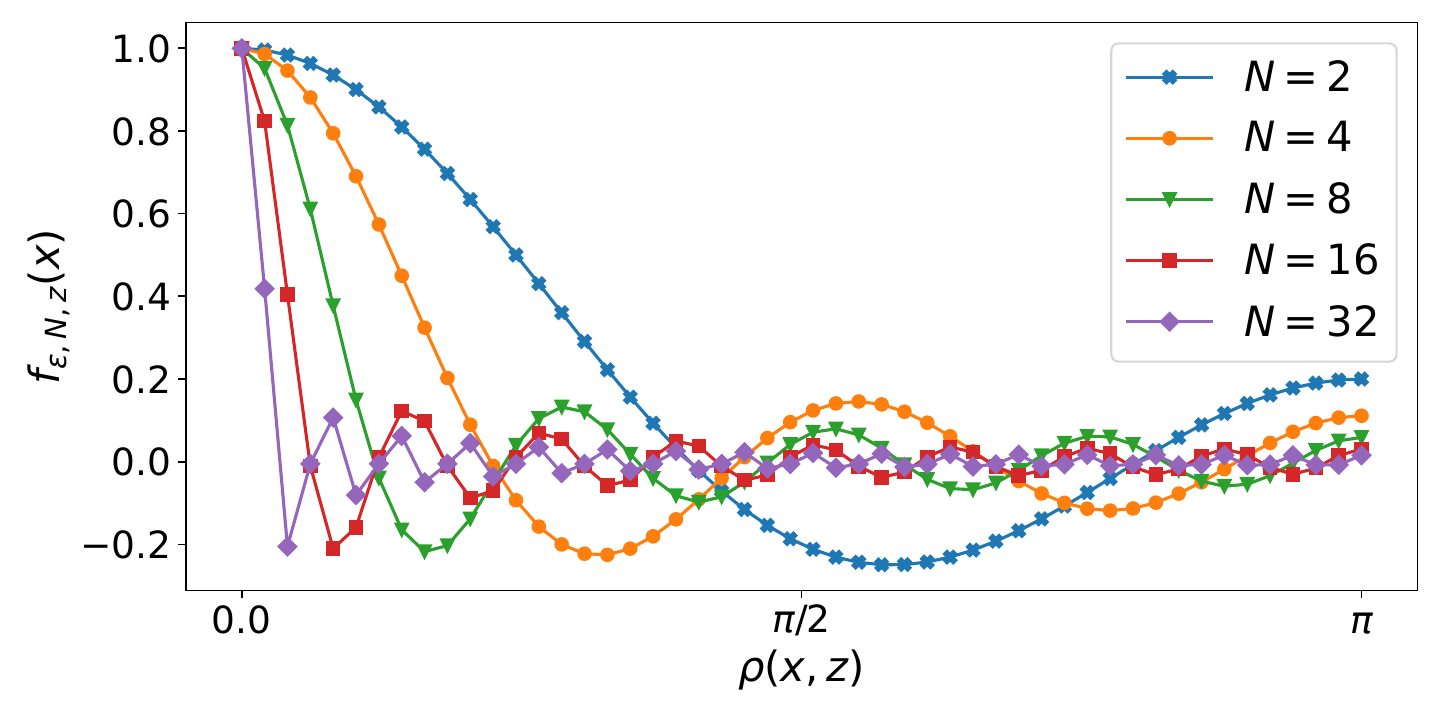}
    \caption{Visualization of the function $f_{\epsilon, N, \bz}$ with $\epsilon = 0.5$ and $d = 1$ (i.e., the function on the 2D circle $\mbS^1$). Note that the above plot takes the geodesic distance $\rho(\bx, \bz) \coloneqq \arccos(\bx^{\top}\bz)$ from some center point $\bz \in \mbS^1$ as the horizontal axis.
    Intuitively, the number $N$ controls the sharpness of the behavior around the peak at $\rho(\bx, \bz) = 0$.
    }
    \label{fig:fepsN_image}
\end{figure}

\subsection{Properties of New Hard Function}
\label{sec:prop_newf}
The following lemma specifies the properties of $f_{\bz;\epsilon}^{\mathrm{new}}$ and serves as a counterpart to \lemref{lem:prop_g} in the original proof.
The full proof is in \appref{sec:proof_overview_detail}.

\begin{lemma}[Properties of new hard function]
    \label{lem:prop_fnew}
    Fix any $d \in \N_+$, $\epsilon > 0$, $B > 0$, $\bz \in \mbS^d$, and let $k: \mbS^d \times \mbS^d \rightarrow \R$ be the SE kernel with lengthscale parameter $\theta > 0$. 
    Assume that $\epsilon/B$ is sufficiently small.
    Furthermore, define $f_{\bz;\epsilon}^{\mathrm{new}}: \mbS^d \rightarrow \R$ by $f_{\bz;\epsilon}^{\mathrm{new}} \coloneqq f_{\epsilon, \bar{N}, \bz}$, where $\bar{N} = \Theta\rbr{\rbr{\ln \frac{B}{\epsilon}} \rbr{\ln \ln \frac{B}{\epsilon}}^{-1} }$.
    In addition, set $w_{\epsilon}^{\mathrm{new}} \coloneqq \Theta(\bar{N}^{-1})$.
    Then, the following three properties hold:
    \begin{enumerate}
        \item The function $f_{\bz;\epsilon}^{\mathrm{new}}$ attains its maximum at $\bm{z}$. Furthermore, $f_{\bz;\epsilon}^{\mathrm{new}}(\bm{z}) = 2\epsilon$ and $\forall \bx \in \mbS^d,~|f_{\bz;\epsilon}^{\mathrm{new}}(\bx)| \leq 2\epsilon$ hold.
        \item The $\epsilon$-optimal region satisfies 
        $\{\bx \in \mbS^d \mid f_{\bz;\epsilon}^{\mathrm{new}}(\bz) - f_{\bz;\epsilon}^{\mathrm{new}}(\bx) \leq \epsilon\} \subset \{\bx \in \mbS^d \mid \rho(\bx, \bz) \leq \wnew/2~\mathrm{or}~~\rho(\bx, -\bz) \leq \wnew/2\}$, where $\rho(\bx, \bz) \coloneqq \arccos(\bx^{\top} \bz)$ denotes the geodesic distance on $\mbS^d$.
        \item The RKHS norm satisfies $\|f_{\bz;\epsilon}^{\mathrm{new}}\|_{\mH_k(\mbS^d)} \leq B$.
    \end{enumerate}
\end{lemma}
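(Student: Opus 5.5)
By the addition theorem \eqref{eq:addition}, $b_{N,\bz}$ depends only on $t=\bx^{\top}\bz$:
\begin{equation*}
b_{N,\bz}(\bx)=\frac{1}{|\mbS^d|}\sum_{n=0}^N N_{n,d+1}P_{n,d+1}(t).
\end{equation*}
The plan is to treat the three properties separately, using this one-dimensional representation throughout.

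\textbf{Property 1.} I will use the standard facts $P_{n,d+1}(1)=1$ and $|P_{n,d+1}(t)|\le 1$ on $[-1,1]$, both available in \citep{atkinson2012spherical}. The first gives $b_{N,\bz}(\bz)=|\mbS^d|^{-1}\sum_{n\le N}N_{n,d+1}$. The second gives $|b_{N,\bz}(\bx)|\le b_{N,\bz}(\bz)$ for every $\bx$. After rescaling by $2\epsilon/b_{N,\bz}(\bz)$, the maximum is $2\epsilon$, it is attained at $\bz$, and $|f|\le 2\epsilon$ everywhere.

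\textbf{Property 3 (RKHS norm).} Write $b_{N,\bz}=\sum_{n,j}\alpha_{n,j}\sqrt{\lambda_n}Y_{n,j}$ with $\alpha_{n,j}=Y_{n,j}(\bz)/\sqrt{\lambda_n}$. The Mercer representation (Theorem~\ref{thm:mercer}, Lemma~\ref{lem:mercer_hs}) together with the addition theorem at $\bx=\tbx=\bz$ gives
\begin{equation*}
\|f_{\epsilon,N,\bz}\|^2_{\mH_k}=\frac{4\epsilon^2}{b_{N,\bz}(\bz)^2}\sum_{n\le N}\frac{N_{n,d+1}}{|\mbS^d|\lambda_n}\le \frac{4\epsilon^2 |\mbS^d|}{\sum_{n\le N}N_{n,d+1}}\cdot\frac{\sum_{n\le N}N_{n,d+1}}{\underline{\lambda}_N\,\sum_{n\le N}N_{n,d+1}}\lesssim \frac{\epsilon^2}{\underline{\lambda}_N}.
\end{equation*}
This uses that $\underline{\lambda}_n$ is decreasing for large $n$; small $n$ only affects constants. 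From \eqref{eq:lb_lam}, $\ln(1/\underline{\lambda}_N)=N\ln N+O(N)$. It therefore suffices to have $N\ln N+O(N)\le 2\ln(B/\epsilon)-O(1)$, which holds for $\bar N=c\ln(B/\epsilon)/\ln\ln(B/\epsilon)$ with a small constant $c$ once $\epsilon/B$ is small. This is the source of the $(\ln\ln)^{-1}$ factor.

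\textbf{Property 2 (localization).} This is the main obstacle. I need to show that $F_N(t)\coloneqq\sum_{n\le N}N_{n,d+1}P_{n,d+1}(t)\big/\sum_{n\le N}N_{n,d+1}$ satisfies $F_N(\cos\rho)\le 1/2$ whenever both $\rho\ge c'/N$ and $\pi-\rho\ge c'/N$.

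My first approach uses the Christoffel--Darboux-type identity. Up to normalization, $\sum_{n\le N}N_{n,d+1}P_{n,d+1}$ equals a single Jacobi polynomial $P_N^{(d/2,\,d/2-1)}(t)$, since it is the reproducing kernel of polynomials of degree $\le N$ on the sphere. So $F_N(\cos\rho)=P_N^{(\alpha,\beta)}(\cos\rho)/P_N^{(\alpha,\beta)}(1)$ with $\alpha=d/2$. The classical Szeg\H{o} bound $|P_N^{(\alpha,\beta)}(\cos\rho)|\lesssim N^{-1/2}\rho^{-\alpha-1/2}$ on $[c/N,\pi-c/N]$, combined with $P_N^{(\alpha,\beta)}(1)\asymp N^{\alpha}$, then gives $|F_N|\lesssim (N\rho)^{-(d+1)/2}$. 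This is at most $1/2$ once $N\rho$ exceeds a large constant. Near $\rho\approx\pi$ the function may be large, because $P_n(-1)=(-1)^n$; this is why the statement allows the antipodal ball $\rho(\bx,-\bz)\le\wnew/2$.

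Choosing $\wnew=2c'/\bar N$ then gives Property 2. If invoking Szeg\H{o} is undesirable, my fallback is a direct estimate via the Mehler--Dirichlet or Laplace integral for $P_{n,d+1}$ plus Abel summation. In either case, the uniformity of constants in $d$ is the delicate point.
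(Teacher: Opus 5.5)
Your proposal is correct and is essentially the paper's proof. The addition theorem reduces $b_{N,\bz}$ to a kernel in $t=\bx^{\top}\bz$. A Szeg\H{o}-type bound then gives $|f_{\epsilon,N,\bz}(\bx)|\lesssim \epsilon\,(N\rho(\bx,\bz))^{-(d+1)/2}$ for $\rho(\bx,\bz)\in[c/N,\pi/2]$; this is exactly Lemma~\ref{lem:f_ub}, since $\eta_d+1=(d+1)/2=d-\eta_d$. Property~3 is the same Mercer computation, ending in $\|f_{\epsilon,N,\bz}\|^2_{\mH_k(\mbS^d)}\lesssim\epsilon^2/\underline{\lambda}_N$. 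The differences are cosmetic. You get Property~1 directly from $|P_{n,d+1}|\le 1$ rather than from maxima of Gegenbauer polynomials. You write the kernel as the single Jacobi polynomial $P_N^{(d/2,\,d/2-1)}$, whereas the paper telescopes it to $C_N^{(\eta_d+1)}+C_{N-1}^{(\eta_d+1)}$; these are the same polynomial up to a constant factor. Uniformity in $d$ is not needed, since all constants may depend on $d$.

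One step needs correcting. The estimate $|P_N^{(\alpha,\beta)}(\cos\rho)|\lesssim N^{-1/2}\rho^{-\alpha-1/2}$ holds only on $[c/N,\pi/2]$. On $[\pi/2,\pi-c/N]$ you must apply it to $P_N^{(\beta,\alpha)}$ via $P_N^{(\alpha,\beta)}(-x)=(-1)^N P_N^{(\beta,\alpha)}(x)$. That gives $N^{-1/2}(\pi-\rho)^{-(d-1)/2}$, which is not $O(N^{-1/2})$ near $\rho=\pi$ when $d\ge 2$; for example, $|P_N^{(1,0)}(-1)|=1$.

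Your conclusion still stands. Dividing by $P_N^{(\alpha,\beta)}(1)\asymp N^{d/2}$ gives $|F_N(\cos\rho)|\lesssim N^{-1}(N(\pi-\rho))^{-(d-1)/2}$ on $[\pi/2,\pi-c/N]$. On $[\pi-c/N,\pi]$ you have $|P_N^{(\alpha,\beta)}|\lesssim N^{d/2-1}$. Together these give $|F_N|=O(1/N)$ on the whole hemisphere $\rho\ge\pi/2$; in fact $F_N(-1)=(-1)^N d/(2N+d)$.

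However, your explanation of the antipodal ball is wrong. The function $f_{\epsilon,N,\bz}$ is \emph{not} large near $-\bz$, because $\sum_{n\le N}N_{n,d+1}(-1)^n$ cancels. Once corrected, your Jacobi form proves a stronger statement: for small $\epsilon/B$, the $\epsilon$-optimal region lies in the $\wnew/2$-ball around $\bz$ alone. This is what the paper's Remark~\ref{rem:eps_opt} conjectures. The paper needs the antipodal ball only because it bounds $|C_N^{(\eta_d+1)}|+|C_{N-1}^{(\eta_d+1)}|$ termwise, which destroys exactly this cancellation at $t=-1$.
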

\begin{remark}[$\epsilon$-optimal region guarantee in property 2]
    \label{rem:eps_opt}
    In the above lemma, property 2 suggests that the $\epsilon$-optimal region could be contained in neighborhoods around both $\bz$ and $-\bz$. We conjecture that this result is not intrinsic and could be improved in the future. However, even under this limitation, we can obtain the lower bound by defining a partition of $\mbS^d$ such that it includes both neighborhoods of $\bz$ and $-\bz$ (see Appendix~\ref{sec:detail_new_function_class} for details).
    Furthermore, note that this limitation only affects the constant factor in the final lower bounds.
\end{remark}

Importantly, we can verify that the “width” $\wnew \coloneqq \Theta(\rbr{\ln \frac{B}{\epsilon}}^{-1} \rbr{\ln \ln \frac{B}{\epsilon}})$ around the function peak decreases faster than $w_{\epsilon} \coloneqq \Theta(\ln^{-1/2}(B/\epsilon))$ in the existing proof. Intuitively, this suggests that our new function is more difficult to identify the peak location, leading to improved lower bounds. 

\paragraph{Proof Sketch for Properties 1 and 2.}
For the proof, an important observation is that the function $b_{N,\bz}(\bx)$ can be transformed into a simple analytical expression via the addition theorem (Eq.~\eqref{eq:addition}).
Here, we focus on the case for $d = 1$, since the proof for $d = 1$ is a simplified version of that for general $d \geq 1$ and is suitable for explaining the core idea. From the addition theorem of spherical harmonics (Eq.~\eqref{eq:addition}), we can write $b_{N,\bz}(\bx)$ as $b_{N,\bz}(\bx) = \sum_{n=0}^N \frac{N_{n, d+1}}{|\mbS^d|} P_{n,d+1}(\bx^{\top}\bz)$. Since we consider the case $d = 1$, 
$P_{n,d+1}(\bx^{\top}\bz) = \cos(n \arccos \bx^{\top}\bz)$, $|\mbS^d| = 2\pi$, $N_{0, d+1} = 1$, and $N_{n, d+1} = 2$ hold for $n \in \N_+$ from the definitions. Thus, $b_{N,\bz}(\bx)$ is simplified as
\begin{equation}
    b_{N,\bz}(\bx) = \frac{1}{2\pi} \sbr{1 + 2 \sum_{n=1}^N \cos(n \arccos \bx^{\top}\bz)}.
\end{equation}
The term in parentheses of the above equation is known as the \emph{Dirichlet kernel}~\citep[e.g., Chapter~15.2 in ][]{bruckner1997real}.
Then, from the above simple form, we can verify the properties $1$ and $2$ by elementary calculus.
Finally, we note that the proof for general $d \geq 1$ is given by leveraging Gegenbauer polynomials~\citep{szeg1939orthogonal}, instead of the Dirichlet kernel. See Appendix~\ref{sec:proof_general_d} for details.

\paragraph{Proof Sketch for Property 3.} 
From Mercer's representation theorem, the RKHS norm of $f_{\epsilon, N,\bz}$ is given by $\|f_{\epsilon, N,\bz}\|_{\mH_k(\mbS^d)} = \frac{2\epsilon}{b_{N,\bz}(\bz)} \sqrt{\sum_{n = 0}^N \sum_{j=1}^{N_{n, d+1}} (Y_{n, j} (\bz)/\sqrt{\lambda_n})^2}$, where $\lambda_n$ is the eigenvalue that satisfies Eq.~\eqref{eq:lb_lam}. 
From this identity and the lower bound on $\lambda_n$ in Eq.~\eqref{eq:lb_lam}, we find that the definition of $\bar{N}$ implies $\|f_{\epsilon, {\bar{N}}, \bz}\|_{\mH_k(\mbS^d)} \leq B$ after elementary calculus.

\paragraph{Construction of Finite Function Class.} By analogy of $\mF_{\epsilon}$ (\defref{def:finite_f}) used in \citep{scarlett2017lower}, we define the new function class $\mF_{\epsilon}^{\mathrm{new}} \subset \mH_k(\mbS^d, B)$ based on  \lemref{lem:prop_fnew} so that the $\epsilon$-optimal regions of the elements of $\mF_{\epsilon}^{\mathrm{new}}$ are distinct. 
Furthermore, from property 2 in Lemma~\ref{lem:prop_fnew}, the new partition $(\Rnew)$ of $\mbS^d$ used in our proof is also defined. Due to space limitations, we leave the exact definitions to Definitions~\ref{def:new_func_class} and \ref{def:new_partition} in Appendix~\ref{sec:detail_new_function_class}.

\subsection{Bounding Regret}
The remaining parts of the proof bound the regret using $\mF_{\epsilon}^{\mathrm{new}}$. To obtain final results via the existing proof strategy summarized in \secref{sec:bound_regret}, we leverage the following lemma, which serves as a replacement of \lemref{lem:sup_lem}.

\begin{lemma}
    \label{lem:sup_lem_new}
    Let $\mF_{\epsilon}^{\mathrm{new}}$, $\mN_{\epsilon}^{\mathrm{new}}$, and $(\Rnew)_{\bz \in \Nnew}$ be the function class, the finite index set, and the partition defined in Definitions~\ref{def:new_func_class} and \ref{def:new_partition}. Then, 
    we have $\forall \bz \in \mN_{\epsilon}^{\mathrm{new}},~\sum_{\tilde{\bz} \in \Nnew} \sup_{\bx \in \Rnew} \rbr{f_{\tbz;\epsilon}^{\mathrm{new}}(\bx)}^2 \leq \hat{C}_d \epsilon^2$, for some constant $\hat{C}_d > 0$ depending only on $d$.
\end{lemma}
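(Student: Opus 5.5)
The plan is to reduce the sum to a decay estimate for a single function $f_{\bz;\epsilon}^{\mathrm{new}}$ away from its peaks, and then to count how many packing centers fall in each geodesic shell. By the addition theorem, $f^{\mathrm{new}}_{\tbz;\epsilon}(\bx) = 2\epsilon\, G_{\bar N}(\bx^{\top}\tbz)/G_{\bar N}(1)$, where $G_N(t) \coloneqq \sum_{n=0}^N N_{n,d+1} P_{n,d+1}(t)$. So $f^{\mathrm{new}}_{\tbz;\epsilon}(\bx)$ depends only on the geodesic distance $\rho(\bx,\tbz)$; it is even in $t$ up to the sign pattern of the Legendre polynomials, which is why $\pm\tbz$ both appear. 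The first step is therefore a pointwise envelope bound: there is a constant $c_d$ such that
\begin{equation*}
    \bigl|f^{\mathrm{new}}_{\tbz;\epsilon}(\bx)\bigr| \leq 2\epsilon \min\Bigl\{1, \frac{c_d}{(\bar N\, \min\{\rho(\bx,\tbz), \rho(\bx,-\tbz)\})^{(d+1)/2}}\Bigr\}.
\end{equation*}
For $d=1$ this is the classical Dirichlet kernel bound $|D_N(\phi)| \leq \min\{2N+1, \pi/|\phi|\}$ combined with $G_N(1) = 2N+1$. For general $d$ I would write $G_N$ as a single Jacobi/Gegenbauer polynomial (the Christoffel--Darboux sum of $P_{n,d+1}$ is $\propto P_N^{(d/2,\,d/2-1)}$). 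I would then use the standard Szeg\H{o} estimate $|P_N^{(\alpha,\beta)}(\cos\phi)| \lesssim N^{-1/2}\phi^{-\alpha-1/2}$ on $[c/N, \pi/2]$, together with the normalization $G_N(1) \asymp N^{d}$. This gives decay of order $(N\phi)^{-(d+1)/2}$. Any decay exponent strictly larger than $d/2$ suffices below, so even a cruder bound would do. This is the step I expect to be the main obstacle, since it is the only genuinely analytic input beyond what is proved for Lemma~\ref{lem:prop_fnew}. I would lean on the same Gegenbauer machinery used there for property 2, which already needs the function to be small away from $\pm\bz$.

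Next I would use the geometry of Definitions~\ref{def:new_func_class} and \ref{def:new_partition}. The centers $\Nnew$ form a $\wnew$-separated set with $\wnew \asymp \bar N^{-1}$, chosen so that the sets $\{\tbz,-\tbz\}$ are pairwise separated. Each region $\Rnew$ lies within geodesic distance $O(\wnew)$ of $\bz$ or $-\bz$. Fix $\bz$ and $\bx\in\Rnew$. The distance $\min\{\rho(\bx,\tbz),\rho(\bx,-\tbz)\}$ is bounded below, up to an additive $O(\wnew)$, by $\min\{\rho(\bz,\tbz),\rho(\bz,-\tbz)\}$. Hence $\sup_{\bx\in\Rnew}(f^{\mathrm{new}}_{\tbz;\epsilon}(\bx))^2 \leq 4\epsilon^2\min\{1, c_d'(k+1)^{-(d+1)}\}$ for every $\tbz$ in the $k$-th shell $\{\tbz : k\wnew \leq \min\{\rho(\bz,\tbz),\rho(\bz,-\tbz)\} < (k+1)\wnew\}$. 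Here the first $O(1)$ shells are handled trivially by property 1 of Lemma~\ref{lem:prop_fnew}.

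Finally I would count the centers in each shell by a volume argument. The geodesic balls of radius $\wnew/2$ around separated centers are disjoint. The shell around $\pm\bz$ at scale $k\wnew$ has measure $O((k+1)^{d-1}\wnew^{d})$, so it contains $O_d((k+1)^{d-1})$ centers. Summing gives
\begin{equation*}
    \sum_{\tbz\in\Nnew} \sup_{\bx\in\Rnew} \bigl(f^{\mathrm{new}}_{\tbz;\epsilon}(\bx)\bigr)^2 \leq 4\epsilon^2 \sum_{k\geq 0} O_d\bigl((k+1)^{d-1}\bigr)\, \min\{1, c_d'(k+1)^{-(d+1)}\} \leq \hat C_d\,\epsilon^2.
\end{equation*}
The series converges because it behaves like $\sum_k (k+1)^{-2}$. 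Because the envelope scales exactly with $\bar N\rho$ and the packing also lives at scale $\bar N^{-1}$, the constant is independent of $\epsilon$ and $B$, as required.
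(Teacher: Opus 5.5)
Your proposal is correct and follows the paper's proof essentially step for step. The paper obtains the same envelope $|f^{\mathrm{new}}_{\tbz;\epsilon}(\bx)| \lesssim \epsilon\,(\bar N\min\{\rho(\bx,\tbz),\rho(\bx,-\tbz)\})^{-(d+1)/2}$ in Lemma~\ref{lem:f_ub}. It writes $b_{N,\bz}$ as $C_N^{(\eta_d+1)}+C_{N-1}^{(\eta_d+1)}$, which is your Jacobi polynomial $P_N^{(d/2,d/2-1)}$ up to a constant, and applies Szeg\H{o}'s bound. It then counts $O(i^{d-1})$ centers per geodesic shell via Lemma~\ref{lem:pack_hs} and sums $\sum_i i^{-2}$. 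Indexing shells by distance to the pair $\{\bz,-\bz\}$, as you do, is slightly more careful than the paper's counting: that lemma centers its shells only at $\bz$, even though $\Rnew$ also contains the antipodal piece $\mR_{\bz}^-$.
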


As with the proof of \lemref{lem:prop_fnew}, the core elements of the proof of \lemref{lem:sup_lem_new} are the simplifications via the addition theorem of spherical harmonics. The full proof is provided in Appendix~\ref{sec:sup_lem_new_proof}. 

Finally, by following the existing proof strategy in \citep{cai2021on} and using the aforementioned lemmas, we obtain the desired lower bounds.

\section{Improved Information Gain Upper Bound}
One intriguing feature of our lower bounds is the presence of an additional term of the form $\Omega((\ln \ln T)^{-d/2})$. By examining our proof, we observe that this term arises from the logarithmic factor in the exponent of the eigenvalue $\lambda_n \coloneqq \Omega(\exp(-n \ln n))$ (see Eq.~\eqref{eq:lb_lam}).
We leave the rigorous examination of whether the $\Omega((\ln \ln T)^{-d/2})$-term is essential the future work. On the other hand, at present, we conjecture that $\Omega((\ln \ln T)^{-d/2})$ term is unavoidable. The reason for this conjecture is that the $O((\ln \ln T)^{-d/2})$ term also naturally appears in the upper bound on the regret through the MIG $\gamma_T$. We show this as follows.
\label{sec:mig_improve}
\begin{theorem}[Improved upper bound of MIG]
    \label{thm:mig}
    Fix any $d \in \N_+$, $\lambda > 0$, and $T \in \N_+$. 
    Let $\mX$ be either a $d$-dimensional compact input domain in $\R^d$ or a hyperspherical input domain $\mX \subset \mbS^d$. Let $\gamma_T$ denote the MIG of the SE kernel on $\mX$, which is defined as follows:
    \begin{equation}
        \gamma_T = \frac{1}{2} \sup_{\bx_1, \ldots, \bx_T \in \mX} \ln \det(\bI_T + \lambda^{-2} \bK_T),
    \end{equation}
    where $\bK_T \coloneqq [k(\bx_i, \bx_j)]_{i,j \in [T]} \in \R^{T\times T}$ is the kernel matrix. Furthermore, $k$ is the SE kernel defined in Eq.~\eqref{eq:se}. Then, $\gamma_T = O((\ln T)^{d+1} (\ln \ln T)^{-d})$.
\end{theorem}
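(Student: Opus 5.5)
The plan is to run the standard eigenvalue-truncation argument for bounding the MIG, as in \citep{vakili2021information,iwazaki2025improved}, but to use a sharper count of how many eigenvalues must be retained. That count comes from the super-exponential decay $\lambda_n \approx \exp(-n\ln n)$ rather than the cruder $\exp(-cn)$ estimate that gives $(\ln T)^{d+1}$.

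The generic bound I would invoke is the following. For any truncation level $D$ (number of retained eigenfunctions),
\begin{equation*}
\gamma_T \le \tfrac{1}{2} D \ln\bigl(1 + \lambda^{-2} T\, \bar{k} / D\bigr) + \tfrac{1}{2}\lambda^{-2} T\, \delta_D ,
\end{equation*}
where $\delta_D$ is a uniform bound on the tail $\sup_{\bx}\sum_{i>D}\lambda_i\phi_i(\bx)^2$. This is obtained by splitting $k = k_{\le D} + k_{>D}$ and using $\ln\det(A+B)\le \ln\det(A) + \mathrm{tr}(A^{-1}B)$ with $A = \bI_T + \lambda^{-2}\bK_{\le D}$.

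\textbf{Hypersphere case.} Truncate at spherical-harmonic degree $N$, so $D = \sum_{n\le N} N_{n,d+1} = O(N^d)$. The addition theorem (Eq.~\eqref{eq:addition}) gives the exact uniform tail
\begin{equation*}
\sum_{n>N} \lambda_n N_{n,d+1}/|\mbS^d|,
\end{equation*}
since $P_{n,d+1}(1)=1$. I then need a matching \emph{upper} bound $\lambda_n \le C (c/\theta)^n n^{-n} n^{O(1)}$, the counterpart of Eq.~\eqref{eq:lb_lam}. This follows from the explicit Bessel-function formula for $\lambda_n$ in \citep{minh2006mercer}: $\lambda_n \propto e^{-2/\theta} I_{n+(d-1)/2}(2/\theta)$, together with $I_\nu(x)\le (x/2)^\nu e^{x}/\Gamma(\nu+1)$. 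Then $\delta_D \le \exp(-N\ln N + O(N))$. Choosing $N = \Theta(\ln T/\ln\ln T)$ with a large enough constant makes $T\delta_D = O(1)$. Indeed, $N\ln N = \frac{c\ln T}{\ln\ln T}(\ln\ln T - \ln\ln\ln T + \ln c) \ge 2\ln T$ for large $T$. The first term is then $O(N^d\ln T) = O((\ln T)^{d+1}(\ln\ln T)^{-d})$.

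\textbf{Compact $\mX\subset\R^d$.} Rather than a Mercer basis, I would use the explicit Taylor-type feature expansion
\begin{equation*}
k(\bx,\tbx) = e^{-\|\bx\|^2/\theta} e^{-\|\tbx\|^2/\theta} \sum_{\bm{\alpha}} \frac{(2/\theta)^{|\bm{\alpha}|}}{\bm{\alpha}!}\, \bx^{\bm{\alpha}}\tbx^{\bm{\alpha}}.
\end{equation*}
Truncating at total degree $N$ gives a finite-rank kernel of rank $\binom{N+d}{d} = O(N^d)$. The $\ln\det$ bound above only needs a PSD finite-rank part plus a PSD remainder, not orthonormality. The remainder is bounded uniformly on $\mX$ (say $\|\bx\|\le r$) by $\sum_{m>N}(2r^2/\theta)^m/m!\le \exp(-N\ln N+O(N))$. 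The same choice of $N$ therefore gives the same rate. This route in fact also covers $\mX\subset\mbS^d$ as a compact subset of $\R^{d+1}$, but it would give exponent $d+1$ there, which is why the spherical-harmonic argument is needed for $\mbS^d$. Note also that restriction to a subset $\mX\subset\mbS^d$ only decreases $\gamma_T$.

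\textbf{Main obstacle.} I expect the hard step to be the precise tail estimate on the eigenvalues (or Taylor coefficients) with the $n^{-n}$ factor made explicit, together with checking that the constant in $N$ absorbs the $e^{O(N)}$ and polynomial factors $N_{n,d+1} = O(n^{d-1})$. Care is also needed in the log-determinant splitting so that the remainder term enters only as $\lambda^{-2}T\delta_D$.
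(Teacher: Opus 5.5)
Your proposal is correct. On $\mbS^d$ it is essentially the paper's argument. The paper quotes the truncation bound (Lemma~15 of \citep{iwazaki2025improved}) that you re-derive via $\ln\det(A+B)\le\ln\det A+\mathrm{tr}(A^{-1}B)$. It takes the matching upper bound on $\lambda_n$ from Theorem~2 of \citep{minh2006mercer}, which is the Bessel-function formula you use. It then picks the truncation degree $M$ with $\ln T\le M\ln(c_{d,\theta}M)\le c^{(U)}_{d,\theta}\ln T$, i.e.\ $M=\Theta(\ln T/\ln\ln T)$, exactly as you do.

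The genuine difference is the Euclidean case. The paper never works in $\R^d$ directly; it reduces to the sphere via Lemma~9 of \citep{iwazaki2025improved}, namely $\gamma_T(\mX)\le\gamma_T(\mbS^d)$ for $\mX$ in the unit ball of $\R^d$, rescaling $\theta$ for larger domains. So the whole proof runs through spherical harmonics. Your Taylor-feature truncation is self-contained and elementary, needs no embedding lemma, and makes the dependence on the domain radius $r$ explicit. The paper's route is shorter given the cited lemmas.

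Your aside that the Taylor route would lose a factor on $\mbS^d$ is too pessimistic. On the sphere the prefactor $e^{-\|\bx\|^2/\theta}$ is constant. The degree-$\le N$ part, restricted to $\mbS^d$, therefore has rank at most the dimension of polynomials of degree $\le N$ restricted to $\mbS^d$, which is $\sum_{n\le N}N_{n,d+1}=O(N^d)$. Your log-determinant bound uses only the rank of the finite part, not the number of monomial features, so the Taylor argument alone already covers both cases.
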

Note that the above result is applicable to a general compact input domain in $\R^d$ beyond $\mbS^d$. The basic proof strategy of the above theorem follows that of the existing MIG upper bound in \citep{iwazaki2025improved}. The difference in our proof is a more precise treatment of the rate of decay of the kernel eigenvalues in the existing proof. See the full proof in \appref{sec:mig_proof}. Since the above result improves the existing $O((\ln T)^{d+1})$ upper bound on MIG~\citep{iwazaki2025improved,srinivas10gaussian}, this is of independent interest. For example, as described in Section~\ref{sec:related_work}, the cumulative regret upper bounds of many existing algorithms are quantified as $O^{\ast}(\sqrt{T \gamma_T})$ or $O^{\ast}(\gamma_T \sqrt{T})$; thus, \thmref{thm:mig} immediately provides $O(\sqrt{(\ln \ln T)^d})$ or $O((\ln \ln T)^d)$ improvements for such algorithms.

\section{Discussion}
\label{sec:dicuss}
One desired future direction is to extend our analysis to a compact input domain $\mX$ on $\R^d$. For simplicity, we assume $\mX = [0, 1]^d$ in this section.
A natural direction is to extend our function construction in \secref{sec:hard_cons} by using some eigenfunctions on $\R^d$. For example, under the Gaussian measure, it is known that the corresponding eigenfunctions $(\phi_n^{\mathrm{Gauss}})_{n \in \N}$ and eigenvalues $(\lambda_n^{\mathrm{Gauss}})_{n \in \N}$ of the SE kernel are given in explicit form~\citep[e.g.,~Chapter 4.3.1 in ][]{Rasmussen2005-Gaussian}. Using these results and following the same philosophy as in \secref{sec:hard_cons}, we can adapt our proof strategy by redefining $b_{N, \bz}$ (Eq.~\eqref{eq:def_bN}) as $b_{N, \bz}(\cdot) = \sum_{n=0}^N \phi_n^{\mathrm{Gauss}}(\bz) \phi_n^{\mathrm{Gauss}}(\cdot)$.
However, in our attempts, the logarithmic factors in the resulting lower bound scale as $\Omega(\sqrt{(\ln T)^{d/2}})$, which exhibits no improvement. We provide the details in Appendix~\ref{sec:discuss_detail}. We conjecture that this is because the Gaussian measure has unbounded support, whereas the problem is defined on a bounded domain. 
Therefore, we believe that a promising future direction is to leverage the eigensystem $(\phi_n, \lambda_n)$ associated with a measure whose support matches the input domain $\mX \coloneqq [0, 1]^d$. For example, the uniform measure on $[0, 1]^d$ and the corresponding eigensystem $(\phi_n^{\mathrm{unif}}, \lambda_n^{\mathrm{unif}})$ of the SE kernel may be a natural choice. We leave a further examination of this direction for future work.

\section{Conclusion}
We provide the improved lower bounds for the SE kernel on the hypersphere. Our results fill the gap in the dimension-dependent logarithmic factors between the existing lower and upper bounds. We also discuss promising directions for extending our proof idea to a general compact input domain. Although the current analysis is limited to the hypersphere, we believe that our results and core proof techniques represent an important milestone toward improving the lower bounds for the SE kernel on general compact domains. 

\bibliography{main}
\bibliographystyle{plainnat}

\newpage
\appendix

\onecolumn
\section{Proofs of Corollaries~\ref{cor:org_lb_sr} and \ref{cor:org_lb_cr}}
\label{sec:cor_proofs}

To obtain the lower bound on the hypersphere, we first modify the definition of the finite function class. 

\begin{definition}[Finite function class construction based on $g_{\epsilon}$ on the hypersphere]
    \label{def:finite_f_hs}
    Define $M_{\epsilon}^{(s)}$ as $M_{\epsilon}^{(s)} = P(\mbS^d, w_{\epsilon}, \|\cdot\|_{\infty})$, where $P(\mbS^d, w_{\epsilon}, \|\cdot\|_{\infty})$ is the $w_{\epsilon}$-packing number of $\mbS^d$ with respect to the infinity-norm $\|\cdot\|_{\infty}$.
    Furthermore, let $\mN_{\epsilon}^{(s)} \subset \mbS^d$ be a $w_{\epsilon}$-separated set of $\mbS^d$ such that $|\mN_{\epsilon}^{(s)}| = M_{\epsilon}^{(s)}$. Then, we define the function class $\mF_{\epsilon}^{(s)} \coloneqq (f_{\bz;\epsilon}^{(s)})_{\bz \in \mN_{\epsilon}^{(s)}}$, where $f_{\bz;\epsilon}^{(s)}: \mbS^d \rightarrow [-2\epsilon, 2\epsilon]$ is defined as $f_{\bz;\epsilon}^{(s)}(\bx) = g_{\epsilon}(\bx - \bz)$. Here, $g_{\epsilon}$ is the function defined in \lemref{lem:prop_g} with the dimension $d + 1$.
\end{definition}

In addition, we also define the partition of $\mbS^d$ as follows.

\begin{definition}[Partition of hypersphere]
    \label{def:partition_hs}
    Let $\mN_{\epsilon}^{(s)}$ be the $w_{\epsilon}$-separated set of $\mbS^d$ defined in \lemref{def:finite_f_hs}. For any $\bz \in \mN_{\epsilon}^{(s)}$, let us define the partition $(\mR_{\bz;\epsilon}^{(s)})_{\bz \in \mN_{\epsilon}^{(s)}}$ of $\mbS^d$ as the Voronoi decomposition regarding the infinity-norm $\|\cdot\|_{\infty}$. Namely, $(\mR_{\bz;\epsilon}^{(s)})_{\bz \in \mN_{\epsilon}^{(s)}}$ satisfies $\bz \in \argmin_{\tbz \in \mN_{\epsilon}^{(s)}} \|\tbz - \bx\|_{\infty}$ for any $\bx \in \mR_{\bz;\epsilon}^{(s)}$.
\end{definition}

Here, note the following facts:
\begin{itemize}
    \item From the definition of $w_{\epsilon}$-separated set, any two distinct regions in $(\mR_{\bz;\epsilon}^{(s)})$ do not share an $\epsilon$-optimal region on $\mbS^d$. Furthermore, $2\epsilon = f_{\bz;\epsilon}^{(s)}(\bz) = \max_{\bx \in \mbS^d} f_{\bz;\epsilon}^{(s)}(\bx)$.
    \item As with the case where the input domain is $\mX = [0, 1]^d$, the RKHS norm of $f_{\bz;\epsilon}^{(s)}$ is less than or equal to $B$, since the input shift and restriction do not increase the RKHS norm~\citep{aronszajn1950theory}.
    \item Since $P(\mbS^d, w_{\epsilon}, \|\cdot\|_{2}) = \Theta(w_{\epsilon}^{-d})$~\citep[e.g., Corollary 4.2.13 in ][]{vershynin2018high}, 
    the packing number $P(\mbS^d, w_{\epsilon}, \|\cdot\|_{\infty})$ also satisfies $P(\mbS^d, w_{\epsilon}, \|\cdot\|_{\infty}) = \Theta(w_{\epsilon}^{-d})$ from the norm equivalence of $\|\cdot\|_{\infty}$ and $\|\cdot\|_{2}$.
\end{itemize}

By noting the above properties and applying the same proof in \citep{cai2021on}, we can obtain the desired statements. 
Here, the remaining thing we need to show is the following lemma, which replaces \lemref{lem:sup_lem}.

\begin{lemma}[Replacement of \lemref{lem:sup_lem}]
    \label{lem:sup_lem_hs}
    Let $\mF_{\epsilon}^{(s)}$, $\mN_{\epsilon}^{(s)}$, and $(\mR_{\bz; \epsilon}^{(s)})_{\bz \in \mN_{\epsilon}^{(s)}}$ be the function class, the finite index set, and the decomposition of $\mbS^d$ in Definitions~\ref{def:finite_f_hs} and \ref{def:partition_hs}. Then, we have
    \begin{equation}
        \forall \bz \in \mN_{\epsilon}^{(s)},~\sum_{\tbz \in \mN_{\epsilon}^{(s)}} \sup_{\bx \in \mR_{\bz;\epsilon}^{(s)}} \rbr{f_{\tbz;\epsilon}^{(s)}(\bx)}^2 \leq C_d \epsilon^2,
    \end{equation}
where $C_d > 0$ is a constant depending on $d$.
\end{lemma}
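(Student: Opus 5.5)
We reduce the claim to the decay of the profile $g_{\epsilon}$ in $\R^{d+1}$, following the argument behind \lemref{lem:sup_lem} in \citep{scarlett2017lower}. Fix $\bz \in \mN_{\epsilon}^{(s)}$. For each $\tbz$, since $f_{\tbz;\epsilon}^{(s)}(\bx) = g_{\epsilon}(\bx - \tbz)$, we bound $\sup_{\bx \in \mR_{\bz;\epsilon}^{(s)}} |g_{\epsilon}(\bx - \tbz)|$ in terms of the distance from $\tbz$ to the cell $\mR_{\bz;\epsilon}^{(s)}$. The construction of $h$ in \citep{scarlett2017lower} is the inverse Fourier transform of a compactly supported bump function, so it is a Schwartz function. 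Hence for any fixed $m$ there is a constant $c_m$ with $|h(\bm{u})| \leq c_m (1 + \|\bm{u}\|_\infty)^{-m}$. This gives
\begin{equation*}
|g_{\epsilon}(\bx - \tbz)| \leq 2\epsilon \frac{c_m}{h(\bm{0})} \rbr{1 + \frac{\zeta \|\bx - \tbz\|_\infty}{w_{\epsilon}}}^{-m}.
\end{equation*}
We choose $m = d+1$ (any $m > d/2$ suffices after squaring; taking $2m > d$ with margin is safest).

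Next we bound the cells and count nearby centers. By maximality of the $w_{\epsilon}$-packing, every $\bx \in \mbS^d$ has some $\tbz$ with $\|\bx - \tbz\|_\infty \leq w_{\epsilon}$. The Voronoi cell $\mR_{\bz;\epsilon}^{(s)}$ is therefore contained in the $\|\cdot\|_\infty$-ball of radius $w_{\epsilon}$ around $\bz$. For $\bx$ in this cell, the triangle inequality gives $\|\bx - \tbz\|_\infty \geq \|\bz - \tbz\|_\infty - w_{\epsilon}$. We then group the $\tbz$ into shells $S_\ell \coloneqq \{\tbz : \ell w_{\epsilon} < \|\tbz - \bz\|_\infty \leq (\ell+1) w_{\epsilon}\}$ for $\ell \in \N$. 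The points of $\mN_{\epsilon}^{(s)}$ are $w_{\epsilon}$-separated and lie on the $d$-dimensional manifold $\mbS^d$. A volume argument, placing disjoint balls of radius $w_{\epsilon}/2$ intersected with $\mbS^d$, whose surface measure is $\gtrsim w_{\epsilon}^d$ for small $w_{\epsilon}$, inside the $(\ell+2)w_{\epsilon}$-neighbourhood of $\bz$ on $\mbS^d$, whose measure is $\lesssim ((\ell+2)w_{\epsilon})^d$, yields $|S_\ell| \leq C (\ell+1)^{d-1}$, and in any case $|S_\ell| \leq C(\ell+2)^d$. We use the cruder bound; the decay exponent absorbs it.

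Combining, the shells with $\ell \leq 2$ contribute at most $C' \epsilon^2$ using $|f| \leq 2\epsilon$. For $\ell \geq 2$, each term is at most $C\epsilon^2 (1 + \zeta(\ell-1))^{-2m}$, so
\begin{equation*}
\sum_{\tbz} \sup_{\bx \in \mR_{\bz;\epsilon}^{(s)}} \rbr{f_{\tbz;\epsilon}^{(s)}(\bx)}^2 \leq C'\epsilon^2 + C\epsilon^2 \sum_{\ell \geq 2} (\ell+2)^d (1+\zeta(\ell-1))^{-2m}.
\end{equation*}
The series converges once $2m > d+1$, so we fix $m = d+1$. The main obstacle we anticipate is the counting step on the curved sphere, specifically the uniform lower bound on the surface measure of $\|\cdot\|_\infty$-balls of radius $w_{\epsilon}/2$ intersected with $\mbS^d$. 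We would handle it by norm equivalence with Euclidean geodesic caps, whose measure is $\Theta(r^d)$ uniformly for $r \leq 1$, valid since $w_{\epsilon}$ is small when $\epsilon/B$ is small. The resulting constant depends only on $d$ (and on $\zeta$ and $h$, which are absolute), giving $C_d$.
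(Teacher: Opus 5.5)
Your proposal is correct and follows essentially the paper's proof. Both arguments use maximality of the packing to confine each Voronoi cell to a $w_\epsilon$-ball around $\bz$, group the remaining centers into distance shells, bound the number of centers per shell by a volume/packing argument, and use the super-polynomial (Schwartz) decay of $h$ to make the shell series summable.

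The differences are cosmetic: you center the shells at $\bz$ in $\|\cdot\|_\infty$ and count via surface measure on $\mbS^d$, getting $O((\ell+2)^d)$ centers per shell, whereas the paper uses distance to the cell and a cruder ambient $\R^{d+1}$ count $O(i^{d+1})$ paired with decay exponent $(d+3)/2$. Also remember to add the $\tbz=\bz$ term, which lies in none of your shells but contributes at most $4\epsilon^2$.
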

\begin{proof}
    Fix any $\bz \in \mN_{\epsilon}^{(s)}$ and $\epsilon > 0$. 
    Here, we have
    \begin{align}
        \label{eq:sum_sup_first_hs}
        \sum_{\tbz \in \mN_{\epsilon}^{(s)}} \sup_{\bx \in \mR_{\bz;\epsilon}^{(s)}} \rbr{f_{\tbz;\epsilon}^{(s)}(\bx)}^2
        = \frac{4\epsilon^2}{h^2(\bm{0})} \sum_{\tbz \in \mN_{\epsilon}^{(s)}} \sup_{\bx \in \mR_{\bz;\epsilon}^{(s)}}  h^2 \rbr{\frac{\zeta(\bx - \tbz)}{w_{\epsilon}}}.
    \end{align}
    Then, we decompose the summation based on the distance from $\mR_{\bz;\epsilon}^{(s)}$ as follows:
    \begin{align}
        \sum_{\tbz \in \mN_{\epsilon}^{(s)}} \sup_{\bx \in \mR_{\bz;\epsilon}^{(s)}}  h^2 \rbr{\frac{\zeta(\bx - \tbz)}{w_{\epsilon}}}
        &= \sup_{\bx \in \mR_{\bz;\epsilon}^{(s)}}  h^2 \rbr{\frac{\zeta(\bx - \bz)}{w_{\epsilon}}} + \sum_{i \in \N} \sum_{\tbz \in \mZ_i^{(s)}} \sup_{\bx \in \mR_{\bz;\epsilon}^{(s)}}  h^2 \rbr{\frac{\zeta(\bx - \tbz)}{w_{\epsilon}}} \\
        &\leq h^2(\bm{0}) + \sum_{i \in \N} |\mZ_i^{(s)}| \sup_{\tbz \in \mZ_i^{(s)}} \sup_{\bx \in \mR_{\bz;\epsilon}^{(s)}}  h^2 \rbr{\frac{\zeta(\bx - \tbz)}{w_{\epsilon}}},
    \end{align}
    where $\mZ_i^{(s)} = \{\tilde{\bz} \in \mN_{\epsilon}^{(s)} \mid i w_{\epsilon}/2 < \inf_{\bx \in \mR_{\bz;\epsilon}^{(s)}} \|\bx - \tbz\|_{2} \leq (i+1) w_{\epsilon}/2\}$.
    Here, from \lemref{lem:pack_hs_org}, we have
    \begin{align}
        \sum_{\tbz \in \mN_{\epsilon}^{(s)}} \sup_{\bx \in \mR_{\bz;\epsilon}^{(s)}}  h^2 \rbr{\frac{\zeta(\bx - \tbz)}{w_{\epsilon}}}
        &\leq h^2(\bm{0}) + \tilde{C}_d \sbr{\sup_{\tbz \in \mZ_0^{(s)}} \sup_{\bx \in \mR_{\bz;\epsilon}^{(s)}}  h^2 \rbr{\frac{\zeta(\bx - \tbz)}{w_{\epsilon}}} + \sum_{i \in \N_+} i^{d+1} \sup_{\tbz \in \mZ_i^{(s)}} \sup_{\bx \in \mR_{\bz;\epsilon}^{(s)}}  h^2 \rbr{\frac{\zeta(\bx - \tbz)}{w_{\epsilon}}}} \\
        \label{eq:sum_ub_hs}
        &\leq h^2(\bm{0}) + \tilde{C}_d \sbr{  h^2 \rbr{\bm{0}} + \sum_{i \in \N_+} i^{d+1} \sup_{\tbz \in \mZ_i^{(s)}} \sup_{\bx \in \mR_{\bz;\epsilon}^{(s)}}  h^2 \rbr{\frac{\zeta(\bx - \tbz)}{w_{\epsilon}}}},
    \end{align}
    where $\tilde{C}_d > 0$ is some constant depending only on $d$.
    Furthermore, as described in \citep{scarlett2017lower}, $h(\cdot)$ decreases faster than any finite power of $1/\|\bx\|_2$ as $\|\bx\|_2 \rightarrow \infty$. This implies that, for any $c > 0$ and $d \in \N_+$, there exists constant a $C_{c,d} > 0$ such that $\forall \bx \in \{\tbx \in \R^{d+1} \mid \|\tbx\|_{2} \geq c\},~|h(\bx)| \leq C_{c,d}/\|\bx\|_2^{(d+3)/2}$. We set $c > 0$ as $c = \zeta/2$. Then, since $\frac{\|\bx - \tbz\|_2 \zeta}{w_{\epsilon}} > \frac{(w_{\epsilon}/2) \zeta}{w_{\epsilon}}= \zeta/2$ for any $\tbz \in \mZ_i^{(s)}$ and $\bx \in \mR_{\bz;\epsilon}^{(s)}$ with $i \geq 1$, we have the following inequalities for some constant $\hat{C}_d > 0$:
    \begin{align}
        \sum_{i \in \N_+} i^{d+1} \sup_{\tbz \in \mZ_i^{(s)}} \sup_{\bx \in \mR_{\bz;\epsilon}^{(s)}}  h^2 \rbr{\frac{\zeta(\bx - \tbz)}{w_{\epsilon}}}
        &\leq \hat{C}_d \sum_{i \in \N_+} i^{d+1} \sup_{\tbz \in \mZ_i^{(s)}} \sup_{\bx \in \mR_{\bz;\epsilon}^{(s)}} \rbr{\frac{1}{(\|\bx - \tbz\|_2\zeta/w_{\epsilon})^{(d+3)/2}}}^2 \\
        &= \hat{C}_d \sum_{i \in \N_+} i^{d+1} \sup_{\tbz \in \mZ_i^{(s)}} \sup_{\bx \in \mR_{\bz;\epsilon}^{(s)}} \rbr{\frac{w_{\epsilon}}{\|\bx - \tbz\|_2\zeta}}^{d+3} \\
        &\leq \hat{C}_d \rbr{\frac{2}{\zeta}}^{d+3} \sum_{i \in \N_+} i^{-2} \\
        &= \frac{\pi^2 \hat{C}_d}{6}  \rbr{\frac{2}{\zeta}}^{d+3},
    \end{align}
    where the second inequality follows from the definition of $\mZ_i^{(s)}$.
    By combining the above inequality with Eqs.~\eqref{eq:sum_sup_first_hs} and \eqref{eq:sum_ub_hs}, we have
    \begin{equation}
        \sum_{\tbz \in \mN_{\epsilon}^{(s)}} \sup_{\bx \in \mR_{\bz;\epsilon}^{(s)}} \rbr{f_{\tbz;\epsilon}^{(s)}(\bx)}^2
        \leq \epsilon^2 \sbr{ \frac{4}{h^2(\bm{0})} \rbr{h^2(\bm{0}) + \tilde{C}_d h^2(\bm{0}) + \tilde{C}_d \frac{\pi^2 \hat{C}_d}{6}  \rbr{\frac{2}{\zeta}}^{d+3}}   }.
    \end{equation}
    Note that the term in the square brackets in the above inequality only depends on $d$; thus, we obtain the desired statement.
\end{proof}

\section{Lower Bounds for Expected Regret}
\label{sec:expected_lbs}
In this section, we present the following corollaries, which provide lower bounds on the expected regret.

\begin{corollary}[Expected simple regret lower bound for the SE kernel on the hypersphere]
    \label{cor:expected_sr_lower_bound}
    Fix any $\epsilon \in (0, 1/2)$, $B > 0$, and $T \in \N_+$. Let us consider the GP bandit problem on $\mX = \mbS^d$ with the SE kernel described in \secref{sec:basic_prob}. Suppose that $d \in \N_+$ and $\theta > 0$ are fixed constants. Furthermore, suppose that there exists an algorithm that achieves $\Ep[r_T] \leq \epsilon$ for any $f \in \mH_k(\mX, B)$.
    Then, if $\epsilon/B$ is sufficiently small, it is necessary that 
    \begin{equation}
        T = \Omega\rbr{\frac{\sigma^2}{\epsilon^2} \rbr{\ln \frac{B}{\epsilon}}^{d} \rbr{\ln \ln \frac{B}{\epsilon}}^{-d}}.
    \end{equation}
\end{corollary}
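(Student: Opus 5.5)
We reduce the expected-regret statement to the high-probability statement of \thmref{thm:improved_sr_lower_bound} by Markov's inequality, as done in \citep{cai2021on}. Suppose an algorithm achieves $\Ep[r_T] \leq \epsilon$ for every $f \in \mH_k(\mX, B)$. Since $r_T \geq 0$, Markov's inequality gives $\Pr_f(r_T > 4\epsilon) \leq \Ep_f[r_T]/(4\epsilon) \leq 1/4$ for every such $f$. Hence the same algorithm achieves $r_T \leq \epsilon' \coloneqq 4\epsilon$ with probability at least $1 - \delta$, where $\delta = 1/4 \in (0, 1/3)$.

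We then apply \thmref{thm:improved_sr_lower_bound} with accuracy $\epsilon'$ and confidence $\delta = 1/4$. Two preconditions must be checked. First, the theorem requires $\epsilon' \in (0, 1/2)$. This is not automatic from $\epsilon \in (0,1/2)$. However, inspecting the proof, the constraint $\epsilon<1/2$ is only cosmetic: what matters is that $\epsilon/B$ is sufficiently small. If needed, we therefore shrink the threshold on $\epsilon/B$ so that $4\epsilon/B$ lies below the threshold required by \thmref{thm:improved_sr_lower_bound}, or we simply reuse its proof with $\epsilon'$ in place of $\epsilon$. Second, $\ln(1/\delta) = \ln 4$ is an absolute constant, so it is absorbed into the hidden constant.

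The theorem then yields
\[
T = \Omega\rbr{\frac{\sigma^2}{\epsilon'^2} \rbr{\ln \frac{B}{\epsilon'}}^{d} \rbr{\ln \ln \frac{B}{\epsilon'}}^{-d}},
\]
and it remains to replace $\epsilon'$ by $\epsilon$. The factor $1/\epsilon'^2 = 1/(16\epsilon^2)$ costs only a constant. For the logarithmic factors, $\ln(B/(4\epsilon)) = \ln(B/\epsilon) - \ln 4 \geq \tfrac12 \ln(B/\epsilon)$ once $\epsilon/B$ is sufficiently small. Also $\ln\ln(B/(4\epsilon)) \leq \ln\ln(B/\epsilon)$, because $x \mapsto \ln\ln x$ is increasing. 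Together these give $(\ln \frac{B}{\epsilon'})^d (\ln\ln\frac{B}{\epsilon'})^{-d} \geq 2^{-d} (\ln \frac{B}{\epsilon})^d (\ln\ln\frac{B}{\epsilon})^{-d}$, and the stated bound follows.

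The main obstacle is the bookkeeping with the thresholds "sufficiently small" and the range $\epsilon \in (0,1/2)$ when passing from $\epsilon$ to $4\epsilon$. The safe route is to let the threshold constant in this corollary be the theorem's threshold divided by $4$. Everything else is a direct application of Markov's inequality.
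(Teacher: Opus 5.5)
Your proposal is correct and follows essentially the same route as the paper: a Markov-type reduction to \thmref{thm:improved_sr_lower_bound} at a constant multiple of the accuracy and a constant confidence level, with the constants then absorbed. The paper uses $6\epsilon$ and $\delta = 1/6$ where you use $4\epsilon$ and $\delta = 1/4$, and it leaves implicit the bookkeeping you spell out (the logarithmic factors at $\epsilon'$ versus $\epsilon$, and the range condition $\epsilon' < 1/2$).
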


\begin{proof}
    Fix any algorithm and set $\delta = 1/6$. If $\Ep[r_T] \leq \epsilon$, the probability $\Pr(r_T > 6\epsilon)$ must be less than $\delta$; otherwise, $\Ep[r_T] \geq \Ep[\1\{r_T > 6\epsilon\} r_T] > \Pr(r_T \geq 6\epsilon)6\epsilon = \epsilon$, which is a contradiction. Thus, if $\Ep[r_T] \leq \epsilon$ holds for any $f \in \mH_k(\mX, B)$, then, $r_T \leq 6\epsilon$ holds with probability at least $1 - \delta$ for any $f \in \mH_k(\mX, B)$. By applying \thmref{thm:improved_sr_lower_bound} with $6\epsilon$ and adjusting the constant factor, we obtain the desired statement.
\end{proof}

\begin{corollary}[Expected cumulative regret lower bound for SE kernel in hypersphere]
    \label{cor:expected_cr_lower_bound}
    Let us consider the GP bandit problem on $\mX = \mbS^d$ 
    with the SE kernel described in \secref{sec:basic_prob}. Suppose that $d \in \N_+$ and $\theta > 0$ are fixed constants, and $B$, $\sigma^2$, $\delta > 0$, and $T \in \N_+$ satisfy $\sigma^2/B^2  = O(T)$ with sufficiently small implied constant. 
    Then, for any algorithm, there exists a function $f \in \mH_k(\mX, B)$ such that the following inequality holds:
    \begin{align*}
     \Ep[R_T] = \Omega\rbr{\sqrt{T \sigma^2 \rbr{\ln \frac{B^2 T}{\sigma^2}}^{d} \rbr{\ln \ln \frac{B^2 T}{\sigma^2}}^{-d}}}.
    \end{align*}
\end{corollary}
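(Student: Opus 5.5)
The plan is to derive this from \thmref{thm:improved_cr_lower_bound} by applying it with a fixed constant $\delta$, in the same way as \corref{cor:expected_sr_lower_bound} is derived from \thmref{thm:improved_sr_lower_bound}. Fix any algorithm and set $\delta = 1/6$ (any fixed constant in $(0,1/3)$ works). With this choice, $\ln(1/\delta)$ is an absolute constant. Hence the assumption $\sigma^2/B^2 = O(T)$ with a sufficiently small implied constant implies the assumption $\sigma^2 \ln(1/\delta)/B^2 = O(T)$ of \thmref{thm:improved_cr_lower_bound}, after shrinking that implied constant by the factor $\ln 6$.

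\thmref{thm:improved_cr_lower_bound} then gives a function $f \in \mH_k(\mX, B)$ and an event $E$ with $\Pr_f(E) \geq \delta = 1/6$. On $E$, we have $R_T \geq c\sqrt{T\sigma^2 (\ln \frac{B^2T}{\sigma^2 \ln 6})^d (\ln\ln \frac{B^2T}{\sigma^2\ln 6})^{-d}}$ for a constant $c$ depending only on $d,\theta$. Since $R_T \geq 0$ always (each instantaneous regret is nonnegative), Markov-type reasoning gives $\Ep_f[R_T] \geq \Pr_f(E)\cdot(\text{that bound}) \geq \frac{1}{6}(\text{that bound})$.

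The remaining step, which is the only mildly delicate one, is to replace $\ln\frac{B^2T}{\sigma^2\ln 6}$ by $\ln\frac{B^2T}{\sigma^2}$ up to constant factors. Write $u = B^2T/\sigma^2$. The smallness assumption ensures $u \geq C$ for a large constant $C$, so $\ln(u/\ln 6) = \ln u - \ln\ln 6 \geq \frac12 \ln u$ once $u$ is large enough. For the double logarithm, the map $x \mapsto x^d(\ln x)^{-d}$ is increasing for $x \geq e$. Therefore
\[
(\ln(u/\ln 6))^d(\ln\ln(u/\ln 6))^{-d} \geq \left(\tfrac12\ln u\right)^d\left(\ln(\tfrac12 \ln u)\right)^{-d} \geq 2^{-d}(\ln u)^d(\ln\ln u)^{-d},
\]
where the last inequality uses $\ln(\frac12\ln u) \leq \ln\ln u$. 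This holds provided $\frac12 \ln u \geq e$, which is again guaranteed by taking the implied constant small. Absorbing $2^{-d/2}$ and $1/6$ into the hidden constant yields the claimed bound on $\Ep[R_T]$.
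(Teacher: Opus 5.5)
Your proposal is correct and follows the same route as the paper, which simply fixes a constant $\delta\in(0,1/3)$ (e.g., $\delta=1/6$) in Theorem~\ref{thm:improved_cr_lower_bound}. You also spell out two steps the paper leaves implicit: converting the probability-$\delta$ bound to an expectation bound via $R_T\ge 0$, and absorbing the constant $\ln 6$ inside the (double) logarithms using the monotonicity of $x/\ln x$ on $[e,\infty)$.
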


\begin{proof}
    We immediately obtain the desired statement by setting a constant $\delta \in (0, 1/3)$ (e.g., $\delta = 1/6$) in \thmref{thm:improved_cr_lower_bound}.
\end{proof}

\section{Definition of New Function Class and Partition of Hypersphere}
\label{sec:detail_new_function_class}
In this section, we define our new function class and partition of $\mbS^d$. 
As described in Remark~\ref{rem:eps_opt}, we define the function class and the partition so that the $\wnew/2$-neighborhoods of $\bz$ and $-\bz$ do not overlap, as follows.

\begin{definition}[New function class]
    \label{def:new_func_class}
     Let $f_{\bz;\epsilon}^{\mathrm{new}}$ and $w_{\epsilon}^{\mathrm{new}}$ be the function and parameter defined in \lemref{lem:prop_fnew}, respectively.
     Define $\mbS_+^d(\cdot)$ as $\mbS_+^d(\wnew) = \{\tbx \in \mbS^d \mid \rho(\tbx, \mbS_{\mathrm{equator}}^d) > \wnew/2~~\mathrm{and}~~\tilde{x}_{d+1} > 0\}$, where $\rho(\tbx, \mbS_{\mathrm{equator}}^d) = \inf_{\bx \in \mbS_{\mathrm{equator}}^d} \rho(\tbx, \bx)$ and $\mbS_{\mathrm{equator}}^d = \{\bx \in \mbS^d \mid x_{d+1} = 0\}$. Define $M_{\epsilon}^{\mathrm{new}}$ as $M_{\epsilon}^{\mathrm{new}} = P(w_{\epsilon}^{\mathrm{new}}, \mbS_+^d(\wnew), \rho)$, where $P(w_{\epsilon}^{\mathrm{new}}, \mbS_+^d(\wnew), \rho)$ is the $w_{\epsilon}^{\mathrm{new}}$-packing number of $\mbS_+^d(\wnew)$ regarding the geodesic distance $\rho$.
    Furthermore, let $\mN_{\epsilon}^{\mathrm{new}} \subset \mX$ be a $w_{\epsilon}^{\mathrm{new}}$-separated set of $\mbS_+^d(\wnew)$ such that $|\mN_{\epsilon}^{\mathrm{new}}| = M_{\epsilon}^{\mathrm{new}}$. Then, we define the function class $\mF_{\epsilon}^{\mathrm{new}}$ as $\mF_{\epsilon}^{\mathrm{new}} = (f_{\bz;\epsilon}^{\mathrm{new}})_{\bz \in \mN_{\epsilon}^{\mathrm{new}}}$.
\end{definition}

\begin{definition}[New Partition of $\mbS^d$]
    \label{def:new_partition}
     Let $\Nnew$ be the $w_{\epsilon}^{\mathrm{new}}$-separated set of $\mbS_+^d(\wnew)$ defined in \defref{def:new_func_class}. For any $\bz \in \Nnew$, let us define the partition $(\mR_{\bz}^+)_{\bz \in \Nnew}$ of $\mbS_+^d \coloneqq \{\bx \in \mbS^d \mid x_{d+1} \geq 0\}$ as the Voronoi decomposition regarding the geodesic distance $\rho$. Namely, $(\mR_{\bz}^+)_{\bz \in \Nnew}$ satisfies $\bz \in \argmin_{\tbz \in \Nnew} \rho(\tbz, \bx)$ for any $\bx \in \mR_{\bz}^+$. Furthermore, we define $\mR_{\bz}^-$ as $\mR_{\bz}^- \coloneqq \{-\bx \mid \bx \in \mR_{\bz}^+\} \setminus \mbS_{\mathrm{equator}}^d$. 
     Then, we define the partition $(\Rnew)_{\bz \in \Nnew}$ of $\mbS^d$ as $\Rnew = \mR_{\bz}^+ \cup \mR_{\bz}^-$.
\end{definition}

From the above construction, we can observe the following facts.
\begin{itemize}
    \item For any $\bz \in \Nnew$, the region $\Rnew$ contains two $\wnew/2$-balls centered at $\bz$ and $-\bz$. This implies that, for any $\bz \in \Nnew$, the region $\Rnew$ contains the $\epsilon$-optimal region of $\fnew$, while the region $\Rnew$ does not include any $\epsilon$-optimal point of $f_{\tbz;\epsilon}^{\mathrm{new}}$ for $\tbz \neq \bz$.
    \item Provided that $\epsilon/B$ is sufficiently small, $\wnew$ is also sufficiently small. Thus, for sufficiently small $\epsilon/B$, we have  $P(\mbS_+^d(w_{d,\theta}), \wnew, \|\cdot\|_2) \leq P(\mbS_+^d(\wnew), \wnew, \|\cdot\|_2) \leq P(\mbS_+^d, \wnew, \|\cdot\|_2)$ for some small constant $w_{d,\theta} > 0$. Since $P(\mbS_+^d, \wnew, \|\cdot\|_2) = \Theta((\wnew)^{-d})$ and $P(\mbS_+^d(w_{d,\theta}), \wnew, \|\cdot\|_2) = \Theta((\wnew)^{-d})$, we have $P(\mbS_+^d(\wnew), \wnew, \|\cdot\|_2) = \Theta((\wnew)^{-d})$. By noting the relation $\forall \bz, \tbz \in \mbS^d,~\frac{2}{\pi}\rho(\bz, \tbz) \leq \|\bz - \tbz\|_2 \leq \rho(\bz, \tbz)$, we also have $\Mnew = P(\mbS_+^d(\wnew), \wnew, \rho) = \Theta((\wnew)^{-d})$ for sufficiently small $\epsilon/B$.
\end{itemize}

From the first property, we can apply \lemref{lem:relating_two} using the event $\mA$ defined in the proof in \citep{cai2021on} (see the proofs in Appendix~\ref{sec:proof_main_theorem} for details). Furthermore, from the second property, we can observe that $\Mnew$ increases with the same order as $M_{\epsilon}$ in the original proof.

\section{Proofs of Lemmas in \secref{sec:proof_sketch}}
\label{sec:proof_overview_detail}
\subsection{Proof of Properties 1 and 2 of \lemref{lem:prop_fnew} for $d = 1$}
\begin{proof}
    As described in \secref{sec:prop_newf}, when $d = 1$, the function $b_{N, \bz}$ is written as follows:
\begin{equation}
    b_{N,\bz}(\bx) = \frac{1}{2\pi} \sbr{1 + 2 \sum_{n=1}^N \cos(n \arccos \bx^{\top}\bz)}.
\end{equation}
\paragraph{Regarding Property 1.} Since $-N \leq \sum_{n=1}^N \cos(n \arccos \bx^{\top}\bz) \leq N$, $\forall \bx \in \mbS^d,~|b_{N, \bz}(\bx)| \leq \frac{1 + 2N}{2\pi}$. Furthermore, $b_{N, \bz}(\bz) = \frac{1 + 2N}{2\pi}$, which implies that $\bz$ is a maximizer of $b_{N, \bz}$. Therefore, the center point $\bz$ is also a maximizer of $f_{\epsilon, N, \bz}(\bx)$, and $f_{\epsilon, N, \bz}(\bz) = 2\epsilon$ holds. Furthermore, 
\begin{equation}
    |f_{\epsilon, N, \bz}(\bx)| = 2\epsilon \frac{|b_{N, \bz}(\bx)|}{|b_{N, \bz}(\bz)|} \leq 2\epsilon.
\end{equation}

\paragraph{Regarding Property 2.} 
    Let us define $\tilde{b}_N: [0, \pi] \rightarrow \R$ as $\tilde{b}_N(t) = 1 + 2 \sum_{n=1}^N \cos(nt)$.
    From the definition of $f_{\epsilon, N, \bz}$ and $b_{N, \bz}$, it is enough to show that $\tilde{b}_N(t) \leq \tilde{b}_N(0)/2$ holds for all $t \in [\pi/N, \pi]$ and $N \geq 1$ (i.e., the implied constant of $w_{\epsilon}^{\mathrm{new}}$ can be taken as $2\pi$). Here, for $t > 0$, the following identity of the Dirichlet kernel is known~\citep{bruckner1997real}:
    \begin{equation}
        \tilde{b}_N(t) = \frac{\sin\rbr{\rbr{N + \frac{1}{2}}t}}{\sin\rbr{\frac{t}{2}}}.
    \end{equation}
    Thus, for any $t \in [\pi/N, \pi]$, we obtain the desired inequality as follows:
    \begin{align}
        \tilde{b}_N(t) \tilde{b}_N(0)^{-1}
        &= \frac{\sin\rbr{\rbr{N + \frac{1}{2}}t}}{(1 + 2N)\sin\rbr{\frac{t}{2}}} \\
        &\leq \frac{1}{(1 + 2N)\sin\rbr{\frac{\pi}{2N}}} \\
        &\leq \frac{1}{(1 + 2N) \frac{\pi}{2N} \frac{2}{\pi}}  \\
        &\leq \frac{1}{2},
    \end{align}
    where the first inequality follows from $\sin\rbr{\rbr{N + \frac{1}{2}}t} \leq 1$ and $\sin\rbr{\frac{t}{2}} \geq \sin\rbr{\frac{\pi}{2N}} > 0$, and the second inequality follows from the inequality: $\forall x \in [0, \pi/2], \sin(x) \geq \frac{2}{\pi} x$.
\end{proof}

\begin{remark}
As shown in the above proof, if we assume $d = 1$, we can show that the $\epsilon$-optimal region is completely contained in the neighborhood of $\bz$. This is not the case in the general proof for $d \geq 1$ in the next subsection.
\end{remark}

\subsection{Proof of Properties 1 and 2 of \lemref{lem:prop_fnew} for general $d \geq 1$}
\label{sec:proof_general_d}
\begin{proof}
    As with the proof for $d = 1$, we first rewrite $b_{N,\bz}$ using the addition theorem as follows:
    \begin{equation}
        b_{N,\bz}(\bx) = \sum_{n=0}^N \frac{N_{n, d+1}}{|\mbS^d|} P_{n,d+1}(\bx^{\top}\bz).
    \end{equation}
    For $d \geq 2$, Legendre polynomial $P_{n,d+1}$ is written in terms of the Gegenbauer polynomial $C_{n}^{(\eta_d)}$ as follows (Lemma~\ref{lem:gen_leg}):
    \begin{equation}
        P_{n,d+1}(t) = \frac{n! (d-2)!}{(n+d-2)!} C_{n}^{(\eta_d)}(t),
    \end{equation}
    where $\eta_d = (d-1)/2$.
    Therefore, we have
    \begin{align}
        b_{N,\bz}(\bx) &= \frac{1}{|\mbS^d|} \sum_{n=0}^N N_{n,d+1} \frac{n! (d-2)!}{(n+d-2)!} C_{n}^{(\eta_d)}(\bx^{\top}\bz) \\
        &= \frac{1}{|\mbS^d|} \sum_{n=0}^N \frac{(2n + d -1) (n+d-2)!}{n! (d-1)!} \frac{n! (d-2)!}{(n+d-2)!} C_{n}^{(\eta_d)}(\bx^{\top}\bz) \\
        &= \frac{1}{|\mbS^d|} \sum_{n=0}^N \frac{2n + d -1}{d-1} C_{n}^{(\eta_d)}(\bx^{\top}\bz) \\
        &= \frac{1}{|\mbS^d|} \sum_{n=0}^N \frac{n + \eta_d}{\eta_d} C_{n}^{(\eta_d)}(\bx^{\top}\bz) \\
        \label{eq:gen_addition}
        &= \frac{1}{|\mbS^d|} \sum_{n=0}^N \sbr{C_{n}^{(\eta_d+1)}(\bx^{\top}\bz) - C_{n-2}^{(\eta_d+1)}(\bx^{\top}\bz)} \\
        \label{eq:gen_final_exp}
        &= \frac{1}{|\mbS^d|} \sbr{C_{N}^{(\eta_d+1)}(\bx^{\top}\bz) + C_{N-1}^{(\eta_d+1)}(\bx^{\top}\bz)},
    \end{align}
    where Eq.~\eqref{eq:gen_addition} follows from $C_{n}^{(\eta_d+1)}(\bx^{\top}\bz) - C_{n-2}^{(\eta_d+1)}(\bx^{\top}\bz) = \frac{n+\eta_d}{\eta_d}C_n^{(\eta_d)}(\bx^{\top}\bz)$~(\lemref{lem:gen_poly_add}). For notational simplicity, we define $C_{-1}^{(\eta_d + 1)}(\bx^{\top}\bz) = C_{-2}^{(\eta_d + 1)}(\bx^{\top}\bz) = 0$ in the above expressions.
    In addition, when $d = 1$, we can confirm that Eq.~\eqref{eq:gen_final_exp} equals to $\frac{1}{2\pi} \frac{\sin ((N + 0.5)\rho(\bx, \bz))}{\sin (\rho(\bx, \bz)/2)}$ (\lemref{lem:gen_dir}), which matches the definition of $b_{N,\bz}(\bx)$ for $d = 1$.
    Thus, for any $d \geq 1$, $b_{N,\bz}(\bx)$ can be written as
    \begin{equation}
        \label{eq:b_gen_rep}
        b_{N,\bz}(\bx) = \frac{1}{|\mbS^d|} \sbr{C_{N}^{(\eta_d+1)}(\bx^{\top}\bz) + C_{N-1}^{(\eta_d+1)}(\bx^{\top}\bz)}.
    \end{equation}
    By using the above expression, we show properties 1 and 2.

    \paragraph{Regarding Property 1.} Property 1 follows from the basic known properties of the Gegenbauer polynomial. 
    Firstly, the absolute value of a Gegenbauer polynomial $|C_N^{(\eta)}(t)|$ attains a maximum at $t = 1$ for any $\eta > 0$~(Lemmas~\ref{lem:gen_vals} and \ref{lem:gen_max}). Secondly, it is known that $C_N^{(\eta)}(1) = \frac{(n + 2\eta - 1)!}{n! (2\eta - 1)!} > 0$ holds~(Lemma~\ref{lem:gen_vals}). Thus, by noting that $\bz^{\top}\bz = 1$ holds, we immediately obtain $\bz \in \argmax_{\bx \in \mbS^d} f_{\epsilon, N, \bz}(\bx)$, $2\epsilon = f_{\epsilon, N, \bz}(\bz)$, and $\forall \bx \in \mbS^d, |f_{\epsilon, N, \bz}(\bx)| \leq 2\epsilon$.

    \paragraph{Regarding Property 2.} 
    By setting $c = 1$ in Lemma~\ref{lem:f_ub}, there exists a constant $C_{d} > 0$ such that the following inequality holds for any $\bx \in \{\tbx \in \mbS^d \mid N^{-1} \leq \rho(\tbx, \bz) \leq \pi - N^{-1}\}$:
    \begin{equation}
        \label{eq:fub_N_prop2}
        |f_{\epsilon, N, \bz}(\bx)| \leq \begin{cases}
            \epsilon C_{d} N^{\eta_d - d} \rho(\bx, \bz)^{-(\eta_d+1)}~~&\mathrm{if}~~\rho(\bx, \bz) \in [N^{-1}, \pi/2], \\
            \epsilon C_{d} N^{\eta_d - d} (\pi - \rho(\bx, \bz))^{-(\eta_d+1)}~~&\mathrm{if}~~\rho(\bx, \bz) \in [\pi/2, \pi - N^{-1}].
            \end{cases}
    \end{equation}
    Here, for $\xi = C_d^{1/(\eta_d+1)} N^{-1}$, we have
    \begin{equation}
        \label{eq:condition_wnew_C}
        \epsilon C_{d} N^{\eta_d - d} \xi^{-(\eta_d+1)} = \epsilon N^{2\eta_d - d + 1} = \epsilon.
    \end{equation}
    Thus, by setting $\wnew/2 = \hat{C}_d N^{-1}$ with a sufficiently large constant such that $\hat{C}_d \geq \max\{C_d^{1/(\eta_d+1)}, 1\}$, we have $|f_{\epsilon, N, \bz}(\bx)| \leq \epsilon$ for any $\bx \in \{\tbx \in \mbS^d \mid \rho(\tbx, \bz) \in [\wnew/2, \pi - \wnew/2]\}$. This implies that the $\epsilon$-optimal region is a subset of $\{\tbx \in \mbS^d \mid \rho(\tbx, \bz) \in [0, \wnew/2) \cup (\pi - \wnew/2, \pi]\}$. By noting $\rho(\bx, \bz) = \pi - \rho(\bx, -\bz)$, we obtain the desired result.
\end{proof}

\subsection{Proof of Property 3 of \lemref{lem:prop_fnew}}
\label{sec:prop3_proof}
\begin{proof}
    From the definition of $b_{N,\bz}$, the function $f_{\epsilon, N, \bz}$ is rewritten as follows:
    \begin{equation}
        f_{\epsilon, N, \bz}(\bx) = \sum_{n=0}^N \sum_{j=1}^{N_{n, d+1}} \alpha_{n, j} \sqrt{\lambda_n} Y_{n, j}(\bx),~~\text{where}~~\alpha_{n, j} = \frac{2\epsilon Y_{n,j}(\bz)}{b_{N,\bz}(\bz) \sqrt{\lambda_n}}.
    \end{equation}
    Thus, by applying Mercer’s representation theorem (\thmref{thm:mercer}), we have
    \begin{equation}
        \|f_{\epsilon, N, \bz}\|_{\mH_{k}(\mbS^d)} = \sqrt{\sum_{n=0}^N \sum_{j=1}^{N_{n, d+1}} \alpha_{n, j}^2} = \frac{2\epsilon}{b_{N,\bz}(\bz)} \sqrt{\sum_{n=0}^N \sum_{j=1}^{N_{n, d+1}} \rbr{\frac{Y_{n,j}(\bz)}{\sqrt{\lambda_n}}}^2}.
    \end{equation}
    Here, from $\lambda_n \geq \underline{\lambda}_n$ and the monotonicity of $\underline{\lambda}_n$, the above equation implies
    \begin{equation}
        \|f_{\epsilon, N, \bz}\|_{\mH_{k}(\mbS^d)} \leq \frac{2\epsilon}{b_{N,\bz}(\bz) \sqrt{\underline{\lambda}_N}} \sqrt{\sum_{n=0}^N \sum_{j=1}^{N_{n, d+1}} Y_{n,j}^2(\bz)} = \frac{2\epsilon}{ \sqrt{\underline{\lambda}_N b_{N,\bz}(\bz)}}.
    \end{equation}
    The last equality follows from the definition of $b_{N,\bz}(\bz) \coloneqq \sum_{n=0}^N \sum_{j=1}^{N_{n, d+1}} Y_{n,j}^2(\bz)$. From the addition theorem, we further obtain the explicit expression for  $b_{N,\bz}(\bz)$ as follows:
    \begin{equation}
        b_{N,\bz}(\bz) = \sum_{n=0}^N \sum_{j=1}^{N_{n, d+1}} Y_{n,j}^2(\bz) = \frac{1}{|\mbS^d|} \sum_{n=0}^N N_{n,d+1} P_{n,d+1}(\bz^{\top} \bz) = \frac{1}{|\mbS^d|} \sum_{n=0}^N N_{n,d+1},
    \end{equation}
    where the last equality uses $P_{n,d+1}(\bz^{\top} \bz) = P_{n, d+1}(1) = 1$ from the definition of $P_{n, d+1}$. Thus, the RKHS norm $\|f_{\epsilon, N, \bz}\|_{\mH_{k}(\mbS^d)}$ satisfies
    \begin{equation}
        \|f_{\epsilon, N, \bz}\|_{\mH_{k}(\mbS^d)} 
        \leq 2\epsilon \sqrt{\frac{|\mbS^d|}{\underline{\lambda}_N \sum_{n=0}^N N_{n,d+1}}} 
        \leq 2\epsilon \sqrt{\frac{|\mbS^d|}{\underline{\lambda}_N}}.
    \end{equation}
    Then, from the definition of $\underline{\lambda}_N$, there exist constants $A_{d,\theta} > 0$ and $a_{d,\theta} > 0$ such that 
    \begin{equation}
        \|f_{\epsilon, N, \bz}\|_{\mH_{k}(\mbS^d)} \leq \epsilon (A_{d,\theta} N)^{a_{d,\theta} N}
    \end{equation}
    for any sufficiently large $N$. Here, we set $\bar{N}$ as follows:
    \begin{equation}
        \bar{N} = \frac{1}{a_{d,\theta}} \rbr{\ln \frac{B}{\epsilon}} \rbr{\ln \ln \frac{B}{\epsilon}}^{-1}.
    \end{equation}
    Then, if $B/\epsilon > 1$, we have
    \begin{align}
        \epsilon (A_{d,\theta} \bar{N})^{a_{d,\theta} \bar{N}} \leq B 
        &\Leftrightarrow  a_{d,\theta} \bar{N} \ln (A_{d,\theta} \bar{N}) \leq \ln \frac{B}{\epsilon} \\
        &\Leftrightarrow  \rbr{\ln \frac{B}{\epsilon}} \rbr{\ln \ln \frac{B}{\epsilon}}^{-1} \ln \rbr{\frac{A_{d,\theta}}{a_{d,\theta}} \rbr{\ln \frac{B}{\epsilon}} \rbr{\ln \ln \frac{B}{\epsilon}}^{-1}} \leq \ln \frac{B}{\epsilon} \\
        &\Leftrightarrow  \rbr{\ln \ln \frac{B}{\epsilon}}^{-1} \sbr{
        \rbr{\ln \ln \frac{B}{\epsilon}} + 
        \ln \rbr{\frac{A_{d,\theta}}{a_{d,\theta}} \rbr{\ln \ln \frac{B}{\epsilon}}^{-1}}} \leq 1 \\
        &\Leftrightarrow  1 +  
        \frac{\ln \rbr{\frac{A_{d,\theta}}{a_{d,\theta}} \rbr{\ln \ln \frac{B}{\epsilon}}^{-1}}}{\ln \ln \frac{B}{\epsilon}} \leq 1 \\
        &\Leftrightarrow  1 + 
        \frac{\ln \frac{A_{d,\theta}}{a_{d,\theta}} - \ln \ln \ln \frac{B}{\epsilon}}{\ln \ln \frac{B}{\epsilon}} \leq 1.
    \end{align}
    Therefore, if $B/\epsilon$ is sufficiently large such that $B/\epsilon > 1$ and $\ln \frac{A_{d,\theta}}{a_{d,\theta}} - \ln \ln \ln \frac{B}{\epsilon} \leq 1$ hold, we have $\|f_{\epsilon, \bar{N}, \bz}\|_{\mH_{k}(\mbS^d)} = \|f_{\bz;\epsilon}^{\mathrm{new}}\|_{\mH_{k}(\mbS^d)} \leq B$.
\end{proof}

\subsection{Proof of \lemref{lem:sup_lem_new}}
\label{sec:sup_lem_new_proof}
\begin{proof}
    Fix any $\bz \in \Nnew$. Then, we decompose the summation based on the geodesic distance from $\Rnew$ as follows:
    \begin{equation}
        \label{eq:decomp_v}
        \sum_{\tilde{\bz} \in \Nnew} \sup_{\bx \in \Rnew} |f_{\tilde{\bz}; \epsilon}^{\mathrm{new}}(\bx)|^2
        = \sup_{\bx \in \Rnew} |f_{\bz;\epsilon}^{\mathrm{new}}(\bx)|^2 + \sum_{i \geq 0} \sum_{\tilde{\bz} \in \mZ_i} \sup_{\bx \in \Rnew} |f_{\tilde{\bz}; \epsilon}^{\mathrm{new}}(\bx)|^2
    \end{equation}
    where $\mZ_i = \{\tilde{\bz} \in \Nnew \mid i w_{\epsilon}^{\mathrm{new}}/2 < \rho(\Rnew, \tilde{\bz}) \leq (i+1) w_{\epsilon}^{\mathrm{new}}/2\}$ with $\rho(\Rnew, \tilde{\bz}) \coloneqq \inf_{\bx \in \Rnew} \rho(\bx, \tbz)$.
    The above equality follows from $\Nnew = \{\bz\} \cup \rbr{\bigcup_{i\geq 0}\mZ_i}$, which is immediately implied by the definitions of $\mZ_i$.
    Furthermore, from \lemref{lem:pack_hs}, we have the following inequalities for some constant $\tilde{C}_d > 0$:
    \begin{align}
        \label{eq:decomp_v2}
        \sum_{i \geq 0} \sum_{\tilde{\bz} \in \mZ_i} \sup_{\bx \in \Rnew} |f_{\tilde{\bz}; \epsilon}^{\mathrm{new}}(\bx)|^2
        &\leq \sum_{i \geq 0} |\mZ_i|
        \sup_{\tilde{\bz} \in \mZ_i}\sup_{\bx \in \Rnew} |f_{\tilde{\bz}; \epsilon}^{\mathrm{new}}(\bx)|^2 \\
        &\leq \tilde{C}_{d} \sup_{\tilde{\bz} \in \mZ_0} \sup_{\bx \in \Rnew} |f_{\tilde{\bz}; \epsilon}^{\mathrm{new}}(\bx)|^2 + \tilde{C}_{d} \sum_{i \geq 1} i^{d-1}
        \sup_{\tilde{\bz} \in \mZ_i}\sup_{\bx \in \Rnew} |f_{\tilde{\bz}; \epsilon}^{\mathrm{new}}(\bx)|^2 \\
        &\leq 4 \tilde{C}_{d}\epsilon^2  + \tilde{C}_{d} \sum_{i \geq 1} i^{d-1}
        \sup_{\tilde{\bz} \in \mZ_i}\sup_{\bx \in \Rnew} |f_{\tilde{\bz}; \epsilon}^{\mathrm{new}}(\bx)|^2.
    \end{align}
    In addition,
    \begin{align}
        \tilde{C}_{d}&\sum_{i \in \N_+} i^{d-1}
        \sup_{\tilde{\bz} \in \mZ_i}\sup_{\bx \in \Rnew} |f_{\tilde{\bz}; \epsilon}^{\mathrm{new}}(\bx)|^2 \\
        \label{eq:fnew_ub_first}
        &\leq \tilde{C}_{d} \sum_{i \in \N_+} i^{d-1}
        \sup_{\tilde{\bz} \in \mZ_i} \sup_{\bx \in \Rnew} \epsilon^2 C_{d} \bar{N}^{2(\eta_d - d)} \min\{\rho(\bx, \tbz), \pi - \rho(\bx, \tbz)\}^{-2(\eta_d+1)} \\
        &= \epsilon^2 \tilde{C}_{d} C_{d} \bar{N}^{2(\eta_d - d)} \sum_{i \in \N_+} i^{d-1}
        \sup_{\tilde{\bz} \in \mZ_i} \sup_{\bx \in \Rnew} \min\{\rho(\bx, \tbz),~\rho(-\bx, \tbz)\}^{-2(\eta_d+1)} \\
        \label{eq:rho_to_dist}
        &\leq \epsilon^2 \tilde{C}_{d} C_{d} \bar{N}^{2(\eta_d - d)} 
        \sum_{i \in \N_+} i^{d-1} \sup_{\tilde{\bz} \in \mZ_i} \rho(\Rnew, \tbz)^{-2(\eta_d+1)} \\
        \label{eq:min_to_exact}
        &\leq \epsilon^2 \tilde{C}_{d} C_{d} \bar{N}^{2(\eta_d - d)} 
        \sum_{i \in \N_+} i^{d-1} \rbr{i w_{\epsilon}^{\mathrm{new}}/2}^{-2(\eta_d+1)} \\
        &\leq \epsilon^2 \tilde{C}_{d} C_{d} 4^{\eta_d+1} \bar{N}^{2(\eta_d - d)} \rbr{w_{\epsilon}^{\mathrm{new}}}^{-2(\eta_d+1)}
        \sum_{i \in \N_+} i^{d-1-2(\eta_d+1)} \\
        \label{eq:w_to_exact}
        &= \epsilon^2 \tilde{C}_{d} C_{d} 4^{\eta_d+1} \bar{N}^{2(\eta_d - d)} \rbr{2\hat{C}_d\bar{N}^{-1}}^{-2(\eta_d+1)}
        \sum_{i \in \N_+} i^{-2} \\
        \label{eq:final_N_i}
        &= \epsilon^2 \tilde{C}_{d} C_{d} 4^{\eta_d+1} \rbr{2\hat{C}_d}^{-2(\eta_d+1)} \frac{\pi^2}{6},
    \end{align}
    where:
    \begin{itemize}
        \item Eq.~\eqref{eq:fnew_ub_first} follows from Eq.~\eqref{eq:fub_N_prop2}. The constant $C_d$ is the same constant as in Eq.~\eqref{eq:fub_N_prop2} (i.e., the constant used in \lemref{lem:f_ub} with $c = 1$).
        Here, note that $\rho(\tilde{\bz}, \bx) > \wnew/2$ and $\rho(\tilde{\bz}, -\bx) > \wnew/2$ hold for any $\tilde{\bz} \in \mZ_i$ and $\bx \in \Rnew$ with $i \in \N_+$, which are the conditions to apply Eq.~\eqref{eq:fub_N_prop2}. In addition, note that $\min\{\rho(\bx, \tbz), \pi - \rho(\bx, \tbz)\} = \rho(\bx, \tbz)$ if $\rho(\bx, \tbz) \leq \pi/2$; otherwise, $\min\{\rho(\bx, \tbz), \pi - \rho(\bx, \tbz)\} = \pi - \rho(\bx, \tbz)$.
        \item Eq.~\eqref{eq:rho_to_dist} holds since $\min\{\rho(\bx, \tbz),~\rho(-\bx, \tbz)\} \geq \rho(\Rnew, \tbz)$, which is implied by $\bx \in \Rnew$ and $-\bx \in \Rnew$ from the definition of $\Rnew$.
        \item Eq.~\eqref{eq:min_to_exact} follows from the definition of $\mZ_i$.
        \item Eq.~\eqref{eq:w_to_exact} follows from the definition of $\wnew$. Here, $\hat{C}_d$ is the constant depending only on $d$. (The constant $\hat{C}_d$ is defined in the sentence below Eq.~\eqref{eq:condition_wnew_C}).
        \item Eq.~\eqref{eq:final_N_i} follows from $\sum_{i \in \N_+} i^{-2} = \pi^2/6$ and $\bar{N}^{2(\eta_d - d) +2(\eta_d + 1)} = \bar{N}^{4\eta_d - 2d + 2} = \bar{N}^{2(d-1) - 2d + 2} = 1$.
    \end{itemize}
    Finally, from Eqs.~\eqref{eq:decomp_v},~\eqref{eq:decomp_v2} and \eqref{eq:final_N_i}, we have
    \begin{align}
        \sum_{\tilde{\bz} \in \Nnew} \sup_{\bx \in \Rnew} |f_{\tilde{\bz}; \epsilon}^{\mathrm{new}}(\bx)|^2
        &= \sup_{\bx \in \Rnew} |f_{\bz;\epsilon}^{\mathrm{new}}(\bx)|^2 + \sum_{i \geq 0} \sum_{\tilde{\bz} \in \mZ_i} \sup_{\bx \in \Rnew} |f_{\tilde{\bz}; \epsilon}^{\mathrm{new}}(\bx)|^2 \\
        &\leq 4\epsilon^2 + 4 \tilde{C}_{d}\epsilon^2  + \epsilon^2 \tilde{C}_{d} C_{d} 4^{\eta_d+1} \rbr{2\hat{C}_d}^{-2(\eta_d+1)} \frac{\pi^2}{6} \\
        &= \epsilon^2 \sbr{4 + 4 \tilde{C}_{d}  + \tilde{C}_{d} C_{d} 4^{\eta_d+1} \rbr{2\hat{C}_d}^{-2(\eta_d+1)} \frac{\pi^2}{6}}.
    \end{align}
    The above inequality is the desired statement.
\end{proof}

\section{Proofs of Theorems in \secref{sec:lb_hs}}
\label{sec:proof_main_theorem}
The proofs in this section follow those provided in \citep{cai2021on} except for the definition of the hard function used in the proofs.

\subsection{Proof of \thmref{thm:improved_sr_lower_bound}}
\begin{proof}
Fix $\epsilon > 0$ and $\bz \in \Nnew$, and consider any algorithm whose simple regret at step $T$ is at most $\epsilon$ with probability at least $1 - \delta$ for any $f \in \mH_k(\mbS^d, B)$. Assume that $\epsilon$ is sufficiently small such that the properties in \lemref{lem:prop_fnew} hold under the RKHS norm upper bound of $B/3$.
Then, for $\tbz \in \Nnew$ with $\tbz \neq \bz$, let us define the functions $f$ and $\tilde{f}$ as $f = \fnew$ and $\tilde{f} = \fnew + 2f_{\tbz;\epsilon}^{\mathrm{new}}$, respectively.
Here, let $\mA$ be the event that the estimated maximizer $\hat{\bx}_T$ is in the region $\Rnew$ (i.e., $\hat{\bx}_T \in \Rnew$). Then, if the algorithm attains the simple regret at most $\epsilon$ for both $f$ and $\tilde{f}$ with probability at least $1 - \delta$, we have $\Pr_{f}(\mA) \geq 1 - \delta$ and $\Pr_{\tilde{f}}(\mA) \leq \delta$. Thus, by leveraging \lemref{lem:relating_two}, we have
\begin{equation}
    \label{eq:relating_two_div}
    \sum_{\bx \in \Nnew} \Ep_{f}[N_{\bx}(T)] \overline{D}_{f, \tf}^{(\bx)} \geq \ln \frac{1}{2.4\delta},
\end{equation}
where $N_{\bx}(T) \coloneqq \sum_{t=1}^T \1\{\bx_t \in \mR_{\bx;\epsilon}^{\mathrm{new}}\}$ is the number of query points within $\mR_{\bx;\epsilon}^{\mathrm{new}}$ up to step $T$, and 
\begin{equation}
    \overline{D}_{f, \tf}^{(\bx)} \coloneqq \sup_{\tbx \in \mR_{\bx;\epsilon}^{\mathrm{new}}} \mathrm{KL}\rbr{\mN(f(\tbx), \sigma^2) \| \mN(\tf(\tbx), \sigma^2)}.
\end{equation}
Since the KL divergence between two Gaussian measures with common variance is given as $\mathrm{KL}\rbr{\mN(f(\tbx), \sigma^2) \| \mN(\tf(\tbx), \sigma^2)} = \frac{(f(\tbx) - \tf(\tbx))^2}{2\sigma^2}$~\citep{scarlett2017lower}, we obtain 
\begin{equation}
    \overline{D}_{f, \tf}^{(\bx)} = \sup_{\tbx \in \mR_{\bx;\epsilon}^{\mathrm{new}}} \frac{2\rbr{f_{\tbz;\epsilon}^{\mathrm{new}}(\tbx)}^2}{\sigma^2}.
\end{equation}
By combining the above equation of $\overline{D}_{f, \tf}^{(\bx)}$ with Eq.~\eqref{eq:relating_two_div}, we have
\begin{equation}
    \frac{2}{\sigma^2}\sum_{\bx \in \Nnew} \Ep_{f}[N_{\bx}(T)] \sup_{\tbx \in \mR_{\bx;\epsilon}^{\mathrm{new}}} \rbr{f_{\tbz;\epsilon}^{\mathrm{new}}(\tbx)}^2 \geq \ln \frac{1}{2.4\delta}.
\end{equation}
Summing the above inequality for all $\tbz \neq \bz$ in $\Nnew$, we obtain
\begin{align}
    &\frac{2}{\sigma^2} \sum_{\substack{\tbz \in \Nnew\\ \tbz \neq \bz}}\sum_{\bx \in \Nnew} \Ep_{f}[N_{\bx}(T)] \sup_{\tbx \in \mR_{\bx;\epsilon}^{\mathrm{new}}} \rbr{f_{\tbz;\epsilon}^{\mathrm{new}}(\tbx)}^2 \geq (\Mnew - 1) \ln \frac{1}{2.4\delta} \\
    \Leftrightarrow
    &\frac{2}{\sigma^2} \sum_{\bx \in \Nnew} \Ep_{f}[N_{\bx}(T)] \sum_{\tbz \in \Nnew; \tbz \neq \bz} \rbr{\sup_{\tbx \in \mR_{\bx;\epsilon}^{\mathrm{new}}} \rbr{f_{\tbz;\epsilon}^{\mathrm{new}}(\tbx)}^2} \geq (\Mnew - 1) \ln \frac{1}{2.4\delta} \\
    \Rightarrow
    &\frac{2C_d \epsilon^2}{\sigma^2} \sum_{\bx \in \Nnew} \Ep_{f}[N_{\bx}(T)] \geq (\Mnew - 1) \ln \frac{1}{2.4\delta} \\
    \Leftrightarrow
    &\frac{2C_d T \epsilon^2}{\sigma^2}\geq (\Mnew - 1) \ln \frac{1}{2.4\delta},
\end{align}
where $C_d > 0$ is some constant depending on $d$.
In the above statement, the third line follows from \lemref{lem:sup_lem_new}, and the last line follows from $\sum_{\bx \in \Nnew} \Ep_{f}[N_{\bx}(T)] = \Ep_{f}[\sum_{\bx \in \Nnew} N_{\bx}(T)] = T$. 
Since $\Mnew = \Theta((\wnew)^{-d}) = \Theta(\rbr{\ln \frac{B}{\epsilon}}^d \rbr{\ln \ln \frac{B}{\epsilon}}^{-d})$, the above inequality implies
\begin{equation}
    T  \geq \Omega\rbr{ \frac{\sigma^2}{\epsilon^2} \rbr{\ln \frac{B}{\epsilon}}^d \rbr{\ln \ln \frac{B}{\epsilon}}^{-d} \rbr{\ln \frac{1}{\delta}}}.
\end{equation}
\end{proof}

\subsection{Proof of \thmref{thm:improved_cr_lower_bound}}
\begin{proof}
Fix $\epsilon > 0$ and $\bz \in \Nnew$, and consider any algorithm whose cumulative regret at step $T$ is at most $T\epsilon/2$ with probability at least $1 - \delta$ for any $f \in \mH_k(\mbS^d, B)$. Assume that $\epsilon$ is sufficiently small such that the properties in \lemref{lem:prop_fnew} hold with the RKHS norm upper bound of $B/3$.
As with the proof of \lemref{thm:improved_sr_lower_bound}, for $\tbz \in \Nnew$ with $\tbz \neq \bz$, we define the functions $f$ and $\tilde{f}$ as $f = \fnew$ and $\tilde{f} = \fnew + 2f_{\tbz;\epsilon}^{\mathrm{new}}$, respectively.
Here, let $\mA$ be the event that the number of query points within the region $\Rnew$ is at least $T/2$ (i.e., $\sum_{t=1}^T \1\{\bx_t \in \Rnew\} \geq T/2$). Then, due to the condition of the algorithm, we have $\Pr_{f}(\mA) \geq 1 - \delta$ and $\Pr_{\tilde{f}}(\mA) \leq \delta$. Thus, by following the same proof strategy of \thmref{thm:improved_sr_lower_bound}, we have
\begin{align}
    \label{eq:cond_T_cr}
    T \geq \frac{(\Mnew - 1)\sigma^2}{2 \epsilon^2 C_d} \ln \frac{1}{2.4\delta},
\end{align}
where $C_d > 0$ is some constant depending on $d$. From the above arguments, 
we observe that if $\Pr_f(R_T \leq \epsilon T/2) \geq 1 -\delta$ holds for any $f \in \mH_k(\mbS^d, B)$, then the inequality in Eq.~\eqref{eq:cond_T_cr} must hold. Thus, by considering the contrapositive statement, if the following inequality holds:
\begin{align}
    \label{eq:T_contra}
    T < \frac{(\Mnew - 1)\sigma^2}{2 \epsilon^2 C_d} \ln \frac{1}{2.4\delta},
\end{align}
then, the inequality $\Pr_f(R_T \leq \epsilon T/2) < 1-\delta$ holds for some $f \in \mH_k(\mbS^d, B)$. This implies that, for any algorithm, if the inequality in Eq.~\eqref{eq:T_contra} holds, there exists some function $f \in \mH_k(\mbS^d, B)$ such that $\Pr_f(R_T > \epsilon T/2) = 1 - \Pr_f(R_T \leq \epsilon T/2) > \delta$. The remaining part of the proof is to adjust $\epsilon$ so that the lower bound becomes as large as possible under the inequality in Eq.~\eqref{eq:T_contra}.  
Since $\Mnew = \Theta((\wnew)^{-d}) = \Theta(\rbr{\ln \frac{B}{\epsilon}}^d \rbr{\ln \ln \frac{B}{\epsilon}}^{-d})$, Eq.~\eqref{eq:T_contra} is implied by
\begin{align}
    \epsilon < \tilde{C}_{d,\theta} \sqrt{\frac{\sigma^2}{T} \rbr{\ln \frac{B}{\epsilon}}^d \rbr{\ln \ln \frac{B}{\epsilon}}^{-d} \rbr{\ln \frac{1}{\delta}}}
\end{align}
for sufficiently small constant $\tilde{C}_{d,\theta} > 0$. To obtain the desired choice of $\epsilon$ from the above inequality, we follow the same arguments provided in Chapter V in \citep{scarlett2017lower}.
We analyze the following equation, such that the above inequality is tight up to a factor of $1/2$:
\begin{equation}
    \label{eq:eps_approx}
    \epsilon = \frac{\tilde{C}_{d,\theta}}{2}\sqrt{\frac{\sigma^2}{T} \rbr{\ln \frac{B}{\epsilon}}^d \rbr{\ln \ln \frac{B}{\epsilon}}^{-d} \rbr{\ln \frac{1}{\delta}}}.
\end{equation}
The above inequality further implies
\begin{equation}
    \label{eq:eps_approx_2}
    \frac{\epsilon}{B} = \frac{\tilde{C}_{d,\theta}}{2} \sqrt{\frac{\sigma^2 \rbr{\ln \frac{1}{\delta}}}{B^2}}\sqrt{\frac{1}{T} \rbr{\ln \frac{B}{\epsilon}}^d \rbr{\ln \ln \frac{B}{\epsilon}}^{-d}}.
\end{equation}
Thus, we can set $\epsilon/B$ sufficiently small by taking a sufficiently small implied constant in the condition $\sigma^2 (\ln \frac{1}{\delta})/B^2 = O(T)$.
Next, we study the behavior of the logarithmic terms in the right-hand side of the above equation. From Eq.~\eqref{eq:eps_approx}, we have
\begin{align}
    \label{eq:log_beps}
    \ln \frac{B}{\epsilon} 
    = \ln \sqrt{\frac{B^2 T}{\sigma^2 (\ln \frac{1}{\delta})}} - \frac{d}{2}\ln \rbr{ \rbr{\frac{2}{\tilde{C}_{d,\theta}}}^{\frac{2}{d}} \rbr{\ln \frac{B}{\epsilon}} \rbr{\ln \ln \frac{B}{\epsilon}}^{-1}}.
\end{align}
For sufficiently large $B/\epsilon$, the second term on the right-hand side of the above equation satisfies $\frac{d}{2}\ln \rbr{ \rbr{\frac{2}{\tilde{C}_{d,\theta}}}^{\frac{2}{d}} \rbr{\ln \frac{B}{\epsilon}} \rbr{\ln \ln \frac{B}{\epsilon}}^{-1}} \leq \frac{1}{2} \ln \frac{B}{\epsilon}$. 
Therefore, provided that the implied constant of $\sigma^2 (\ln \frac{1}{\delta})/B^2 = O(T)$ is sufficiently small, and Eq.~\eqref{eq:eps_approx} holds, then, 
\begin{align}
    \frac{1}{3} \ln \frac{B^2 T}{\sigma^2 (\ln \frac{1}{\delta})} \leq \ln \frac{B}{\epsilon}.
\end{align}
Furthermore, the function $t/\ln t$ is non-decreasing for $t > e$. 
Therefore, if $\sigma^2 (\ln \frac{1}{\delta})/B^2 = O(T)$ holds with sufficiently small implied constant, we have $\frac{1}{3} \ln \frac{B^2 T}{\sigma^2 (\ln \frac{1}{\delta})} > e$ and 
\begin{equation}
    \frac{\hat{C}_d}{2}\sqrt{\frac{\sigma^2}{T} \rbr{\ln \frac{B^2 T}{\sigma^2 (\ln \frac{1}{\delta})}}^d \rbr{\ln \ln \frac{B^2 T}{\sigma^2 (\ln \frac{1}{\delta})}}^{-d} \rbr{\ln \frac{1}{\delta}}} \leq \epsilon
\end{equation}
for some constant $\hat{C}_d > 0$.
The above inequality implies that we can take the desired $\epsilon$ such that
\begin{equation}
    \epsilon = \Omega \rbr{\sqrt{\frac{\sigma^2}{T} \rbr{\ln \frac{B^2 T}{\sigma^2 (\ln \frac{1}{\delta})}}^d \rbr{\ln \ln \frac{B^2 T}{\sigma^2 (\ln \frac{1}{\delta})}}^{-d} \rbr{\ln \frac{1}{\delta}}}},
\end{equation}
which implies the desired cumulative regret lower bound $T\epsilon/2 = \Omega \rbr{\sqrt{T\sigma^2 \rbr{\ln \frac{B^2 T}{\sigma^2 (\ln \frac{1}{\delta})}}^d \rbr{\ln \ln \frac{B^2 T}{\sigma^2 (\ln \frac{1}{\delta})}}^{-d} \rbr{\ln \frac{1}{\delta}}}}$.
\end{proof}

\section{Proof of \thmref{thm:mig}}
\label{sec:mig_proof}
\begin{proof}
    For any $\mX \subset \{\bx \in \R^d \mid \|\bx\|_2 \leq 1\}$ or $\mX \subset \mbS^d$, the inequality $\gamma_T(\mX) \leq \gamma_T(\mbS^d)$ holds under the SE kernel from Lemma~9 in \citep{iwazaki2025improved}.
    Therefore, it is enough to show that the desired statement is valid for $\gamma_T(\mbS^d)$\footnote{Note that we can obtain the same result even for a compact $\mX$ not contained in $\{\bx \in \R^d \mid \|\bx\|_2 \leq 1\}$
    by rescaling the lengthscale parameter $\theta$.}.
    Here, from Lemma~15 in \citep{iwazaki2025improved}, we have
    \begin{equation}
        \label{eq:M_mig_bound}
        \gamma_T(\mbS^d) \leq \rbr{\sum_{n=0}^M N_{n,d+1}} \ln \rbr{1 + \frac{T}{\lambda^2}} + \frac{T}{|\mbS^d| \lambda^2} \sum_{n=M+1}^{\infty} \lambda_n N_{n, d+1}
    \end{equation}
    for any $M \in \N$. In the above inequality, $\lambda_n$ is the eigenvalue defined in Eq.~\eqref{eq:lb_lam}. Although Eq.~\eqref{eq:lb_lam} provides the lower bound, an upper bound of the same order is valid~\citep[Theorem 2 in ][]{minh2006mercer}. Namely, we have
    \begin{equation}
        \label{eq:lam_ub}
        \forall n \in \N_+,~\lambda_n \leq \rbr{\frac{2e}{\theta}}^n \frac{|\mbS^d| \bar{C}_{d,\theta}}{(2n + d-1)^{n + \frac{d}{2}}}
    \end{equation}
    for some constant $\bar{C}_{d, \theta} > 0$. 
    By combining Eq.~\eqref{eq:M_mig_bound} with Eq.~\eqref{eq:lam_ub}, we obtain the simple upper bound on the MIG. Specifically, there exist some constants $C_{d,\theta, \lambda^2}, c_{d,\theta} > 0$ such that
    \begin{align}
        \gamma_T(\mbS^d) 
        \leq C_{d,\theta, \lambda^2} \rbr{M^d \ln T + T \sum_{n=M+1}^{\infty} \rbr{c_{d,\theta} n}^{-n}}
    \end{align}
    for any $M \in \N_+$ and $T \geq 2$. See Eq.~(216) in \citep{iwazaki2025improved}.
    In the above inequality, \citet{iwazaki2025improved} choose $M=\Theta(\ln T)$ to obtain the existing $O(\ln^{d+1} T)$ upper bound. However, this result can be further improved by a more careful choice of $M$.
    First, for $M > 1/c_{d,\theta}$, we have
    \begin{align}
        \sum_{n=M+1}^{\infty} \rbr{c_{d,\theta} n}^{-n} 
        &\leq \sum_{n=M+1}^{\infty} \rbr{c_{d,\theta} M}^{-n} \\
        &\leq \int_{M}^{\infty} \rbr{c_{d,\theta} M}^{-n} \text{d}n \\
        &= \frac{(c_{d,\theta} M)^{-M}}{\ln (c_{d,\theta} M)}.
    \end{align}
    Thus, we have
    \begin{align}
        \label{eq:gamma_M_simple}
        \gamma_T(\mbS^d) 
        \leq C_{d,\theta, \lambda^2} \rbr{M^d \ln T + T \frac{(c_{d,\theta} M)^{-M}}{\ln (c_{d,\theta} M)}}
    \end{align}
    for any $M > 1/c_{d,\theta}$.
    Then, to cancel out the effect of $T$ on the second term of the right-hand side of the above inequality, we set $M$ such that $(\ln T) \leq M \ln (c_{d, \theta} M) \leq c_{d, \theta}^{(U)} (\ln T)$. By taking a sufficiently large constant $c_{d, \theta}^{(U)} > 0$, such an $M$ with $M> 1/c_{d,\theta}$ always exists for any $T \geq 2$.
    Then, we have
    \begin{align}
    \ln (c_{d,\theta} M) 
    &\geq \ln \ln T - \ln \rbr{\ln (c_{d,\theta} M)}.
    \end{align}
    Since $M$ can be chosen as a non-decreasing sequence from the definition, we can take $M$ such that $\ln \rbr{ \ln (c_{d,\theta} M)} \leq C_{d,\theta}' \ln (c_{d,\theta} M) $ for any $T \geq 2$, where $C_{d,\theta}' > 0$ is some constant. Then, we have
    \begin{equation}
        \ln (c_{d,\theta} M) \geq \frac{1}{1 + C_{d,\theta}'} \ln \ln T > 0
    \end{equation}
    for any $T \geq 3$.
    Furthermore, from the definition of $M$, we have 
    \begin{align}
        T \frac{(c_{d,\theta} M)^{-M}}{\ln (c_{d,\theta} M)}
        &\leq T (c_{d,\theta} M)^{- \ln_{(c_{d,\theta} M)} T} \frac{(1 + C_{d,\theta}')}{\ln \ln T} \\
        &\leq \frac{1 + C_{d,\theta}'}{\ln \ln T}
    \end{align}
    for any $T \geq 3$. In addition, we have $M \leq c_{d, \theta}^{(U)} (\ln T) (\ln (c_{d, \theta} M))^{-1} \leq c_{d, \theta}^{(U)} (1 + C_{d,\theta}') \frac{\ln T}{\ln \ln T}$.
    Therefore, from Eq.~\eqref{eq:gamma_M_simple}, for any $T \geq 3$, we have
    \begin{equation}
        \gamma_T(\mbS^d) 
        \leq C_{d,\theta, \lambda^2} \rbr{\rbr{c_{d, \theta}^{(U)} (1 + C_{d,\theta}') \frac{\ln T}{\ln \ln T}}^d (\ln T) + \frac{1 + C_{d,\theta}'}{\ln \ln T}} = O\rbr{ \frac{(\ln T)^{d+1}}{(\ln \ln T)^d}}.
    \end{equation}
\end{proof}

\section{Detailed Discussion of Extension to $\mX = [0, 1]^d$}
\label{sec:discuss_detail}

We provide the detailed discussion outlined in \secref{sec:dicuss}. In this section, for simplicity, we assume $d = 1$.
\subsection{Discussion of Proof using Mercer Decomposition under Gaussian Measure}

As discussed in Chapter 4.3.1 in \citep{Rasmussen2005-Gaussian}, the eigenvalues $\lambda_n^{\mathrm{Gauss}}$ and eigenfunctions $\hat{\phi}_n^{\mathrm{Gauss}}$ of the integral operator of the SE kernel under the Gaussian measure $\mN(0, \sigma^2)$ are given as follows:
\begin{align}
    \lambda_n^{\mathrm{Gauss}} &= \sqrt{\frac{2a}{A}} B^n, \\
    \hat{\phi}_n^{\mathrm{Gauss}}(x) &= \exp(-(c - a)x^2) H_n(\sqrt{2c} x),
\end{align}
where $H_n$ is the $n$-th order Hermite polynomial, and $a$, $b$, $c$, $A$, $B$ are defined as $a^{-1} = 4\sigma^2$, $b^{-1} = \theta$, $c = \sqrt{a^2 + 2ab}$, $A = a + b + c$, and $B = b/A$. Here, note that the above eigenfunctions are not normalized; thus, we use the normalized eigenfunction $\phi_n^{\mathrm{Gauss}}(x)$, which is defined as follows:
\begin{equation}
    \phi_n^{\mathrm{Gauss}}(x) = c_n \hat{\phi}_n^{\mathrm{Gauss}}(x),
\end{equation}
where $c_n \coloneqq 1/\sqrt{\int (\hat{\phi}_n^{\mathrm{Gauss}}(x))^2 p_{\mathrm{Gauss}}(x) \text{d}x}$ is the normalizing constant, and $p_{\mathrm{Gauss}}(x)$ is the probability density function for $\mN(0, \sigma^2)$. 
Here, from the basic orthogonality properties of Hermite polynomials~\citep[e.g., Chapter 22 in ][]{abramowitz1968handbook}, we can confirm $c_n = 1/\sqrt{\sqrt{\frac{a}{c}} 2^n n!}$.
As the natural extension of our hard function on $\mbS^d$, let us consider the following approximated version of delta function $g_{\epsilon, N}(x)$ centered at $0$:
\begin{equation}
    \label{eq:gauss_g_def}
    g_{\epsilon, N}(x) = \frac{2\epsilon}{b_{N}(0)} b_{N}(x),~~\mathrm{where}~~b_N(x) = \sum_{n=0}^N \phi_n^{\mathrm{Gauss}}(x) \phi_n^{\mathrm{Gauss}}(0).
\end{equation}
Since $\lambda_n^{\mathrm{Gauss}} = \Theta(\exp(-cn))$ for some constant $c > 0$, we can show that $\|g_{\epsilon, \bar{N}}\|_{\mH_k(\R)} \leq B$ holds with $\bar{N} = \Theta(\ln \frac{B}{\epsilon})$ by relying on Mercer's representation theorem. (We omit the proof, as it follows the same proof strategy as that of property 3 in \lemref{lem:prop_fnew}).
Furthermore, as with our hard function construction on $\mbS^d$, the above definition of $b_N$ can be simplified as follows:
\begin{align}
    \sum_{n=0}^N \phi_n^{\mathrm{Gauss}}(x) \phi_n^{\mathrm{Gauss}}(0) 
    &= \sum_{n=0}^N c_n^2 \exp(-(c - a)x^2) H_n(\sqrt{2c} x) H_n(0) \\
    &= \sqrt{\frac{c}{a}} \exp(-(c - a)x^2) \sum_{n=0}^N \frac{H_n(\sqrt{2c} x) H_n(0)}{2^n n!} \\
    &= \sqrt{\frac{c}{a}} \exp(-(c - a)x^2) \frac{1}{2^N N!} \frac{H_N(0)H_{N+1}(\sqrt{2c} x) - H_N(\sqrt{2c} x)H_{N+1}(0)}{\sqrt{2c} x}.
\end{align}
In the last line, we use the Christoffel-Darboux formula for Hermite polynomials~\citep[Chapter 22 in ][]{abramowitz1968handbook}.
Here, if $N$ is odd, it is known that $H_N(0) = 0$ holds; thus, $b_{N}(x)$ can be simplified as follows:
\begin{equation}
    b_N(x) = 
    \begin{cases}
        -\sqrt{\frac{c}{a}} \exp(-(c - a)x^2) \frac{1}{2^N N!} \frac{H_{N+1}(0)}{\sqrt{2c} x} H_N(\sqrt{2c} x) ~~&\mathrm{if}~~N~\mathrm{is~odd}, \\
        \sqrt{\frac{c}{a}} \exp(-(c - a)x^2) \frac{1}{2^N N!} \frac{H_N(0)}{\sqrt{2c} x} H_{N+1}(\sqrt{2c} x)~~&\mathrm{if}~~N~\mathrm{is~even}.
    \end{cases}
\end{equation}
Although we omit the detailed proofs, by combining the above forms with the known properties of the Hermite polynomial, we can obtain the results analogous to properties 1 and 2 in \lemref{lem:prop_g}. However, the dependence of $\bar{N}$ on $w_{\epsilon}$ becomes $w_{\epsilon} = \Theta(1/\sqrt{\bar{N}})$, which leads to $w_{\epsilon} = \Theta(1/\sqrt{\ln (B/\epsilon)})$. We can confirm this by the fact that $\exp(-x^2/2)H_N(x) = \Theta(\cos(\sqrt{N}x))$ for sufficiently large $N$~\citep[Chapter 8 in ][]{szeg1939orthogonal}, which suggests that the width of the function around $0$ decreases with $\Theta(1/\sqrt{N})$.
Thus, we cannot obtain the improved $w_{\epsilon}$ by using the function in Eq.~\eqref{eq:gauss_g_def}. This implies that, at least if we follow the existing proof strategy, the resulting lower bound becomes $\Omega(\epsilon^{-2} (\ln (1/\epsilon))^{d/2})$ and $\Omega(\sqrt{T (\ln T)^{d/2}})$ as with the existing results.

\subsection{Discussion of Proof using Mercer Decomposition under Uniform Measure}
To our knowledge, unlike the Gaussian measure, no existing literature quantifies the Mercer decomposition under uniform measure on a general compact domain in explicit form. Thus, we need to study eigensystems without resorting to explicit forms. 
Regarding the eigenvalues, it is known that $\lambda_n^{\mathrm{unif}}$ decreases as $\lambda_n^{\mathrm{unif}} = \Theta((Cn)^{-cn})$ for some constants $C, c > 0$~\citep[Theorem III in ][]{widom1964asymptotic}, as with the $\lambda_n$ in Eq.~\eqref{eq:lb_lam}. This will be useful for showing the analogous result to property 3 in \lemref{lem:prop_fnew}.
However, so far, we are not aware of any existing literature that provides useful insight into eigenfunctions $\phi_n^{\mathrm{unif}}$, 
which will be essential for deriving the analogical results for properties 1 and 2 in Lemmas~\ref{lem:prop_fnew} and the upper bound in Lemma~\ref{lem:sup_lem_new}. 
Thus, we believe that developing new analytical ideas will be essential.

\section{Helper Lemmas}
\subsection{Basic Properties of Gegenbauer Polynomial}
The Gegenbauer polynomial $C_n^{(\eta)}(x)$, which is also called an ultraspherical polynomial, is an orthogonal polynomial whose weight function is $(1-x^2)^{\eta - 1/2}$. For the readers who are not familiar with the Gegenbauer polynomial, we refer to \citep{szeg1939orthogonal} or Chapter 22 in \citep{abramowitz1968handbook}. Below, we summarize the basic properties of the Gegenbauer polynomial, which are used in our proofs.

\begin{lemma}[Eq.~(4.7.29) in \citep{szeg1939orthogonal} or Eq.~(22.7.23) in \citep{abramowitz1965handbook}]
    \label{lem:gen_poly_add}
    For any $\eta > -1/2$, $n \in \N_+$, and $t \in [-1, 1]$, the following equation holds:
    \begin{equation}
        (n+\eta) C_{n}^{(\eta)}(t) = \eta[C_{n}^{(\eta+1)}(t) - C_{n-2}^{(\eta+1)}(t)].
    \end{equation}
    Here, $C_{-1}^{(\eta)}(t)$ is defined as $C_{-1}^{(\eta)}(t) = 0$.
\end{lemma}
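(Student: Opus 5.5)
The plan is to prove the identity with the standard generating function of the Gegenbauer polynomials,
\begin{equation*}
    G_\eta(t, r) \coloneqq (1 - 2tr + r^2)^{-\eta} = \sum_{n \geq 0} C_n^{(\eta)}(t) r^n,
\end{equation*}
which holds for $t \in [-1,1]$ and $|r| < 1$, and which is the definition used in \citep{szeg1939orthogonal}. Both sides of the target identity are then coefficients of power series in $r$ that are analytic near $r = 0$. So it is enough to find one differential or algebraic relation between $G_\eta$ and $G_{\eta+1}$ whose $r^n$-coefficients are exactly $(n+\eta) C_n^{(\eta)}(t)$ on one side and $\eta[C_n^{(\eta+1)}(t) - C_{n-2}^{(\eta+1)}(t)]$ on the other.

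The factor $(n+\eta)$ suggests applying the Euler-type operator $r\partial_r + \eta$ to $G_\eta$. Termwise,
\begin{equation*}
    (r\partial_r + \eta) G_\eta = \sum_n (n+\eta) C_n^{(\eta)}(t) r^n.
\end{equation*}
In closed form, $r\partial_r G_\eta = -\eta\, r(2r - 2t)(1-2tr+r^2)^{-\eta-1}$ and $\eta G_\eta = \eta (1-2tr+r^2)(1-2tr+r^2)^{-\eta-1}$. Adding these, the bracket becomes $1 - 2tr + r^2 - 2r^2 + 2tr = 1 - r^2$, so
\begin{equation*}
    (r\partial_r + \eta) G_\eta = \eta(1 - r^2) G_{\eta+1}.
\end{equation*}
Expanding the right-hand side, its $r^n$-coefficient is $\eta[C_n^{(\eta+1)}(t) - C_{n-2}^{(\eta+1)}(t)]$. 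For $n \in \{0, 1\}$ the shifted term is absent, which is exactly the convention $C_{-1}^{(\eta+1)} = C_{-2}^{(\eta+1)} = 0$ (the case $n=0$ as used in Eq.~\eqref{eq:gen_addition}). Comparing coefficients, which is legitimate by uniqueness of power series on $|r| < 1$, gives the claim.

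The only delicate point is the range of $\eta$. For $\eta > -1/2$ with $\eta \neq 0$ the generating function is as above and the argument goes through verbatim. At $\eta = 0$, which corresponds to $d = 1$ via $\eta_d = 0$, the polynomial $C_n^{(0)}$ is usually defined by a limiting normalization, namely $\lim_{\eta\to 0} C_n^{(\eta)}/\eta = (2/n) T_n$. For this case I would either state the identity in that limiting form, dividing by $\eta$ and letting $\eta \to 0$, or note that the paper handles $d=1$ separately through the Dirichlet kernel (\lemref{lem:gen_dir}). In that case the relevant statement is $\tfrac{n+\eta}{\eta} C_n^{(\eta)} \to 2T_n$ for $n \geq 1$, and this again follows by dividing the generating-function relation by $\eta$ before letting $\eta \to 0$.
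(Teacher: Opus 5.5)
Your proposal is correct. The relation $(r\partial_r+\eta)G_\eta=\eta(1-r^2)G_{\eta+1}$ checks out, and comparing $r^n$-coefficients gives exactly the claim. This includes the $n=0$ instance with $C_{-2}^{(\eta+1)}=0$, which Eq.~\eqref{eq:gen_addition} actually uses. The paper gives no proof of its own: it only cites Eq.~(4.7.29) of \citep{szeg1939orthogonal} and the Abramowitz--Stegun formula. Your generating-function derivation is the standard one behind that formula, so it makes the lemma self-contained without changing the approach.

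Your last paragraph needs one correction. Under the generating-function definition you adopted, $\eta=0$ needs no special treatment. That definition is also the normalization implicit in the paper's Lemma~\ref{lem:gen_vals}, where $C_n^{(0)}(1)=0$ for $n\ge 1$. Then $G_0\equiv 1$, so $C_n^{(0)}=0$ for $n\ge1$, both sides vanish, and your computation goes through verbatim. Switching to the convention $C_n^{(0)}=\frac{2}{n}T_n$ would make the identity as stated false at $\eta=0$: the left side is $2T_n$ and the right side is $0$. Your limiting form is therefore a different statement, not a repair. In any case the paper only invokes the lemma with $\eta=\eta_d=(d-1)/2\ge 1/2$ (i.e., $d\ge 2$) and treats $d=1$ separately via Lemma~\ref{lem:gen_dir}.
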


\begin{lemma}[Gegenbauer polynomial and Legendre polynomial, Eq.~(2.145) in \citep{atkinson2012spherical}]
    \label{lem:gen_leg}
    For any $n \in \N$, $t \in [-1, 1]$, and $d \geq 2$, the following equation holds:
    \begin{equation}
        C_{n}^{((d-1)/2)}(t) = \frac{(n + d - 2)!}{n! (d-2)!} P_{n,d+1}(t).
    \end{equation}
\end{lemma}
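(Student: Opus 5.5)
We plan to prove the identity by the standard uniqueness argument for orthogonal polynomials. Both sides should be polynomials of exact degree $n$ that are orthogonal on $[-1,1]$ to all polynomials of lower degree with respect to the same weight. Such polynomials are unique up to a scalar, so it remains to match a single normalization, which we do at $t=1$. Throughout we set $\eta \coloneqq (d-1)/2$. The Gegenbauer weight is then $(1-t^2)^{\eta - 1/2} = (1-t^2)^{(d-2)/2}$, and since $d \geq 2$ we have $\eta \geq 1/2 > 0$, so the classical theory applies.

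\textbf{Step 1: $P_{n,d+1}$ has exact degree $n$ and $P_{n,d+1}(1)=1$.} We read both facts off the explicit formula for $P_{n,d+1}$.
\begin{itemize}
\item \emph{Degree.} In the $k$-th summand, $(1-t^2)^k t^{n-2k}$ has degree $n$ with leading coefficient $(-1)^k$. Multiplied by the prefactor $(-1)^k$, it contributes a positive amount to the coefficient of $t^n$. Summing over $k$, the coefficient of $t^n$ is strictly positive, so the degree is exactly $n$.
\item \emph{Value at $1$.} At $t=1$ only the $k=0$ term survives. It equals $n!\,\Gamma(d/2)/(n!\,\Gamma(d/2)) = 1$, consistent with the fact already used in the paper's proof of property 3.
\end{itemize}

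\textbf{Step 2: orthogonality of $P_{n,d+1}$, the main step.} Fix $\bz \in \mbS^d$. By the addition theorem (Eq.~\eqref{eq:addition}), $\bx \mapsto P_{n,d+1}(\bx^\top\bz)$ is a linear combination of the $Y_{n,j}$, hence lies in $\Y_n^{d+1}$. By the orthogonality property of spherical harmonics, for $m \neq n$,
\[
\int_{\mbS^d} P_{n,d+1}(\bx^\top\bz)\,P_{m,d+1}(\bx^\top\bz)\,\mu_{\mbS^d}(\mathrm{d}\bx) = 0 .
\]
The integrand is zonal. Writing $\bx = t\bz + \sqrt{1-t^2}\,\bm{\xi}$ with $\bm{\xi}$ in the equatorial $\mbS^{d-1}$, the standard formula
\[
\mathrm{d}\mu_{\mbS^d} = (1-t^2)^{(d-2)/2}\,\mathrm{d}t\,\mathrm{d}\mu_{\mbS^{d-1}}
\]
gives
\[
|\mbS^{d-1}| \int_{-1}^{1} P_{n,d+1}(t)\,P_{m,d+1}(t)\,(1-t^2)^{(d-2)/2}\,\mathrm{d}t = 0 .
\]
Combined with Step 1, $P_{0,d+1},\dots,P_{n-1,d+1}$ have exact degrees $0,\dots,n-1$ and therefore span all polynomials of degree $<n$. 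Hence $P_{n,d+1}$ is orthogonal to every polynomial of degree $<n$ under the weight $(1-t^2)^{\eta-1/2}$. The only part needing care here is justifying the change of variables. If one prefers to avoid it, the same orthogonality follows from the Funk--Hecke formula in \citep{atkinson2012spherical}.

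\textbf{Step 3: conclude.} $C_n^{(\eta)}$ is, by definition, the degree-$n$ orthogonal polynomial for the same weight. The space of degree-$\leq n$ polynomials orthogonal to all degree-$<n$ polynomials is one-dimensional, so
\[
C_n^{(\eta)} = c\,P_{n,d+1} \quad \text{for some constant } c .
\]
Evaluating at $t=1$ with the known value $C_n^{(\eta)}(1) = \frac{(n+2\eta-1)!}{n!\,(2\eta-1)!}$ (cf.\ Lemma~\ref{lem:gen_vals}) and $2\eta - 1 = d-2$ gives
\[
c = \frac{(n+d-2)!}{n!\,(d-2)!}\cdot\frac{1}{P_{n,d+1}(1)} = \frac{(n+d-2)!}{n!\,(d-2)!},
\]
which is the claimed identity.
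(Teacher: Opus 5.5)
Your argument is correct. It differs from the paper only in that the paper gives no proof at all: the identity is imported verbatim as Eq.~(2.145) of \citep{atkinson2012spherical}.

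You instead derive it from ingredients the paper already lists. The addition theorem puts $\bx\mapsto P_{n,d+1}(\bx^{\top}\bz)$ in $\Y_n^{d+1}$. Orthogonality of harmonics of different degrees, combined with the slice formula $\mathrm{d}\mu_{\mbS^d}=(1-t^2)^{(d-2)/2}\,\mathrm{d}t\,\mathrm{d}\mu_{\mbS^{d-1}}$, converts this into orthogonality on $[-1,1]$ for the Gegenbauer weight with $\eta=(d-1)/2$. Your exact-degree and $P_{n,d+1}(1)=1$ checks on the explicit series are correct, and they are what the uniqueness-up-to-scale argument needs. Lemma~\ref{lem:gen_vals} then fixes the constant.

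The citation is shorter. Your route makes the lemma self-contained, modulo two classical facts: that $C_n^{(\eta)}$ is orthogonal for $(1-t^2)^{\eta-1/2}$, and the value of $C_n^{(\eta)}(1)$. Note that the first is a theorem rather than a definition if $C_n^{(\eta)}$ is defined by its generating function. A comparison of generating functions would be the other standard way to get the identity.

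Your route also makes transparent why $d\ge 2$ is required. It gives $\eta\ge 1/2>0$, so the Gegenbauer family is nondegenerate, and $2\eta-1=d-2\ge 0$, so the normalizing factor $\frac{(n+d-2)!}{n!\,(d-2)!}$ is a genuine binomial coefficient.
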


\begin{lemma}[Values of Gegenbauer polynomial, Eqs.~(4.7.3) and (4.7.4) in \citep{szeg1939orthogonal}]
    \label{lem:gen_vals}
    For any $n \in \N$, $t \in [-1, 1]$, and $\eta > -1/2$, the following equations hold:
    \begin{equation}
        C_{n}^{(\eta)}(1) = \frac{(n + 2\eta - 1)!}{n! (2\eta - 1)!},~~
        C_{n}^{(\eta)}(-t) = (-1)^n C_{n}^{(\eta)}(t).
    \end{equation}
\end{lemma}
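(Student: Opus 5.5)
We would derive both identities from the generating function of the Gegenbauer polynomials,
\begin{equation*}
    (1 - 2tr + r^2)^{-\eta} = \sum_{n=0}^{\infty} C_n^{(\eta)}(t)\, r^n, \qquad |r| < 1,~t \in [-1, 1],
\end{equation*}
which we take as the definition of $C_n^{(\eta)}$ for $\eta \neq 0$, consistent with \citep{szeg1939orthogonal}. Throughout, factorials of non-integer arguments are read as Gamma functions, i.e., $(n + 2\eta - 1)!/(2\eta - 1)! = \Gamma(n + 2\eta)/\Gamma(2\eta)$. This is the form needed in the main text, where $\eta = \eta_d + 1 > 0$ may be a half-integer.

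For the value at $t = 1$, we substitute $t = 1$, so the left-hand side becomes $(1 - r)^{-2\eta}$. The generalized binomial series gives
\begin{equation*}
    (1 - r)^{-2\eta} = \sum_{n=0}^{\infty} \frac{(2\eta)(2\eta+1)\cdots(2\eta + n - 1)}{n!}\, r^n = \sum_{n=0}^{\infty} \frac{\Gamma(n + 2\eta)}{n!\,\Gamma(2\eta)}\, r^n.
\end{equation*}
Both sides are power series in $r$ that agree on $|r| < 1$, so by uniqueness of power series coefficients we may compare coefficients of $r^n$. This yields $C_n^{(\eta)}(1) = \Gamma(n+2\eta)/(n!\,\Gamma(2\eta))$, which is the stated formula.

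For the parity relation, we observe that the left-hand side is invariant under the simultaneous substitution $(t, r) \mapsto (-t, -r)$, since $1 - 2(-t)(-r) + r^2 = 1 - 2tr + r^2$. Hence
\begin{equation*}
    \sum_n C_n^{(\eta)}(-t)(-1)^n r^n = \sum_n C_n^{(\eta)}(t)\, r^n.
\end{equation*}
Comparing coefficients of $r^n$ gives $C_n^{(\eta)}(-t) = (-1)^n C_n^{(\eta)}(t)$.

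The only delicate point is the range $\eta \in (-1/2, 0]$. At $\eta = 0$ the generating function degenerates to the constant $1$, so $C_n^{(0)}$ must instead be defined through a limit or normalization convention, and $\Gamma(2\eta)$ has a pole there. We would handle this case by citing the convention of \citep{szeg1939orthogonal}; for negative $\eta$ the computation above goes through unchanged with Gamma functions. Since the paper only applies the lemma with $\eta > 0$, we would regard this boundary case as inessential and state it only with reference to the source.
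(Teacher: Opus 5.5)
Your proof is correct, but it takes a different route from the source: the paper gives no argument of its own and only cites Szeg\H{o}. There, $C_n^{(\eta)}$ is introduced as a normalized Jacobi polynomial, $C_n^{(\eta)}=\frac{\Gamma(\eta+1/2)\,\Gamma(n+2\eta)}{\Gamma(2\eta)\,\Gamma(n+\eta+1/2)}\,P_n^{(\eta-1/2,\eta-1/2)}$. The value at $1$ then follows from $P_n^{(\alpha,\beta)}(1)=\binom{n+\alpha}{n}$, and the parity from $P_n^{(\alpha,\beta)}(-x)=(-1)^nP_n^{(\beta,\alpha)}(x)$ with $\alpha=\beta$.

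Your generating-function argument is self-contained and more elementary. It needs no Jacobi theory and works verbatim for every $\eta\neq 0$, indeed for complex $\eta$ and all $t$. Its normalization also agrees with the rest of the paper: it gives $C_n^{(1)}=U_n$, as used in Lemma~\ref{lem:gen_dir}, and $C_n^{((d-1)/2)}(1)=\binom{n+d-2}{n}$, as Lemma~\ref{lem:gen_leg} requires given $P_{n,d+1}(1)=1$. Your coefficient comparison is justified because $1-2tr+r^2\ge(1-|r|)^2>0$ for real $|r|<1$ and $t\in[-1,1]$. The Jacobi route, in turn, connects directly to the weight $(1-x^2)^{\eta-1/2}$ the paper uses to introduce these polynomials, though that weight fixes them only up to normalization.

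Two small corrections to your side remarks. First, $\eta=0$ is not actually delicate under your own definition. There $C_n^{(0)}\equiv 0$ for $n\ge 1$. If you write the coefficient as the Pochhammer ratio $(2\eta)(2\eta+1)\cdots(2\eta+n-1)/n!$, i.e., read $1/\Gamma(0)=0$, your comparison covers this case directly. By contrast, the limit convention $\lim_{\eta\to 0}\eta^{-1}C_n^{(\eta)}=\frac{2}{n}T_n$ would make the stated value formula false, so appealing to it would not help. Second, the paper applies this lemma only with $\eta=\eta_d+1$, so $2\eta-1=d$ is an integer and the factorials are literal; the Gamma reading is harmless but not needed there.
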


\begin{lemma}[Maximum of Gegenbauer polynomial, Theorem 7.33.1 in \citep{szeg1939orthogonal}]
    \label{lem:gen_max}
    For any $n \in \N$ and $\eta > 0$, the following equation holds:
    \begin{equation}
        \max_{t \in [-1, 1]} |C_{n}^{(\eta)}(t)| = \frac{(n + 2\eta - 1)!}{n! (2\eta - 1)!}.
    \end{equation}
\end{lemma}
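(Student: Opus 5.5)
The plan is to prove the bound from the generating function of the Gegenbauer polynomials, which makes a positivity structure visible. Recall that for $\eta > 0$, $|r| < 1$ and $t \in [-1,1]$,
\begin{equation*}
    (1 - 2tr + r^2)^{-\eta} = \sum_{n=0}^{\infty} C_n^{(\eta)}(t)\, r^n .
\end{equation*}
This can serve as the definition of $C_n^{(\eta)}$ (it is the standard normalization in \citep{szeg1939orthogonal}). Any $t \in [-1,1]$ can be written as $t = \cos\varphi$ with $\varphi \in [0,\pi]$. The quadratic then factors as
\begin{equation*}
    1 - 2r\cos\varphi + r^2 = (1 - re^{i\varphi})(1 - re^{-i\varphi}).
\end{equation*}
So the left-hand side is the product $(1-re^{i\varphi})^{-\eta}(1-re^{-i\varphi})^{-\eta}$.

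Next, expand each factor with the binomial series $(1-z)^{-\eta} = \sum_{k\ge 0} a_k z^k$, where $a_k \coloneqq \frac{(\eta)_k}{k!} = \frac{\eta(\eta+1)\cdots(\eta+k-1)}{k!}$. The key observation is that $a_k > 0$ for every $k$ precisely because $\eta > 0$; this is the only place the hypothesis $\eta > 0$ enters. Multiplying the two series (Cauchy product, absolutely convergent for $|r|<1$) and comparing coefficients of $r^n$ gives
\begin{equation*}
    C_n^{(\eta)}(\cos\varphi) = \sum_{k=0}^{n} a_k a_{n-k}\, e^{i(n-2k)\varphi} = \sum_{k=0}^{n} a_k a_{n-k} \cos\bigl((n-2k)\varphi\bigr).
\end{equation*}
The second equality uses the symmetry $k \leftrightarrow n-k$, so the expression is real.

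By the triangle inequality and positivity of the $a_k$, $|C_n^{(\eta)}(\cos\varphi)| \le \sum_{k=0}^n a_k a_{n-k}$. This bound is attained at $\varphi = 0$, i.e.\ at $t = 1$, where every cosine equals $1$. Hence $\max_{t\in[-1,1]} |C_n^{(\eta)}(t)| = C_n^{(\eta)}(1)$. To identify the value, set $t=1$ directly in the generating function: $(1-r)^{-2\eta} = \sum_n \frac{(2\eta)_n}{n!} r^n$, so
\begin{equation*}
    C_n^{(\eta)}(1) = \frac{(2\eta)_n}{n!} = \frac{\Gamma(n+2\eta)}{n!\,\Gamma(2\eta)} .
\end{equation*}
This is the stated $\frac{(n+2\eta-1)!}{n!(2\eta-1)!}$, with factorials read as Gamma functions, consistent with Lemma~\ref{lem:gen_vals}.

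I do not expect a genuine obstacle. The only points needing care are (i) justifying the coefficient comparison, which follows from absolute convergence of both binomial series for $|r|<1$, and (ii) noting that the statement concerns $|C_n^{(\eta)}|$, so the value at $t=-1$ (equal to $(-1)^n C_n^{(\eta)}(1)$ by Lemma~\ref{lem:gen_vals}) also attains the maximum without contradicting the claim.
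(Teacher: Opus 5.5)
Your argument is correct. The paper gives no proof of its own and simply imports the statement from Szeg\H{o}'s Theorem~7.33.1. You instead factor the generating function as $(1-re^{i\varphi})^{-\eta}(1-re^{-i\varphi})^{-\eta}$, read off the cosine expansion $C_n^{(\eta)}(\cos\varphi)=\sum_{k=0}^{n}a_k a_{n-k}\cos((n-2k)\varphi)$ with $a_k=(\eta)_k/k!$, and conclude by the triangle inequality. This is the classical elementary proof, and it makes several things visible. First, $\eta>0$ is used only through $a_k>0$: for $-1/2<\eta<0$ one has $a_k<0$ for $k\ge 1$, and the endpoint maximum genuinely fails, e.g.\ $|C_2^{(\eta)}(0)|=|\eta|>|\eta|(1+2\eta)=|C_2^{(\eta)}(1)|$. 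Second, the maximum is attained at both $t=\pm1$. Third, the same identity recovers the value in Lemma~\ref{lem:gen_vals}. The only step you leave implicit is the branch bookkeeping in the factorization. For real $r\in(-1,1)$ the two factors $1-re^{\pm i\varphi}$ are complex conjugates with positive real part, so the product of principal powers is $|1-re^{i\varphi}|^{-2\eta}=(1-2r\cos\varphi+r^2)^{-\eta}$; uniqueness of power-series coefficients on $(-1,1)$ then justifies the coefficient comparison. Besides brevity, the citation buys the paper the interior estimate of Lemma~\ref{lem:gen_ub} from the same source. Your triangle-inequality bound only controls $|C_n^{(\eta)}|$ by its endpoint value. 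It cannot give the $\xi^{-\eta}n^{\eta-1}$ decay away from $t=\pm1$, which Property~2 of Lemma~\ref{lem:prop_fnew} and Lemma~\ref{lem:sup_lem_new} depend on.
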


\begin{lemma}[Upper bound on Gegenbauer polynomial, Eq.~(7.33.6) in \citep{szeg1939orthogonal}]
    \label{lem:gen_ub}
    Fix any $c > 0$ and $\eta > 0$. 
    Then, there exists a constant $C_{c, \eta} > 0$ such that the following statement holds:
    \begin{equation}
        \forall n \in \N_+, \forall \xi \in [cn^{-1}, \pi/2],~|C_{n}^{(\eta)}(\cos(\xi))| \leq C_{c,\eta} \xi^{-\eta} n^{\eta -1}.
    \end{equation}
\end{lemma}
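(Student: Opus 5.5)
The plan is to prove the bound directly from the generating function instead of quoting Szegő. Set $a_k \coloneqq \Gamma(k+\eta)/(k!\,\Gamma(\eta))$. Then $(1-2t\cos\xi+t^2)^{-\eta} = (1-te^{i\xi})^{-\eta}(1-te^{-i\xi})^{-\eta}$, and comparing coefficients of $t^n$ gives the Fourier representation
\begin{equation*}
C_n^{(\eta)}(\cos\xi) = \sum_{k=0}^{n} a_k a_{n-k} e^{i(n-2k)\xi}.
\end{equation*}
Here $a_k>0$ and $a_k \asymp (k+1)^{\eta-1}$, with constants depending only on $\eta$. Since the summand for $k$ is the conjugate of the summand for $n-k$, the sum equals $2\,\mathrm{Re}\sum_{k<n/2} b_k e^{i(n-2k)\xi}$ plus a possible middle term. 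Here $b_k \coloneqq a_k a_{n-k}$. The middle term is $O(n^{2\eta-2})$. Because $\xi \le \pi/2$ and $\xi \ge c/n$, this is at most $C n^{\eta-1}\xi^{-\eta}$. So it suffices to bound $S \coloneqq \sum_{0\le k<n/2} b_k e^{-2ik\xi}$ by $C_{c,\eta}\, n^{\eta-1}\xi^{-\eta}$.

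Split $S$ at $K \coloneqq \lceil 1/\xi\rceil$. Note that $K \le n/c + 1$ because $\xi \ge c/n$; if $K \ge n/2$, only the first part below is present.

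\textbf{Head, $k \le K$.} Use the trivial bound. On this range $a_{n-k}\asymp n^{\eta-1}$ because $k<n/2$. Hence
\begin{equation*}
\sum_{k\le K} b_k \lesssim n^{\eta-1}\sum_{k\le K}(k+1)^{\eta-1} \lesssim n^{\eta-1}K^{\eta} \lesssim n^{\eta-1}\xi^{-\eta}.
\end{equation*}
This already has the right shape, and it shows why the cutoff must sit at $K\approx 1/\xi$.

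\textbf{Tail, $K<k<n/2$.} Use summation by parts $r$ times, with $r \coloneqq \lfloor \eta\rfloor + 1 > \eta$. Let $E^{(1)}_k \coloneqq \sum_{j\le k} e^{-2ij\xi}$ and define $E^{(m)}$ as the $m$-fold iterated partial sums. I will first show by induction on $m$ that $|E^{(m)}_k| \le C_m \xi^{-m}$. For $m=1$ this follows from $|E^{(1)}_k| \le 1/|\sin\xi| \le \pi/(2\xi)$. For larger $m$, I will write the $m$-fold sum in closed form as a rational function of $e^{-2i\xi}$ with denominator $(1-e^{-2i\xi})^m$; polynomial-in-$k$ pieces must be removed first by centering.

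Each summation by parts moves one finite difference onto $b_k$. Using $\Delta^m a_k = O(k^{\eta-1-m})$ and $\Delta^m a_{n-k} = O(n^{\eta-1-m})$, together with $k<n/2$, Leibniz's rule for differences gives $|\Delta^r b_k| \lesssim n^{\eta-1}k^{\eta-1-r}$. The interior term after $r$ summations is then at most
\begin{equation*}
\xi^{-r}\, n^{\eta-1}\sum_{k>K}k^{\eta-1-r} \lesssim \xi^{-r} n^{\eta-1}K^{\eta-r} \asymp n^{\eta-1}\xi^{-\eta},
\end{equation*}
where the sum converges because $r>\eta$. The boundary terms have the form $\Delta^{m} b_k\, E^{(m+1)}_k$ with $m<r$, evaluated at $k=K$ and at $k\approx n/2$. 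At $k=K$ each is at most $n^{\eta-1}K^{\eta-1-m}\xi^{-m-1} \asymp n^{\eta-1}\xi^{-\eta}$.

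\textbf{Main obstacle.} I expect the hardest part to be these bookkeeping terms at the upper endpoint $k\approx n/2$, together with the uniform validity of the iterated-sum bound. At $k\approx n/2$ the factor $a_{n-k}$ contributes variation of the same size as $a_k$, and a naive bound gives $n^{2\eta-2-m}\xi^{-m-1}$. This must be compared with $n^{\eta-1}\xi^{-\eta}$ using $\xi\ge c/n$; the ratio is $(n\xi)^{\eta-1-m}$, which is harmless only when $m\ge\eta-1$. If the terms with smaller $m$ cause trouble, I will avoid the endpoint entirely. I would first try pairing $k$ with $n-k$ symmetrically, i.e.\ applying the summations by parts to the full range $K<k<n-K$, where $b_k$ is smooth and symmetric. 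If that fails, I would fall back on the recursion in Lemma~\ref{lem:gen_poly_add} to induct on $\eta$ in unit steps from the base range $\eta\in(0,1]$. In that base range a single summation by parts suffices and $a_k$ is monotone, so no endpoint issues arise.
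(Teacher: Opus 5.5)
There is a genuine gap in the folding step for $\eta>1$. That is exactly the regime the paper needs: \lemref{lem:f_ub} invokes this lemma with $\eta=\eta_d+1=(d+1)/2$, which exceeds $1$ for every $d\ge 2$.

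First, your claim that the middle term $b_{n/2}\asymp n^{2\eta-2}$ is $O(n^{\eta-1}\xi^{-\eta})$ is false there, since at $\xi=\pi/2$ the target is of order $n^{\eta-1}$. More seriously, the reduction ``it suffices to bound $S$'' is itself false, not just awkward to book-keep. Take $\eta=2$ (so $a_k=k+1$ and $b_k=(k+1)(n-k+1)$), $\xi=\pi/2$ and $n=4q$. Then $S=\sum_{k<2q}(-1)^k b_k=-q(2q+1)$, the middle term is $(2q+1)^2$, and $C^{(2)}_{4q}(0)=2\,\mathrm{Re}(e^{in\xi}S)+b_{2q}=2q+1$. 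Both $|S|$ and the middle term are of order $n^2$, while the target is of order $n$. The $n^{2\eta-2}$-sized pieces cancel only between $2\,\mathrm{Re}(e^{in\xi}S)$ and the middle term. This is the $m=0$ boundary term at $k\approx n/2$ you flagged, and no estimate of $|S|$ alone can remove it. For $\eta\le 1$ the folded argument is fine.

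Your first fallback is the right repair, and it does go through. Do not fold: write $C_n^{(\eta)}(\cos\xi)=e^{in\xi}\sum_{k=0}^{n}b_k\omega^k$ with $\omega=e^{-2i\xi}$. Bound the ranges $k\le K$ and $k\ge n-K$ trivially; they mirror each other because $b_k=b_{n-k}$. Perform the $r$ summations by parts only on $K<k<n-K$.

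Use the closed-form antidifferences $\omega^k/(\omega-1)^m$, which are bounded by $(2\sin\xi)^{-m}\lesssim\xi^{-m}$. Do not use iterated partial sums from $k=0$, since those grow polynomially in $k$.

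Leibniz's rule gives $|\Delta^r b_k|\lesssim n^{\eta-1}\min\{k,n-k\}^{\eta-1-r}$ uniformly, including across $k=n/2$ where $b$ is smooth. Hence the remainder is $\lesssim\xi^{-r}n^{\eta-1}K^{\eta-r}\lesssim n^{\eta-1}\xi^{-\eta}$. The only boundary terms sit at $K$ and $n-K$, each $\lesssim n^{\eta-1}K^{\eta-1-m}\xi^{-m-1}\lesssim n^{\eta-1}\xi^{-\eta}$. If $n\le 2K+1$, then $n\xi=O(1)$ and the trivial bound $\sum_k b_k=C_n^{(\eta)}(1)\asymp n^{2\eta-1}$ already suffices.

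Your second fallback does not work as stated. Telescoping \lemref{lem:gen_poly_add} gives $C_n^{(\eta+1)}=\sum_{0\le j\le n/2}\frac{n-2j+\eta}{\eta}C^{(\eta)}_{n-2j}$. Inserting the inductive bound termwise produces $n^{\eta+1}\xi^{-\eta}$, a factor $n\xi$ too large. You would need the oscillation in the index, not only an upper bound.

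The paper gives no proof here and simply quotes Szeg\H{o}'s estimate (7.33.6). The repaired version of your argument would be a self-contained, elementary substitute that uses only the explicit positive Fourier coefficients $a_ka_{n-k}$.
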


\subsection{Helper Lemmas about Packing Numbers}
Lemmas~\ref{lem:pack_hs_org} and \ref{lem:pack_hs} below provide the upper bounds on the packing number of the annular regions on a hypersphere, which are required by our proof (\lemref{lem:sup_lem_new}) and by the extension of the existing proof (Lemma~\ref{lem:sup_lem_hs}).
\begin{lemma}
    \label{lem:pack_hs_org}
    Fix any $\bz \in \mN_{\epsilon}^{(s)}$ and $\epsilon > 0$.
    Let $\mN_{\epsilon}^{(s)}$ and $\mR_{\bz;\epsilon}^{(s)}$ be the $w_{\epsilon}$-separated set and the partition of $\mbS^d$ defined in Definitions~\ref{def:finite_f_hs} and \ref{def:partition_hs}.
    Let us define $\mZ_i^{(s)}$ as $\mZ_i^{(s)} = \{\tilde{\bz} \in \mN_{\epsilon}^{(s)} \mid i w_{\epsilon}/2 < \inf_{\bx \in \mR_{\bz;\epsilon}^{(s)}} \|\bx - \tbz\|_{2} \leq (i+1) w_{\epsilon}/2\}$. 
    Then, there exists some constant $C_d > 0$ such that the following inequality holds:
    \begin{equation}
        |\mZ_i^{(s)}| \leq \begin{cases}
            C_d ~~&\mathrm{if}~~i = 0, \\
            C_d i^{d+1} ~~&\mathrm{if}~~i \in \N_+.
        \end{cases}
    \end{equation}
\end{lemma}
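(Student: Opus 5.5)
We will prove the bound with a volume (packing) argument, carried out in the ambient space $\R^{d+1}$ with the Euclidean norm. The constant is allowed to depend on $d$, and the exponent $d+1$ is exactly what a crude ambient-volume count produces. The plan has three steps: bound the diameter of $\mR_{\bz;\epsilon}^{(s)}$, locate every point of $\mZ_i^{(s)}$ in a ball around $\bz$ of radius $O((i+1)w_{\epsilon})$, and then count the separated points in that ball.

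\textbf{Step 1 (diameter of the cell).} We show that $\mR_{\bz;\epsilon}^{(s)}$ has $\|\cdot\|_2$-diameter $O(w_{\epsilon})$. Take $\mN_{\epsilon}^{(s)}$ to be a maximal $w_{\epsilon}$-separated set; its cardinality is the packing number in Definition~\ref{def:finite_f_hs}, so this choice is legitimate. Maximality makes it a $w_{\epsilon}$-covering of $\mbS^d$ in $\|\cdot\|_{\infty}$. Now let $\bx \in \mR_{\bz;\epsilon}^{(s)}$. The Voronoi property gives $\|\bx - \bz\|_{\infty} = \min_{\tbz}\|\bx - \tbz\|_{\infty} \leq w_{\epsilon}$. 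Hence $\|\bx - \bz\|_2 \leq \sqrt{d+1}\, w_{\epsilon}$.

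\textbf{Step 2 (localization).} Fix $\tbz \in \mZ_i^{(s)}$. By definition of $\mZ_i^{(s)}$, there is some $\bx \in \mR_{\bz;\epsilon}^{(s)}$ with $\|\bx - \tbz\|_2 \leq (i+1)w_{\epsilon}/2 + w_{\epsilon}$. Combining with Step 1 through the triangle inequality gives
\begin{equation*}
\|\tbz - \bz\|_2 \leq \rbr{\tfrac{i+1}{2} + 1 + \sqrt{d+1}}\, w_{\epsilon} \eqqcolon r_i .
\end{equation*}
Thus $\mZ_i^{(s)}$ is contained in the Euclidean ball $B(\bz, r_i)$, and $r_i \leq c_d (i+1) w_{\epsilon}$ for a constant $c_d$ depending only on $d$.

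\textbf{Step 3 (volume counting).} The points of $\mN_{\epsilon}^{(s)}$ are $w_{\epsilon}$-separated in $\|\cdot\|_{\infty}$, hence also in $\|\cdot\|_2$. Therefore the balls $B(\tbz, w_{\epsilon}/2)$, $\tbz \in \mZ_i^{(s)}$, are pairwise disjoint and all lie in $B(\bz, r_i + w_{\epsilon}/2)$. Comparing $(d+1)$-dimensional Lebesgue volumes yields
\begin{equation*}
|\mZ_i^{(s)}| \leq \rbr{\frac{r_i + w_{\epsilon}/2}{w_{\epsilon}/2}}^{d+1} \leq \rbr{2c_d(i+1) + 1}^{d+1}.
\end{equation*}
This is at most a constant $C_d$ when $i = 0$. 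For $i \geq 1$ we use $i+1 \leq 2i$, so the bound is at most $C_d i^{d+1}$, which proves the claim.

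\textbf{Main obstacle.} The only delicate point is Step 1, because the Voronoi cell could a priori be large if the separated set were not maximal. We handle this by fixing $\mN_{\epsilon}^{(s)}$ to be maximal, which is compatible with its definition as a set of size equal to the packing number. One could refine the argument by using $d$-dimensional surface measure to obtain the exponent $d$, but the stated $i^{d+1}$ does not require it.
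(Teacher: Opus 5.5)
Your proof is correct and follows essentially the same route as the paper. Maximality of the packing together with the Voronoi property gives $\|\bx-\bz\|_\infty\le w_\epsilon$ on the cell, the triangle inequality then places $\mZ_i^{(s)}$ in a Euclidean ball of radius $O((i+1)w_\epsilon)$ around $\bz$, and a $(d+1)$-dimensional volume/packing count finishes; your direct volume comparison simply replaces the paper's appeal to the packing number of an $L_2$ ball plus norm equivalence. One small wording point: Step 1 involves no choice, since any $w_\epsilon$-separated set whose cardinality equals the packing number is automatically inclusion-maximal (the converse fails in general), and this is exactly how the paper argues it.
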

\begin{proof}
    Let $\mS_i^{(s)} \coloneqq \{\tilde{\bz} \in \mbS^d \mid i w_{\epsilon}/2 < \inf_{\bx \in \mR_{\bz;\epsilon}^{(s)}} \|\bx - \tbz\|_{2} \leq (i+1) w_{\epsilon}/2\}$. Then, from the definition of $\mN_{\epsilon}^{(s)}$, we have $|\mZ_i^{(s)}| \leq P(\mS_i^{(s)}, w_{\epsilon}, \|\cdot\|_{\infty})$.
    To obtain the upper bound of $P(\mS_i^{(s)}, w_{\epsilon}, \|\cdot\|_{\infty})$, we also define the set $\bar{\mS}_i^{(s)} \coloneqq \{\tilde{\bz} \in \mbS^d \mid i w_{\epsilon}/2 < \|\bz - \tbz\|_2 \leq (i+1 + 2\sqrt{d + 1})w_{\epsilon}/2 \}$. Here, from the following observations, we conclude that $\mS_i^{(s)} \subset \bar{\mS}_i^{(s)}$:
    \begin{itemize}
        \item For any $\tbz \in \mS_i^{(s)}$, we have $\|\bz - \tbz\|_2 > iw_{\epsilon}/2$ from the definition of $\mS_i^{(s)}$.
        \item If $\sup_{\bx \in \mR_{\bz;\epsilon}^{(s)}} \|\bz - \bx\|_{\infty} > w_{\epsilon}$ holds, there exists $\bx \in \mR_{\bz;\epsilon}^{(s)}$ such that $\forall \tbz \in \mN_{\epsilon}^{(s)},~\|\tbz - \bx\|_{\infty} > w_{\epsilon}$ holds from the definition of $\mR_{\bz;\epsilon}^{(s)}$. However, this implies that we can add a closed ball of radius $w_{\epsilon}/2$ centered at $\bx$ to $\mN_{\epsilon}^{(s)}$ without breaking the assumption of $w_{\epsilon}$-separated set. This contradicts the definition of the packing number. Thus, $\sup_{\bx \in \mR_{\bz;\epsilon}^{(s)}} \|\bz - \bx\|_{\infty} \leq  w_{\epsilon}$.
        By using this, for any $\tbz \in \mS_i^{(s)}$, we have $\|\bz - \tbz\|_2 \leq \sup_{\bx \in \mR_{\bz;\epsilon}^{(s)}} \|\bz - \bx\|_2 + (i+1)w_{\epsilon}/2 \leq \sqrt{d + 1} \sup_{\bx \in \mR_{\bz;\epsilon}^{(s)}} \|\bz - \bx\|_{\infty} + (i+1)w_{\epsilon}/2 \leq (i+1 + 2\sqrt{d + 1})w_{\epsilon}/2$. 
    \end{itemize}
    Therefore, we have $P(\mS_i^{(s)}, w_{\epsilon}, \|\cdot\|_{\infty}) \leq P(\bar{\mS}_i^{(s)}, w_{\epsilon}, \|\cdot\|_{\infty})$. 
    Furthermore, $\bar{\mS}_i^{(s)} \subset \tilde{\mS}_i^{(s)} \coloneqq \{ \tbz \in \R^{d+1} \mid \|\bz - \tbz\|_2 \leq (i+1 + 2\sqrt{d + 1})w_{\epsilon}/2\}$. Note that, with respect to L2-norm, the $w$-packing number of an L2-ball with radius $r$ in $\R^d$ is $\Theta((r/w)^d)$. By combining this with the equivalence of $\|\cdot\|_{\infty}$ and $\|\cdot\|_{2}$, we have $P(\tilde{\mS}_i^{(s)}, w_{\epsilon}, \|\cdot\|_{\infty}) = \Theta(((i+1 + 2\sqrt{d + 1})/2)^{d+1}) = \Theta(i^{d+1})$, where the hidden constant may depend on $d$.
\end{proof}

\begin{lemma}
    \label{lem:pack_hs}
    Fix any $\bz \in \Nnew$ and $\epsilon > 0$.
    Let $\Nnew$ and $\Rnew$ be the $\wnew$-separated set and the partition of $\mbS^d$ defined in Definitions~\ref{def:new_func_class} and \ref{def:new_partition}.
    Let us define $\mZ_i$ as $\mZ_i = \{\tilde{\bz} \in \Nnew \mid i \wnew/2 < \inf_{\bx \in \Rnew} \rho(\bx, \tbz) \leq (i+1) \wnew/2\}$. 
    Assume that $\wnew \leq \pi$ holds.
    Then, there exists some constant $C_d > 0$ such that the following inequality holds:
    \begin{equation}
        |\mZ_i| \leq \begin{cases}
            C_d ~~&\mathrm{if}~~i = 0, \\
            C_d i^{d-1} ~~&\mathrm{if}~~i \in \N_+.
        \end{cases}
    \end{equation}
\end{lemma}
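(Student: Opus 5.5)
We bound $|\mZ_i|$ by a volume (packing) argument on $\mbS^d$ with respect to the geodesic distance $\rho$, as in \lemref{lem:pack_hs_org}. The essential difference is that we count points lying in an annular shell of the $d$-dimensional sphere, not in a ball of $\R^{d+1}$. This is why the exponent improves from $d+1$ to $d-1$.

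\textbf{Step 1: the region $\Rnew$ is small.} First we show that each of the two pieces $\mR_{\bz}^{+}$ and $\mR_{\bz}^-$ of $\Rnew$ has geodesic diameter $O(\wnew)$. Take $\bx \in \mR_{\bz}^+$. If $\rho(\bx,\bz) > c_d\wnew$ for a suitable constant $c_d$, maximality of the packing $\Nnew$ gives a point of $\Nnew$ within $\wnew$ of any point of $\mbS_+^d(\wnew)$. Points of $\mbS_+^d$ lie within $\wnew/2$ of $\mbS_+^d(\wnew)$. Together these give $\rho(\bx,\tbz)\le 3\wnew/2$ for some $\tbz$, and the Voronoi property then gives $\rho(\bx,\bz)\le 3\wnew/2$. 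Consequently $\mR_{\bz}^+ \subset B_\rho(\bz, 2\wnew)$ and $\mR_{\bz}^- \subset B_\rho(-\bz, 2\wnew)$.

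\textbf{Step 2: $\mZ_i$ lies in two thin shells.} Any $\tbz\in\mZ_i$ satisfies $\inf_{\bx\in\Rnew}\rho(\bx,\tbz)\in(i\wnew/2,(i+1)\wnew/2]$. By the triangle inequality, $\tbz$ lies within distance $(i+1)\wnew/2+2\wnew$ of $\bz$ or of $-\bz$. Moreover, the closest point of $\Rnew$ to $\tbz$ lies in one of these small caps, so $\tbz$ lies in a shell
\[
A_i^{\pm}=\{\by\in\mbS^d : r_i^- \le \rho(\by,\pm\bz)\le r_i^+\},
\qquad r_i^{\pm} = \tfrac{i\wnew}{2}\pm 2\wnew + O(\wnew).
\]
The shell width is $O(\wnew)$ with a constant independent of $i$.

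\textbf{Step 3: volume comparison.} The geodesic balls $B_\rho(\tbz,\wnew/2)$ for $\tbz\in\mZ_i$ are disjoint. They are contained in the enlarged shells of radii $r_i^-\!-\wnew/2$ and $r_i^+\!+\wnew/2$, intersected with the sphere. The spherical cap measure is $\mu_{\mbS^d}(B_\rho(\by,r)) = |\mbS^{d-1}|\int_0^r \sin^{d-1}s\,ds$. This gives a lower bound of $c_d (\wnew)^d$ for a single small ball; the assumption $\wnew\le\pi$ keeps us in the regime where $\sin s \asymp s$ at scale $\wnew$. For the shell, the measure is at most $|\mbS^{d-1}|\int_{r^-}^{r^+}\sin^{d-1}s\,ds \le |\mbS^{d-1}|\,(r^+-r^-)\,(r^+)^{d-1}$. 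Since $r^+-r^-=O(\wnew)$ and $r^+=O((i+1)\wnew)$, the shell measure is $O((\wnew)^d (i+1)^{d-1})$.

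Dividing the shell measure by the single-ball measure yields $|\mZ_i|\le C_d (i+1)^{d-1}\le 2^{d-1}C_d i^{d-1}$ for $i\ge 1$ and $|\mZ_0|\le C_d$. Only two shells (around $\bz$ and $-\bz$) occur, which at most doubles the constant. For large $i$ the radius $r^+$ may exceed $\pi$; then we simply use $\sin\le 1$, or note that the set is empty, and the bound still holds.

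\textbf{Main obstacle.} The delicate part is Step 1 near the equator. There $\mR_{\bz}^+$ is a Voronoi cell of points in the truncated set $\mbS_+^d(\wnew)$, and we must check that the $\wnew/2$ gap to the equator does not let a cell extend far. I would handle this via the covering property of the maximal packing together with the explicit distance $\wnew/2$ to the truncated region, as sketched above.
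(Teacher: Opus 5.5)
Your proposal is correct. Its engine is the same as the paper's: the $\wnew/2$-caps around points of $\mZ_i$ are disjoint, and you divide the measure of a geodesic annulus of width $O(\wnew)$ by the measure of one cap, using $2s/\pi\le\sin s\le s$ (this is where $\wnew\le\pi$enters).

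The difference is in how you localize $\mZ_i$, and there your route is the more careful one. The paper places $\mZ_i$ inside the single annulus $\{\tbz: i\wnew/2<\rho(\bz,\tbz)\le(i+3)\wnew/2\}$. It justifies this with the claim $\sup_{\bx\in\Rnew}\rho(\bz,\bx)\le\wnew$, obtained from maximality of the packing. That claim has two problems:
\begin{itemize}
\item It overlooks that $\Rnew$ also contains the antipodal piece $\mR_{\bz}^-$, on which $\rho(\bz,\cdot)$ is close to $\pi$. For example, if $\bz$ sits near the equator, packing points just above the equator next to $-\bz$ lie in $\mZ_i$ for small $i$, yet far outside that annulus.
\item It applies maximality to points of $\Rnew$ that need not belong to the packed set $\mbS_+^d(\wnew)$.
\end{itemize}
Your proposal fixes both issues, at the cost of constant factors only. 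Step 1 shows every point of $\mbS_+^d$ is within $\wnew/2$ of $\mbS_+^d(\wnew)$, hence within $3\wnew/2$ of some point of $\Nnew$, and so within $3\wnew/2$ of its Voronoi center $\bz$. Your use of two shells, around $\bz$ and $-\bz$, then accounts for the antipodal piece.

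Two small tidy-ups. First, the conditional opening of Step 1 (``if $\rho(\bx,\bz)>c_d\wnew$'') is unnecessary, since the covering argument gives $\rho(\bx,\bz)\le 3\wnew/2$ directly. Second, the inner radius of each shell can simply be $i\wnew/2$. Both $\bz$ and $-\bz$ belong to $\Rnew$ (the latter because $\bz$ lies off the equator), so $\rho(\tbz,\pm\bz)>i\wnew/2$ for every $\tbz\in\mZ_i$.
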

\begin{proof}
    Let $\mS_i \coloneqq \{\tilde{\bz} \in \mbS^d \mid i \wnew/2 < \inf_{\bx \in \Rnew} \rho(\bx, \tbz) \leq (i+1) \wnew/2\}$. Then, from the definition of $\Nnew$, we have $|\mZ_i| \leq P(\mS_i, \wnew, \rho)$.
    As with the proof of \lemref{lem:pack_hs_org}, we also define the set $\bar{\mS}_i \coloneqq \{\tilde{\bz} \in \mbS^d \mid i \wnew/2 < \rho(\bz, \tbz) \leq (i+3)\wnew/2 \}$. Here, from the following observations, we find $\mS_i \subset \bar{\mS}_i$:
    \begin{itemize}
        \item For any $\tbz \in \mS_i$, we have $\rho(\tbz, \bz) > i\wnew/2$ from the definition of $\mS_i$.
        \item If $\sup_{\bx \in \Rnew} \rho(\bz, \bx) > \wnew$ holds, there exists $\bx \in \Rnew$ such that $\forall \bz \in \Nnew,~\rho(\bz, \bx) > \wnew$ holds from the definition of $\Rnew$. However, this implies that we can add $\wnew/2$-ball centered at $\bx$ to $\Nnew$ without breaking the assumption of $\wnew$-separated set. This contradicts the definition of packing number. Thus, $\sup_{\bx \in \Rnew} \rho(\bz, \bx) \leq \wnew$.
        By using this, for any $\tbz \in \mS_i$, we have $\rho(\bz, \tbz) \leq \sup_{\bx \in \mR_{\bz;\epsilon}^{(s)}} \rho(\bz, \bx) + (i+1)w_{\epsilon}/2 \leq (i+3)\wnew/2$. 
    \end{itemize}
    Therefore, we have $P(\mS_i, \wnew, \rho) \leq P(\bar{\mS}_i, \wnew, \rho)$.
    Here, by noting that $P(\bar{\mS}_i, \wnew, \rho)$ is the maximum number of disjoint closed balls with radius $\wnew/2$, its upper bound is given by
    $V(B_1)/V(B_2)$, where $V(B_1)$ is the surface area of $B_1 \subset \mbS^d$. 
    Furthermore, $B_1 \subset \mbS^d$ and $B_2 \subset \mbS^d$ are defined as
    $B_1 = \{ \tbz \in \mbS^d \mid (i-1)\wnew/2 < \rho(\bz, \tbz) \leq (i+4)\wnew/2\}$ and $B_2 = \{ \tbz \in \mbS^d \mid \rho(\bz, \tbz) \leq \wnew/2\}$, respectively~\citep[see, e.g., Proposition~4.2.12 in][]{vershynin2018high}.\footnote{Although the original statement in \citep{vershynin2018high} assumes the L2-ball and the standard volume, the result and proof can be easily adapted to a ball with respect to geodesic distance and surface area.}
    Here, note that the surface area of a spherical cap $\tilde{B} = \{\tbz \in \mbS^d \mid \rho(\bz, \tbz) \leq r \leq \pi\}$ is given as $V(\tilde{B}) = \frac{\pi^{d/2}}{\Gamma(d/2)} \int_0^{r} \sin^{d-1}(\theta) \text{d}\theta$; thus, we have
    \begin{equation}
        |\mZ_i| \leq \frac{V(B_1)}{V(B_2)} = \frac{\int_{\max\{\min\{(i-1)\wnew/2, \pi\}, 0\}}^{\min\{(i+4)\wnew/2, \pi\}} \sin^{d-1}(\theta) \text{d}\theta}{\int_0^{\wnew/2} \sin^{d-1}(\theta) \text{d}\theta}.
    \end{equation}
    Here, $\sin \theta \geq 2 \theta/\pi$ holds for $\theta \in [0, \pi/2]$. Furthermore, $\sin \theta \leq \theta$ holds for any $\theta > 0$. Thus, by noting $\wnew \leq \pi$, we have
    \begin{equation}
        |\mZ_i| \leq \frac{\int_{\max\{\min\{(i-1)\wnew/2, \pi\}, 0\}}^{\min\{(i+4)\wnew/2, \pi\}} \theta^{d-1} \text{d}\theta}{\int_0^{\wnew/2} \rbr{\frac{2\theta}{\pi}}^{d-1} \text{d}\theta} = \rbr{\frac{\pi}{2}}^{d-1}
        \frac{\int_{\max\{\min\{(i-1)\wnew/2, \pi\}, 0\}}^{\min\{(i+4)\wnew/2, \pi\}} \theta^{d-1} \text{d}\theta}{\int_0^{\wnew/2} \theta^{d-1} \text{d}\theta}.
    \end{equation}
    By simplifying the above inequality, we have
    \begin{equation}
        |\mZ_i| \leq \rbr{\frac{\pi}{2}}^{d-1} \frac{\min\{(i+4)\wnew/2, \pi\}^d - \max\{\min\{(i-1)\wnew/2, \pi\}, 0\}^d}{(\wnew)^d /2^d}.
    \end{equation}
    We consider the upper bound on the right-hand side of the above inequality separately based on $i$.
    \paragraph{When $i = 0$.} If $i = 0$, we have
    \begin{equation}
        \label{eq:z0}
        |\mZ_0| \leq \rbr{\frac{\pi}{2}}^{d-1} \frac{(2\wnew)^d}{(\wnew)^d /2^d} = \rbr{\frac{\pi}{2}}^{d-1} 4^d.
    \end{equation}
    \paragraph{When $(i-1)\wnew/2 \leq \pi$ and $i \geq 1$.} 
    If $(i-1)\wnew/2 \leq \pi$ and $i \geq 1$, we have
    \begin{equation}
        |\mZ_i| \leq \rbr{\frac{\pi}{2}}^{d-1} \sbr{(i+4)^d - (i-1)^d}.
    \end{equation}
    By applying the mean-value theorem for the function $x^d$ in the above inequality, we have
    \begin{equation}
        \label{eq:zi_id1}
        |\mZ_i| \leq \rbr{\frac{\pi}{2}}^{d-1} 5d(i+4)^{d-1}
        \leq \rbr{\frac{\pi}{2}}^{d-1} 5^d d i^{d-1},
    \end{equation}
    where the last inequality follows from $\forall i \geq 1,~(i+4)^{d-1} \leq (5i)^{d-1}$.
    \paragraph{When $(i-1)\wnew/2 \geq \pi$.} If $(i-1)\wnew/2 \geq \pi$, we have
    \begin{equation}
        \label{eq:zi0}
        |\mZ_i| \leq 0.
    \end{equation}
    Thus, from Eqs.~\eqref{eq:z0},~\eqref{eq:zi_id1}, and \eqref{eq:zi0}, we obtain the desired statement.
\end{proof}

\subsection{Helper Lemmas for Hard Functions}
The following lemmas, which we leverage in our proofs, are useful for establishing the properties of our new hard functions.

\begin{lemma}[Gagenbauer polynomials and Dirichlet kernel]
    \label{lem:gen_dir}
    For any $\xi \in (0, \pi]$ and $N \in \N_+$, we have
    \begin{equation}
        C_{N}^{(1)}(\cos(\xi)) + C_{N-1}^{(1)}(\cos(\xi)) = \frac{\sin \rbr{\rbr{N + \frac{1}{2}}\xi}}{\sin\rbr{\frac{\xi}{2}}}.
    \end{equation}
\end{lemma}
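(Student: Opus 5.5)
The plan is to use the Chebyshev polynomial of the second kind. For $\eta = 1$ we have $C_n^{(1)}(\cos\xi) = U_n(\cos\xi) = \sin((n+1)\xi)/\sin\xi$ whenever $\sin\xi \neq 0$. I will obtain this either from the generating function $\sum_{n} C_n^{(1)}(t) r^n = (1 - 2tr + r^2)^{-1}$, which at $t = \cos\xi$ factors as $\bigl((1 - re^{i\xi})(1 - re^{-i\xi})\bigr)^{-1}$ and can be expanded as a geometric series, or from the three-term recurrence $C_{n+1}^{(1)}(t) = 2t\,C_n^{(1)}(t) - C_{n-1}^{(1)}(t)$ with $C_0^{(1)} = 1$ and $C_1^{(1)}(t) = 2t$, verified against the identity $\sin((n+2)\xi) + \sin(n\xi) = 2\cos\xi \sin((n+1)\xi)$.

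For $\xi \in (0, \pi)$ the sum then becomes
\begin{equation*}
C_N^{(1)}(\cos\xi) + C_{N-1}^{(1)}(\cos\xi) = \frac{\sin((N+1)\xi) + \sin(N\xi)}{\sin\xi}.
\end{equation*}
Applying the sum-to-product formula gives the numerator $2\sin\bigl((N + \tfrac{1}{2})\xi\bigr)\cos(\xi/2)$. Writing $\sin\xi = 2\sin(\xi/2)\cos(\xi/2)$, the factor $\cos(\xi/2) \neq 0$ cancels, which yields the claimed ratio.

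The endpoint $\xi = \pi$ is the only delicate step, because there $\sin\xi = 0$ and $\cos(\xi/2) = 0$. I will handle it directly using $C_n^{(1)}(-1) = (-1)^n (n+1)$, which follows from Lemma~\ref{lem:gen_vals} with $C_n^{(1)}(1) = n+1$ and the parity relation. The left-hand side is then $(-1)^N(N+1) + (-1)^{N-1}N = (-1)^N$. On the right-hand side, $\sin((N+\tfrac{1}{2})\pi) = \cos(N\pi) = (-1)^N$ and $\sin(\pi/2) = 1$, so the two sides agree. As an alternative, continuity of both sides in $\xi$ on $(0,\pi]$ settles this endpoint without computation.
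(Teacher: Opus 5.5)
Your proof is correct and follows essentially the same route as the paper: identify $C_n^{(1)}(\cos\xi)$ with the Chebyshev polynomial $U_n(\cos\xi)=\sin((n+1)\xi)/\sin\xi$, apply sum-to-product, and cancel $\cos(\xi/2)$ using $\sin\xi = 2\sin(\xi/2)\cos(\xi/2)$. You additionally derive the Chebyshev identity (the paper just cites it) and handle the endpoint $\xi=\pi$ separately, which is worthwhile because there the paper's division by $\sin\xi$ and cancellation of $\cos(\xi/2)$ are not literally valid.
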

\begin{proof}
    It is known that $C_{N}^{(1)}(\cos(\xi))$ is equal to the Tchebichef polynomials of the second kind $U_N(\cos(\xi)) \coloneqq \sin((N+1)\xi)/\sin(\xi)$~\citep[e.g., Chapter 4.7 in ][]{szeg1939orthogonal}.
    Thus, we have
    \begin{align}
        C_{N}^{(1)}(\cos(\xi)) + C_{N-1}^{(1)}(\cos(\xi))
        = \frac{\sin((N+1)\xi)}{\sin(\xi)} + \frac{\sin(N\xi)}{\sin(\xi)}
        = \frac{2\sin((N+1/2)\xi) \cos(\xi/2)}{\sin(\xi)},
    \end{align}
    where the last equality uses $\sin(A) + \sin(B) = 2\sin((A+B)/2)\cos((A-B)/2)$.
    Furthermore, since $\sin(A) = 2\sin(A/2)\cos(A/2)$, we have
    \begin{equation}
        \frac{2\sin((N+1/2)\xi) \cos(\xi/2)}{\sin(\xi)}
        = \frac{\sin((N+1/2)\xi)}{\sin(\xi/2)}.
    \end{equation}
    The above equation is the desired result.
\end{proof}

\begin{lemma}
    \label{lem:b0}
    For any $N \in \N$, $d \in \N_+$, and $\bz \in \mbS^d$, the following inequality holds:
    \begin{equation}
        b_{N,\bz}(\bz) \geq \frac{N^{d}}{|\mbS^d| d!}.
    \end{equation}
\end{lemma}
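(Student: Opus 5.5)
We prove Lemma~\ref{lem:b0} through the explicit formula for $b_{N,\bz}(\bz)$ that already appears in the proof of property~3 of \lemref{lem:prop_fnew}. By the addition theorem and $P_{n,d+1}(1)=1$,
\begin{equation*}
    b_{N,\bz}(\bz) = \frac{1}{|\mbS^d|}\sum_{n=0}^N N_{n,d+1}.
\end{equation*}
So it suffices to show $\sum_{n=0}^N N_{n,d+1} \geq N^d/d!$. This reduces the claim to a purely combinatorial fact about dimensions of spherical harmonic spaces, independent of $\bz$.

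To evaluate the sum, we use the fact that $\Y_n^{d+1}(\R^{d+1})$ consists of homogeneous harmonic polynomials. The standard decomposition of homogeneous polynomials of degree $n$ into harmonic parts times powers of $\|\bx\|^2$ gives $N_{n,d+1} = \binom{n+d}{d} - \binom{n+d-2}{d}$ for $n \geq 2$, with $N_{0,d+1}=1$ and $N_{1,d+1}=d+1$. The sum therefore telescopes over even and odd $n$ separately:
\begin{equation*}
    \sum_{n=0}^N N_{n,d+1} = \binom{N+d}{d} + \binom{N+d-1}{d}.
\end{equation*}
This is the dimension of the space of polynomials of degree at most $N$ restricted to $\mbS^d$. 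To avoid relying on that decomposition, we can instead verify the identity directly from the closed form $N_{n,d+1} = \frac{(2n+d-1)(n+d-2)!}{n!(d-1)!}$ by induction on $N$. The inductive step requires checking $\binom{N+d+1}{d}-\binom{N+d-1}{d} = N_{N+1,d+1}$, which is a short factorial computation. For $d=1$ the identity is simply $1+2N$, and it can be checked directly.

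Finally, we bound $\binom{N+d}{d} = \frac{(N+d)(N+d-1)\cdots(N+1)}{d!} \geq \frac{N^d}{d!}$, since each of the $d$ factors is at least $N$. The second binomial term is nonnegative and can be dropped. Dividing by $|\mbS^d|$ gives the claim, including the trivial case $N=0$. The only step needing real care is the telescoping identity; checking the inductive step against the given formula for $N_{n,d+1}$ is the main, if modest, obstacle.
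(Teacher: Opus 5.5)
Your proof is correct and follows essentially the same route as the paper. Both arguments use the addition theorem with $P_{n,d+1}(1)=1$ to reduce to $\sum_{n=0}^N N_{n,d+1}$, evaluate this sum in closed form, and bound it below by $N^d/d!$. The only difference is that the paper cites the generating-function identity $\sum_{n=0}^N N_{n,d+1} = N_{N,d+2} = \frac{(2N+d)(N+d-1)!}{N!\,d!}$, which equals your $\binom{N+d}{d}+\binom{N+d-1}{d}$ by Pascal's rule. You instead verify the identity directly by telescoping/induction, which makes your argument slightly more self-contained.
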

\begin{proof}
    It is known that $\sum_{n=0}^N N_{n, d+1} = N_{N, d+2}$ from the generating function of the sequence $(N_{n,d})$~\citep[Eq.~(2.14) in ][]{atkinson2012spherical}. From the addition theorem and $P_{n, d+1}(1) = 1$, we have
    \begin{equation}
        b_{N,\bz}(\bz) = \frac{1}{|\mbS^d|} \sum_{n=0}^N N_{n, d+1} = \frac{N_{N, d+2}}{|\mbS^d|}.
    \end{equation}
    Here, from the definition of $N_{n, d+2}$, we have
    \begin{align}
        N_{n, d+2} 
        &= \frac{2N + d}{d!}\frac{(N+d-1)!}{N!} \\
        &= \frac{2N + d}{d!} \prod_{m=1}^{d-1} (N+m) \\
        &\geq \frac{N^d}{d!}.
    \end{align}
    The above inequality implies the desired statement.
\end{proof}

\begin{lemma}
    \label{lem:f_ub}
    Fix any $c > 0$ and $d \in \N_+$. 
    Then, there exists a constant $C_{c, d} > 0$ such that, for any $\bz \in \mbS^d$, $N \geq 2$, and $\epsilon > 0$, the following statement holds:
    \begin{align}
        &\forall \bx \in \{\tbx \in \mbS^d \mid cN^{-1} \leq \rho(\tbx, \bz) \leq \pi - cN^{-1}\}, \\
        &~~~~~~~ |f_{\epsilon, N, \bz}(\bx)| \leq \begin{cases}
            \epsilon C_{c, d} N^{\eta_d - d} \rho(\bx, \bz)^{-(\eta_d+1)}~~&\mathrm{if}~~\rho(\bx, \bz) \in [cN^{-1}, \pi/2], \\
            \epsilon C_{c, d} N^{\eta_d - d} (\pi - \rho(\bx, \bz))^{-(\eta_d+1)}~~&\mathrm{if}~~\rho(\bx, \bz) \in [\pi/2, \pi - cN^{-1}],
        \end{cases}
    \end{align}
    where $\eta_d = (d-1)/2$.
    Here, $C_{c, d}$ depends only on $c$ and $d$.
\end{lemma}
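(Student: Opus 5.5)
We plan to combine the closed form of Eq.~\eqref{eq:b_gen_rep} with the Gegenbauer estimate of \lemref{lem:gen_ub} and the lower bound on $b_{N,\bz}(\bz)$ from \lemref{lem:b0}. Write $\xi \coloneqq \rho(\bx,\bz)$, so $\bx^\top\bz = \cos\xi$. Then
\begin{equation*}
|f_{\epsilon,N,\bz}(\bx)| = \frac{2\epsilon\,|b_{N,\bz}(\bx)|}{b_{N,\bz}(\bz)} \le \frac{2\epsilon\, d!}{N^d}\Bigl(|C_N^{(\eta_d+1)}(\cos\xi)| + |C_{N-1}^{(\eta_d+1)}(\cos\xi)|\Bigr).
\end{equation*}
The factor $|\mbS^d|$ cancels between the numerator (Eq.~\eqref{eq:b_gen_rep}) and \lemref{lem:b0}. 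It therefore suffices to bound each Gegenbauer polynomial by a constant times $N^{\eta_d}\xi^{-(\eta_d+1)}$ on the first range and $N^{\eta_d}(\pi-\xi)^{-(\eta_d+1)}$ on the second.

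For $\xi \in [cN^{-1},\pi/2]$, we apply \lemref{lem:gen_ub} with parameter $\eta = \eta_d+1 > 0$, which is valid for every $d \ge 1$. For degree $N$ it gives $|C_N^{(\eta_d+1)}(\cos\xi)| \le C_{c,\eta_d+1}\,\xi^{-(\eta_d+1)} N^{\eta_d}$. For degree $N-1$ we need $\xi \ge c'(N-1)^{-1}$. Since $N\ge 2$, we have $cN^{-1} \ge (c/2)(N-1)^{-1}$, so we invoke the lemma with constant $c/2$. This yields the bound $C_{c/2,\eta_d+1}\,\xi^{-(\eta_d+1)}(N-1)^{\eta_d} \le C_{c/2,\eta_d+1}\,\xi^{-(\eta_d+1)} N^{\eta_d}$, because $\eta_d \ge 0$. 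If $N-1 = 0$ were allowed, $C_0 \equiv 1$ would be handled trivially, but $N\ge2$ avoids this case. Summing the two bounds gives the first case with $C_{c,d} = 2\,d!\,(C_{c,\eta_d+1}+C_{c/2,\eta_d+1})$.

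For $\xi \in [\pi/2, \pi - cN^{-1}]$, we use the parity relation $C_n^{(\eta)}(-t) = (-1)^n C_n^{(\eta)}(t)$ from \lemref{lem:gen_vals}. This gives $|C_n^{(\eta_d+1)}(\cos\xi)| = |C_n^{(\eta_d+1)}(\cos(\pi-\xi))|$ with $\pi-\xi \in [cN^{-1},\pi/2]$, and the previous paragraph applies with $\xi$ replaced by $\pi-\xi$.

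The argument is essentially bookkeeping. The only points needing care are that the uniform constant of \lemref{lem:gen_ub} must be applied at degree $N-1$ (handled by shrinking $c$), and that the exponent $\eta = \eta_d+1$ is positive for $d=1$ as well. In that case, $\eta_d = 0$ and the claimed bound reduces to $|f|\lesssim \epsilon N^{-1}\xi^{-1}$, consistent with the Dirichlet-kernel identity of \lemref{lem:gen_dir}.
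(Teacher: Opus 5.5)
Your proof is correct and follows the paper's argument step for step. You combine the closed form \eqref{eq:b_gen_rep} with \lemref{lem:b0}, so that $|\mbS^d|$ cancels, apply \lemref{lem:gen_ub} with $\eta=\eta_d+1$ to each of the two Gegenbauer terms, and use the parity relation of \lemref{lem:gen_vals} for the range $[\pi/2,\pi-cN^{-1}]$. Your explicit switch to the constant $c/2$ when applying \lemref{lem:gen_ub} at degree $N-1$ (valid since $N\ge 2$ gives $cN^{-1}\ge (c/2)(N-1)^{-1}$) fills in a detail the paper leaves implicit in its phrase ``for some constant $\tilde{C}_{c,d}$.''
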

\begin{proof}
    From Eq.~\eqref{eq:b_gen_rep}, we have
    \begin{equation}
        f_{\epsilon, N, \bz}(\bx) = \frac{2\epsilon}{|\mbS^d| b_{N,\bz}(\bz)} \sbr{C_{N}^{(\eta_d+1)}(\bx^{\top}\bz) + C_{N-1}^{(\eta_d+1)}(\bx^{\top}\bz)}.
    \end{equation}
    Using \lemref{lem:b0}, we have
    \begin{align}
        |f_{\epsilon, N, \bz}(\bx)| 
        &\leq \frac{2\epsilon d! }{N^d} \sbr{|C_{N}^{(\eta_d+1)}(\bx^{\top}\bz)| + |C_{N-1}^{(\eta_d+1)}(\bx^{\top}\bz)|} \\
        &= \frac{2\epsilon d!}{N^d} \sbr{|C_{N}^{(\eta_d+1)}(\cos(\rho(\bx, \bz)))| + |C_{N-1}^{(\eta_d+1)}(\cos(\rho(\bx, \bz)))|}.
    \end{align}
    By leveraging \lemref{lem:gen_ub}, if $\rho(\bx, \bz) \in [cN^{-1}, \pi/2]$, the following inequality holds for some constant $\tilde{C}_{c, d}$:
    \begin{align}
        |f_{\epsilon, N, \bz}(\bx)| &\leq 
        \frac{2\epsilon d!}{N^d} \sbr{\tilde{C}_{c, d} \rho(\bx, \bz)^{-(\eta_d+1)} N^{\eta_d} + \tilde{C}_{c, d} \rho(\bx, \bz)^{-(\eta_d+1)} (N-1)^{\eta_d}} \\
        &\leq 4\epsilon d! \tilde{C}_{c, d} N^{\eta_d - d} \rho(\bx, \bz)^{-(\eta_d+1)}.
    \end{align}
    Furthermore, if $\rho(\bx, \bz) \in [\pi/2, \pi - cN^{-1}]$, the following inequality follows from \lemref{lem:gen_vals}:
    \begin{align}
        |f_{\epsilon, N, \bz}(\bx)| 
        &\leq \frac{2\epsilon d!}{N^d} \sbr{|C_{N}^{(\eta_d+1)}(\cos(\rho(\bx, \bz)))| + |C_{N-1}^{(\eta_d+1)}(\cos(\rho(\bx, \bz)))|} \\
        &= \frac{2\epsilon d!}{N^d} \sbr{|C_{N}^{(\eta_d+1)}(\cos(\pi -\rho(\bx, \bz)))| + |C_{N-1}^{(\eta_d+1)}(\cos(\pi - \rho(\bx, \bz)))|} \\
        &\leq \frac{2\epsilon d!}{N^d} \sbr{\tilde{C}_{c, d} \rbr{\pi - \rho(\bx, \bz)}^{-(\eta_d+1)} N^{\eta_d} + \tilde{C}_{c, d} \rbr{\pi - \rho(\bx, \bz)}^{-(\eta_d+1)} (N-1)^{\eta_d}} \\
        &\leq 4\epsilon d! \tilde{C}_{c, d} N^{\eta_d - d} (\pi - \rho(\bx, \bz))^{-(\eta_d+1)}.
    \end{align}
    Thus, we have
    \begin{equation}
        |f_{\epsilon, N, \bz}(\bx)| \leq 
        \begin{cases}
            4\epsilon d! \tilde{C}_{c, d} N^{\eta_d - d} \rho(\bx, \bz)^{-(\eta_d+1)}~~&\mathrm{if}~~\rho(\bx, \bz) \in [cN^{-1}, \pi/2], \\
            4\epsilon d! \tilde{C}_{c, d} N^{\eta_d - d} (\pi - \rho(\bx, \bz))^{-(\eta_d+1)}~~&\mathrm{if}~~\rho(\bx, \bz) \in [\pi/2, \pi - cN^{-1}].
        \end{cases}
    \end{equation}
    Finally, by setting the constant $C_{c, d}$ as $C_{c, d} = 4d! \tilde{C}_{c, d}$, we obtain the desired statement.
\end{proof}
\end{document}